\def\eqref#1{equation~\ref{#1}}
\def\1{\bm{1}}
\DeclareMathAlphabet{\mathsfit}{\encodingdefault}{\sfdefault}{m}{sl}
\SetMathAlphabet{\mathsfit}{bold}{\encodingdefault}{\sfdefault}{bx}{n}
\def\gN{{\mathcal{N}}}
\def\sR{{\mathbb{R}}}
\newcommand{\E}{\mathbb{E}}
\DeclareMathOperator*{\argmax}{arg\,max}
\newcommand{\madam}{\textsc{MAdam}}
\newcommand{\lamadam}{\textsc{LaMAdam}}
\newcommand{\adam}{\textsc{Adam}}
\newcommand{\laprop}{\textsc{LaProp}}
\newcommand{\lamb}{\textsc{Lamb}}
\newcommand{\signsgd}{\textsc{signSGD}}
\newcommand{\rmsprop}{\textsc{RMSprop}}
\newcommand{\amsgrad}{\textsc{AMSGrad}}
\newcommand{\adabound}{\textsc{AdaBound}}
\newcommand{\yogi}{\textsc{Yogi}}
\begin{document}
\title{MaxVA: Fast Adaptation of Step Sizes by Maximizing Observed Variance of Gradients}
\titlerunning{MaxVA: Fast Adaptation of Step Sizes}
% If the paper title is too long for the running head, you can set
% an abbreviated paper title here
%
\author{Chen Zhu\inst{1} \and 
        Yu Cheng\inst{2} \and 
        Zhe Gan \inst{2} \and
        Furong Huang \inst{1} \and \\
        Jingjing Liu \inst{3} \and
        Tom Goldstein \inst{1}}
\authorrunning{Zhu et al.}
% First names are abbreviated in the running head.
% If there are more than two authors, 'et al.' is used.
%
\institute{University of Maryland \email{\{chenzhu,furongh,tomg\}@umd.edu} \and
Microsoft
  \email{\{yu.cheng,zhe.gan,jingjl\}@microsoft.com} \and
  Tsinghua University \email{jjliu@air.tsinghua.edu.cn}
  }
\maketitle              % typeset the header of the contribution
\begin{abstract}
Adaptive gradient methods such as \rmsprop~and \adam~use exponential moving estimate of the squared gradient to compute adaptive step sizes, achieving better convergence than SGD in face of noisy objectives. However, \adam~can have undesirable convergence behaviors due to unstable or extreme adaptive learning rates. Methods such as \amsgrad~and \adabound~have been proposed to stabilize the adaptive learning rates of \adam~in the later stage of training, but they do not outperform \adam~in some practical tasks such as training Transformers~\cite{transformer}. In this paper, we propose an adaptive learning rate principle, in which the running mean of squared gradient in \adam~is replaced by a weighted mean, with weights chosen to maximize the estimated variance of each coordinate. 
This results in a faster adaptation to the local gradient variance, which leads to more desirable empirical convergence behaviors than~\adam. 
We prove the proposed algorithm converges under mild assumptions for nonconvex stochastic optimization problems, and demonstrate the improved efficacy of our adaptive averaging approach on machine translation, natural language understanding and large-batch pretraining of BERT. The code is available at \url{https://github.com/zhuchen03/MaxVA}.
\end{abstract}
% Main context
%
%

\section{Introduction}
Stochastic Gradient Descent (SGD) and its variants are commonly used for training deep neural networks because of their effectiveness and efficiency. In their simplest form, gradient methods train a network by iteratively moving each parameter in the direction of the negative gradient (or the running average of gradients) of the loss function on a randomly sampled mini-batch of training data. A scalar learning rate is also applied to control the size of the update.
In contrast, {\em adaptive} stochastic gradient methods use coordinate-specific learning rates, which are inversely proportional to the square root of the running mean of squared gradients \cite{Tieleman2012rmsprop,adagrad,adam}. Such methods are proposed to improve the stability of SGD on non-stationary problems, and have achieved success in different fields across Speech, Computer Vision, and Natural Language Processing.

% Despite the popularity of adaptive methods such as \adam~\cite{adam}, the instability of their adaptive learning rates sometimes leads to sub-optimal solutions or even non-convergent behavior on some simple problems~\cite{reddi2019amsgrad,luo2019adabound}.
Large pretrained Transformer-based language models have achieved remarkable successes in various language tasks~\cite{devlin2019bert,liu2019roberta,lan2019albert,raffel2019t5,brown2020gpt3}. 
The original Transformer architecture (Post-LN Transformers) often demonstrates better performance than its Pre-LN variant~\cite{liu2020understanding}, but its gradient has high variance during training. A warmup learning rate schedule or small initial adaptive learning rates~\cite{liu2019radam} are required for its convergence. 
\cite{zhang2019longtail} shows that SGD fails to train Transformers without gradient clipping, and adaptivity is important for stabilizing optimization under the heavy-tailed noise in Transformer's gradients.
% The stochastic gradients of Post-LN Transformer's loss functions has heavy-tailed noise during training, and adaptivity helps to stabilize optimization under heavy-tailed noise~\cite{zhang2019longtail}.
This indicates that the strategy of \adabound~\cite{luo2019adabound}, which is to transition from \adam~into SGD, may fail on Post-LN Transformers (see~Appendix \ref{sec:adabound} for instance).
However, the adaptive learning rate of Adam can be unstable in the later stage of training, and such instability sometimes leads to sub-optimal solutions or even non-convergent behavior on some simple problems~\cite{reddi2019amsgrad,luo2019adabound}.
\amsgrad~\cite{reddi2019amsgrad} was proposed to deal with this issue by computing the adaptive learning rate with an update rule that guarantees monotonically decaying adaptive learning rates for each coordinate, but to our knowledge, it has not been widely deployed to enhance \adam~for training Transformer-based language models. %\cite{devlin2019bert,liu2019roberta,lan2019albert,raffel2019t5,brown2020gpt3}. 
% \adabound~\cite{luo2019adabound}~clips the adaptive learning rate of \adam~with a decreasing upper bound and an increasing lower bound, so that it transitions into SGD in the final stage of training. 

In this work, we explore a different approach to improving the stability of adaptive learning rates.
We propose \emph{Maximum Variation Averaging} (MaxVA), which computes the running average of squared gradients using dynamic, rather than constant, coordinate-wise weights. These weights are chosen so that the estimated variance of gradients is maximized, to enable a faster adaptation to the changing variance of gradients. 
The MaxVA weights for maximizing this variance have a simple closed-form solution that requires little storage or computational cost. 
% With this solution, MaxVA assigns a higher weight to a coordinate if the gradient at that coordinate deviates too much (compared with the estimated variance) from the estimated mean, enabling a faster adaptation to gradient change or the curvature.
With MaxVA, the adaptive optimizer 
1) takes a smaller step size when abnormally large gradient is present, to improve stability; 
2) takes a larger step size when abnormally small gradient is prevent, to avoid spurious minima and achieve better generalization~\cite{li2018visualizing}; 
3) takes a steady step size when gradients are stable and within estimated deviation, to ensure convergence~\cite{reddi2019amsgrad}.
% \zc{Connection to Transformers}
In the large-batch setting of BERT pretraining,  where the total number of iterations is sharply reduced and a faster adaptation in each step is more important, MaxVA achieves faster convergence and obtain models with better test performance on downstream tasks than both \adam~and \lamb~\cite{you2020lamb}.
Extensive experiments on both synthetic and practical datasets demonstrate that MaxVA leads to an improved adaptability and stability for \adam, yielding better test set performance than \adam~on a variety of tasks. 
% The effect is especially evident in the large-batch optimization setting for BERT, where the total number of iterations is sharply reduced and a faster adaptation in each step is more important, and \adam~with MaxVA converges faster than \adam~and \lamb~\cite{you2020lamb} with higher test scores in the experiments.
We also prove MaxVA converges under mild assumptions in the nonconvex stochastic optimization setting.

\section{Preliminary and Definitions}
By default, all vector-vector operators are element-wise in the following sections.
Let ${\theta}\in \sR^d$ be the parameters of the network to be trained, $\ell({x};{\theta})$ is the loss of the model with parameters ${\theta}$ evaluated at ${x}$.
Our goal is to minimize the expected risk on the data distribution defined as:
\begin{equation}\label{eq:obj}
    f(\theta)=\mathbb{E}_{{x} \sim \mathcal{D}} \left[\ell({x};{\theta})\right].
\end{equation}

In most deep learning problems, only a finite number of potentially noisy samples can be used to approximate Eq.~\ref{eq:obj}, and the gradients are computed on randomly sampled minibatches during training. 
Stochastic regularizations such as Dropout~\cite{srivastava2014dropout} are commonly used for training Transformer-based language models~\cite{transformer,Zhu2020FreeLB}, which further adds to the randomness of the gradients.
Thus, it is important to design optimizers that tolerate noisy gradients. 
\adam~\cite{adam} is an effective optimizer that adapts to such noisy gradients.
It keeps exponential moving averages ${m}_t$ and ${v}_t$ of past gradients $g_1,...,g_{t-1}$, defined as: 
\begin{equation*}\label{eq:adam}
\begin{split}
    {\tilde{m}}_t &=\alpha {\tilde{m}}_{t-1} + (1-\alpha) {g}_t,
    \quad 
    {m_t}=\frac{{\tilde{m}}_t}{1-\alpha^{t+1}},\\
    {\tilde{v}}_t &=\beta {\tilde{v}}_{t-1} + (1-\beta) {g}_t^2,
    \quad 
    {v}_t=\frac{{\tilde{v}}_t}{1-\beta^{t+1}},
\end{split}
\end{equation*}
where $\alpha, \beta\in [0,1]$, $g_t=\nabla_\theta \ell (x_t;\theta_t)$ is the gradient of the $t$-th minibatch $x_t$, ${\tilde{m}}_0={\tilde{v}}_0={0}$, and $m_t, v_t$ corrects this zero-initialization bias of $\tilde{m}_t, \tilde{v}_t$~\cite{adam}. \adam~updates the parameters with the estimated moments as ${\theta}_{t+1}={\theta}_t - \eta_t \frac{{m}_t}{\sqrt{{v}_t}+\epsilon}$, where $\epsilon>0$ is a small constant for numerical stability.

If we assume that the distribution of the stochastic gradient is constant within the effective horizon of the running average, then $m_t$ and $v_t$ will be estimates of the first and second moments of the gradient $g_t$~\cite{balles2018dissecting}. 
Same as other adaptive methods such as \adam~and the recently proposed AdaBelief~\cite{zhuang2020adabelief}, we adopt this assumption throughout training.
With this assumption, at time $t$, we assume $\E[{m}_t]\approx \nabla f_t$, $\E[{v}_t] \approx \nabla f_t^2 + \sigma_t^2$, where $\sigma_t^2$ is the variance of $g_t$. 
%At time $t$, we assume $\E[{m}_t]\approx \nabla f_t$, $\E[{v}_t] \approx \nabla f_t^2 + \sigma_t^2$.
% Specifically, we analyze adaptive learning rates through the lens of the following assumption:
% \begin{assumption}\label{assump1}
% \end{assumption}
% Under this assumption, the update step ${m}_t/\sqrt{{v}_t}$ of \adam~can be seen as an approximation to the Signal-to-Noise Ratio (SNR) of the gradient, which results in smaller steps when SNR is low~.  
\adam, \rmsprop~and other variants that divide the update steps by $\sqrt{{v}_t}$ can be seen as adapting to the gradient variance under this assumption when $m_t$ is small. 
These adaptive methods take smaller step sizes when the estimated variance ${\sigma}_t^2=v_t-m_t^2$ is high. 
Higher local gradient variance indicates higher local curvature, and vice versa. 
In certain quadratic approximations to the loss function, this variance is proportional to the curvature~\cite{schaul2013pesky} (Eq.~\ref{eq:nqm_grad} of our paper).
Therefore, like a diagonal approximation to Newton's method, such adaptative learning rates adapt to the curvature and can accelerate the convergence of first-order methods.

However, the adaptive learning rate $\eta_t/(\sqrt{{v}_t}+\epsilon)$ of \adam~and \rmsprop~can take extreme values, causing convergence to undesirable solutions~\cite{wilson2017marginal,chen2018convergence}.
\cite{reddi2019amsgrad} gave one such counter example where gradients in the correct direction are large but occur at a low frequency, and \adam~converges to the solution of maximum regret.
They solve this issue by keeping track of the maximum ${v}_t$ for each coordinate throughout training with a new variable $\hat{{v}}_t$, and replace the adaptive learning rate with $\eta_t/\sqrt{\hat{{v}}_t}$ to enforce monotonically descreasing learning rates. 
Extremely small adaptive learning rates can also cause undesirable convergence behavior, as demonstrated by a counter example from~\cite{luo2019adabound}. 

\section{Maximizing the Variance of Running Estimations}
\begin{figure}[htbp!]
\centering
\includegraphics[width=0.6\linewidth]{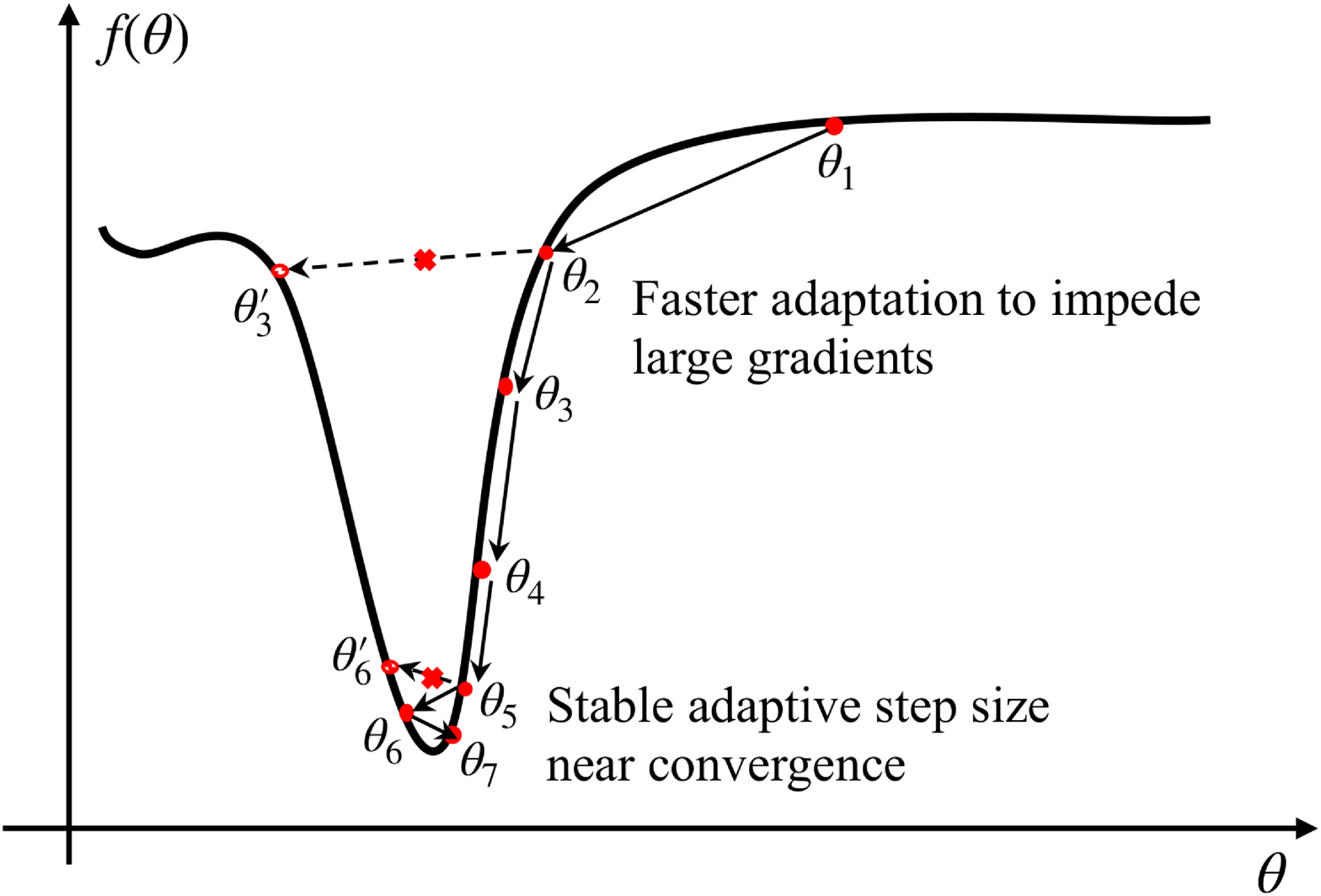}
\caption{\small{An illustrative example of MaxVA. It adopts a smaller adaptive learning rate when an abnormally large gradient appears at $\theta_2$ by choosing a smaller $\beta$, to prevent overshooting to $\theta_3'$. When the gradient is more stable near convergence at $\theta_5$, it uses a larger $\beta$ for the slowest change in adaptive learning rate, to prevent overshooting to $\theta_6'$.}}
\label{fig:illu}
\end{figure}
\textbf{Motivation.} We propose to mitigate the undesirable convergence issue of \adam{} by changing the constant running average coefficient $\beta$ for the second moment into an adaptive one. 
The idea is to allow $\beta_t$ to adopt the value that maximizes the estimated variance of the gradient at each iteration $t$. 
As a result, our method will assign a higher coefficient $(1- \beta_t)$ to $g_t$ when it deviates too much from the estimated mean, resulting in a smaller step size when the gradient is too large to avoid overshooting, and a larger step size when the gradient is abnormally small to avoid spurious local minima. By contrast, if the gradient is stable and close to the estimated mean, which often happens near a flat minimum, our method will assign minimum coefficient to $g_t$ to maintain the value of $v_t$ and take steady steps towards the minimum. 
Therefore, our method can use $\beta_t$ as the adaptive running average coefficient to take steps that are cautious enough to avoid instability and spurious minima but aggressive enough to make progress. An illustrative example is given in Figure~\ref{fig:illu}.

\textbf{Maximum Variation Averaging.} Formally, we estimate the variance of the gradient at each coordinate by keeping track of the zeroth, first, and second moments of the gradient as functions of the adaptive running average coefficient $\beta_t$, denoted as $w_t (\beta_t),~\tilde{{u}}_t(\beta_{t})$ and $\tilde{v}_t(\beta_t)$, respectively: 
\begin{align}
    w_t(\beta_t) &= \beta_t w_{t-1}(\beta_{t-1}) + (1-\beta_t), \label{eq:zeroth}\\
    \tilde{{u}}_t(\beta_t) &= \beta_t \tilde{{u}}_{t-1}(\beta_{t-1}) + (1-\beta_t){g}_t, \label{eq:first} \\
    \tilde{{v}}_t(\beta_t) &= \beta_t \tilde{{v}}_{t-1}(\beta_{t-1}) + (1-\beta_t){g}_t^2.\label{eq:second} 
\end{align}
The zeroth moment $w_t(\beta_t)$ is used to normalize $\tilde{u}_t(\beta_t)$ and $\tilde{v}_t(\beta_t)$ to achieve bias-corrected estimates $u_t(\beta_t)=\tilde{u}_t(\beta_t)/w_t(\beta_t)$ and $v_t(\beta_t)=\tilde{v}_t(\beta_t)/w_t(\beta_t)$ for the first and second moments, so that the estimates are not biased towards zero ($\tilde{m}_0=\tilde{v}_0=0$)~\cite{adam}.

Under our assumptions, the bias-corrected local estimate of the gradient variance is ${{\sigma}}_t^2 = \tilde{{v}}_t(\beta_t) / w_t(\beta_t) - [\tilde{{u}}_t(\beta_t) / w_t(\beta_t)]^2$. 
Taking the arg max for ${{\sigma}}_t^2$, we find the $\beta_t$ that achieves the maximal variance for each coordinate $i$:
\begin{equation}
\beta_{t,i} = \argmax_{\beta} {{\sigma}}_{t, i}^2 = \argmax_{\beta} {v}_{t,i}(\beta)  - [{u}_{t, i}(\beta)]^2. \label{eq:max-form}
\end{equation}
We call our approach to finding adaptive running average coefficient $\beta_t$ \emph{Maximum Variation Averaging} (MaxVA). 
We plug MaxVA into \adam~and its variant \laprop~\cite{ziyin2020laprop}, which results in two novel algorithms, \madam~and \lamadam, listed in Algorithm~\ref{alg:madam} and Algorithm~\ref{alg:LaMadam} (in the Appendix).
Different from \adam, \laprop~uses $v_t$ to normalize the gradients before taking the running average, which results in higher empirical stability under various hyperparameters.
Note, we only apply the adaptive $\beta_t$ to the \textit{second} moment $u_t(\beta_t)$ used for scaling the learning rate; $m_t$ is still an exponential moving average \textit{with a constant coefficient $\alpha$} of the gradient for \madam~or the normalized gradient for \lamadam.

%%%%%%%%%%%%%%%%%%%%%%%%%%%%%%%%%%%%%%%%%%%%%%
%%%%%%%%%%%%%%%%%%%%%%%%%%%%%%%%%%%%%%%%%%%%%%
%\begin{figure}
\begin{algorithm}[H]\small
	\caption{\madam}
	\label{alg:madam}
	\begin{algorithmic}[1]
		\State {\bfseries Input:} Learning rate $\{\eta_t\}_{t=1}^T$, parameter $0 < \alpha < 1$, $0<\underline{\beta}<\bar{\beta}<1$, $\epsilon > 0$
		\State Set $\tilde{m}_{0}=\tilde{u}_0=\tilde{v}_{0}=w_0 = 0$
		\For{$t=1$ {\bfseries to} $T$}
		\State Draw samples $S_t$ from training set
        \State Compute $g_t = \frac{1}{|S_t|} \sum_{x_k \in \mathcal{S}_t}\nabla \ell(x_k; \theta_t)$
        \State $\tilde{m}_{t} = \alpha \tilde{m}_{t-1} + (1 - \alpha) g_{t}$
        \State $\tilde{\beta}_t={\arg\max}_\beta v_t(\beta)-u_t^2(\beta)$ \Comment{see Eq \ref{eq:beta_sol}}
        \State $\beta_t = \max(\underline{\beta}, \min(\bar{\beta}, \tilde{\beta}_t))$
        \State $\tilde{u}_t=\beta_t \tilde{u}_{t-1}+(1-\beta_t)g_t$
        \State $\tilde{v}_t=\beta_t \tilde{v}_{t-1}+(1-\beta_t)g^2_t$
        \State $w_t=\beta_t w_{t-1}+(1-\beta_t)$
		\State $\theta_{t} = \theta_{t-1} - \eta_t\frac{\sqrt{w_t}}{1-\alpha^t} \frac{ \tilde{m}_t}{\sqrt{\tilde{v}_t}+\epsilon} $
		\EndFor
	\end{algorithmic}
\end{algorithm}
%\vspace{-4mm}
%\end{figure}
%%%%%%%%%%%%%%%%%%%%%%%%%%%%%%%%%%%%%%%%%%%%%%
%%%%%%%%%%%%%%%%%%%%%%%%%%%%%%%%%%%%%%%%%%%%%%

%%%%%%%%%%%%%%%%%%%%%%%%%%%%%%%%%%%%%%%%%%%%%%%%%%%%%%%%%%%%%%%%%%%%%%%%%%%%%%%%%%%%%%%%%%%%%%%%%%%%%%%%%%%%%%%%%%%%%%%%%%%%%%
\textbf{Finding $\beta_t$ via a Closed-form Solution.} The maximization for $\beta_t$ in Eq. \ref{eq:max-form} is quadratic and has a relatively simple closed-form solution that produces maximal ${\sigma}_t^2$ for each coordinate:
\begin{equation}\label{eq:beta_sol}
\begin{split}
    \beta_t &= \frac{\Delta g_t^2 + {\sigma}_{t-1}^2}{w_{t-1}(\Delta g_t^2 - {\sigma}_{t-1}^2)  + \Delta g_t^2 + {\sigma}_{t-1}^2},\\
    % \tilde{u}_t&=\hat{u}_t(\beta_t), \tilde{v}_t=\hat{v}_t(\beta_t), w_t=\hat{w}_t(\beta_t),\\ 
\end{split}
\end{equation}
where all variables are vectors and all the operations are elementwise, $\Delta g_t=(g_t- u_{t-1})$ is the deviation of the gradient $g_t$ from the estimated mean $u_{t-1}$, ${\sigma}_{t-1}^2 =v_{t-1} - u_{t-1}^2$ is the estimated variance, and we have abbreviated $u_{t-1}(\beta_{t-1})$, $v_{t-1}(\beta_{t-1})$ and $w_{t-1}(\beta_{t-1})$ into ${u}_{t-1}, {v}_{t-1}$ and $w_{t-1}$. We use this abbreviation in the following sections, and defer the derivation of Eq.~\ref{eq:beta_sol} to Appendix~\ref{appendix:closed_form}.

%%%%%%%%%%%%%%%%%%%%%%%%%%%%%%%%%%%%%%%%%%%%%%%%%%%%%%%%%%%%%%%%%%%%%%%%%%%%%%%%%%%%%%%%%%%%%%%%%%%%%%%%%%%%%%%%%%%%%%%%%%%%%%
\textbf{Implementation Notes.} 
We apply MaxVA in every step except for the first step, where the gradient variance one can observe is zero.
So for Algorithm~\ref{alg:madam} and Algorithm~\ref{alg:LaMadam} we define:
\begin{equation}
\tilde{u}_1= (1 - \beta_1) g_1, \tilde{v}_1= (1-\beta_1) g_1^2, w_1=1 - \beta_1.
\end{equation}
The coefficient $\beta_1$ for $t=1$ is set to a constant that is the same as typical values for \adam.
To obtain a valid running average, we clip $\beta_t$ so that $\underline{\beta} \le \beta_t \le \bar{\beta}$, where the typical values are $\underline{\beta}=0.5, 0.98 \le \bar{\beta}\le 1$. 
For convenience, we set $\beta_1=\bar{\beta}$ by default.
For $t>1,~\text{since }0<\beta_t\le 1$, $w_t$ will monotonically increase from $(1-\beta_1)$ to 1. 
Before clipping, for any $g_t, u_{t-1}, v_{t-1}$ satisfying $v_{t-1}-u_{t-1}^2 > 0$ in Eq.~\ref{eq:beta_sol}, we have $ \beta_t \in [1/(1+w_{t-1}), 1/(1-w_{t-1})]$. 
As a result, the lower bound that we use ($\underline{\beta}=0.5$) is tight and does not really change the value of $\beta_t$, and as $t \rightarrow \infty$, $w_t\rightarrow 1$ and $\beta_t\in [0.5, \infty]$. 
We have a special case at $t=2$, where $\beta_t$ is a constant $1/(2-\beta_1)$.

In practice, we also add a small coefficient $\delta>0$ to the denominator of Eq.~\ref{eq:beta_sol} to prevent division by zero, which will have negligible effect on the value of $\beta_t$ and does not violate the maximum variation objective (Eq.~\ref{eq:max-form}). 
All the derivations for these conclusions are deferred to Appendix~\ref{sec:div}.

%%%%%%%%%%%%%%%%%%%%%%%%%%%%%%%%%%%%%%%%%%%%%%%%%%%%%%%%%%%%%%%%%%%%%%%%%%%%%%%%%%%%%%%%%%%%%%%%%%%%%%%%%%%%%%%%%%%%%%%%%%%%%%
\textbf{Effect of Maximum Variation Averaging.} 
By definition, we have $\sigma_{t-1}^2 \ge  0$, but in most cases $\sigma_{t-1}^2 > 0$. 
When $\sigma_{t-1}^2 > 0$, we define a new variable $R_t=\Delta g_t^2 / \sigma_{t-1}^2 $, which represents the degree of deviation of gradient $g_t$ from the current estimated average. Then, we can rewrite: 
\begin{equation}\label{eq:beta_t}
    \beta_t = \frac{R_t+1}{(1+w_t)R_t + 1 - w_t}.
\end{equation}
From Eq.~\ref{eq:beta_t}, we can see $\beta_t$ monotonically decreases from $1/(1-w_t)$ to $1/(1+w_t)$ as $R_t$ increases from 0 to $\infty$, and equals to 1 when $R_t=1$.
As a result, for each coordinate, if $R_t\gg 1$, $g_t$ deviates much more than ${\sigma}_{t-1}$ from $u_{t-1}$, and MaxVA will find a smaller $\beta_t$ and therefore a higher weight $(1-\beta_t)$ on $g_t^2$ to adapt to the change faster. 
This helps to avoid overshooting when abnormally large gradient is present (see Figure~\ref{fig:illu}), and avoids spurious sharp local minima where gradients are abnormally small.
% For example, when an abnormally large gradient entry occurs in some coordinate $i$, MaxVA will assign a smaller $\beta_{t,i}$ to obtain a larger $v_{t,i}$ and smaller adaptive step size compared with \adam~with $\beta=\bar{\beta}$.
%This allows a quick response to impede abnormally large gradients, which 
With a faster response to abnormal gradients, MaxVA is better at handling the heavy-tailed distribution of gradients in the process of training Transformers~\cite{zhang2019longtail}.
In practice, $v_t$ tends to be larger than \adam/\laprop~using a constant $\bar{\beta}$, but as we will show in the experiments, using a larger learning rate counters such an effect and achieves better results.

On the other hand, if $R_t< 1$, or the deviation of the gradient $g_t$ from the current running mean ${u}_{t-1}$ is within the estimated standard deviation ${\sigma}_{t-1}$, we will use $\bar{\beta}$ to update $\tilde{v}_t$, which is the smallest change we allow for $\tilde{v}_t$. This tends to happen in the later phase of training, where the gradient variance decreases. MaxVA will adopt a steady step towards convergence by finding the slowest rate to update $\tilde{v}_{t}$.
% when the variance decreases in the later phase of training, $g_t$ tends to be within ${\sigma}_t$, and MaxVA tends to find the slowest rate for decreasing $\tilde{v}_t$. 
This allows large values of $\tilde{v}_t$ to last for a longer horizon even compared with setting $\beta_t$ to a constant $\bar{\beta}$ on the same sequence, since we have assigned more mass to large gradients, which can be seen as an adaptive version of \amsgrad. 
Note that MaxVA and \amsgrad~can be complementary approaches if applied together, which we have found helpful for Image Classification on CIFAR10/100.

%%%%%%%%%%%%%%%%%%%%%%%%%%%%%%%%%%%%%%%%%%%%%%%%%%%%%%%%%%%%%%%%%%%%%%%%%%%%%%%%%%%%%%%%%%%%%%%%%%%%%%%%%%%%%%%%%%%%%%%%%%%%%%
\textbf{Convergence Analysis.}\, We prove the convergence of MaxVA in the nonconvex stochastic optimization setting. 
For the sake of simplicity, we analyze the case where $\alpha=0$, which is effectively applying MaxVA to \rmsprop. We leave the analysis for $\alpha\neq 0$ for future research. 
We assume the function $\ell$ is $L$-smooth in $\theta$, i.e., there exists a constant $L$ such that for all $\theta_1, \theta_2\in \mathbb{R}^d, x\in \mathcal{X}$, 
\begin{equation}
    \lVert \nabla_\theta \ell(x;\theta_1) - \nabla_\theta \ell(x;\theta_2) \rVert \le L\lVert \theta_1 - \theta_2 \rVert. 
\end{equation}
This automatically implies that $f(\theta)=\mathbb{E}[\ell(x;\theta)]$ is $L$-smooth. 
Such a smoothness assumption holds for networks with smooth activation functions, e.g., Transformers that use the GELU activation~\cite{hendrycks2016gaussian}.
We also need to assume function $\ell$ has bounded gradient, i.e., $\lVert \nabla_\theta \ell(x;\theta) \rVert_\infty \le G$ for all $ \theta \in \mathbb{R}^d, x\in \mathcal{X}$.
As typically used in the analysis of stochastic first-order methods~\cite{zaheer2018yogi,ghadimi2013stochastic}, 
we assume the stochastic gradient has bounded variance: $\mathbb{E}[ [\nabla_{\theta} \ell(x;\theta)]_i - [\nabla_{\theta} f(\theta)]_i ]^2 \le \sigma^2$ for all $ \theta\in \mathbb{R}^d$.
Further, we assume the batch size increases with time as $b_t=t$, which is also adopted in the analysis of \signsgd~\cite{bernstein2018signsgd}, and holds in our large batch experiments. 
Theorem~\ref{thm:convergence} gives a ``worst-case" convergence rate of MaxVA to a stationary point under these assumptions, where the dependence of $\beta_t$ on $g_t$ is ignored and we only consider the worst-case of $\beta_t$ in each step. 
%\madam~still converges under this setting.
The proof is given in Appendix~\ref{sec:convergence_proof}.
\begin{theorem}\label{thm:convergence}
Define $w_0=1$. Let $\eta_t=\eta$ and $b_t=t$ for all $t\in [T]$. Furthermore, we assume $\epsilon, \underline{\beta}, \bar{\beta}, \eta$ are chosen such that $ \eta\le \frac{\epsilon}{2L}\text{, } 1-\underline{\beta}\le \frac{\epsilon^2}{16G^2}\text{, and } \bar{\beta}\le 2\underline{\beta}$. Then for $\theta_t$ generated using \madam, we have the following bound:
\begin{equation}
    \mathbb{E} \lVert \nabla f(\theta_a) \rVert^2 \le O\left( \frac{f(\theta_1) - f(\theta^*)}{\eta T} + \frac{2\sigma dG}{\epsilon \sqrt{T}} \right),
\end{equation}
where $\theta^*$ is an optimal solution to minimize the objective in Eq.~\ref{eq:obj}, and $\theta_a$ is an iterate uniformly randomly chosen from $\{\theta_1,...,\theta_T\}$.
\end{theorem}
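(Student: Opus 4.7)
The plan is to follow the standard template for nonconvex convergence of Adam-style methods (in the spirit of Zaheer et al.'s Yogi analysis and Défossez et al.), but with the extra complication that the averaging weight $\beta_t$ is itself a (possibly discontinuous, clipped) function of $g_t$. I will start from the $L$-smoothness descent inequality
\begin{equation*}
f(\theta_{t+1}) \le f(\theta_t) + \langle \nabla f(\theta_t), \theta_{t+1}-\theta_t\rangle + \tfrac{L}{2}\lVert \theta_{t+1}-\theta_t\rVert^2,
\end{equation*}
and substitute the MaxVA update, which with $\alpha=0$ simplifies to $\theta_{t+1}-\theta_t = -\eta\sqrt{w_t}\,g_t/(\sqrt{\tilde v_t}+\epsilon)$. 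Expanding coordinate-wise yields a cross term involving $[\nabla f(\theta_t)]_i g_{t,i}/(\sqrt{\tilde v_{t,i}}+\epsilon)$ and a quadratic term $w_{t,i} g_{t,i}^2/(\sqrt{\tilde v_{t,i}}+\epsilon)^2$. The condition $\eta\le \epsilon/(2L)$ will be used to absorb part of the quadratic term into a negative descent contribution, since $\sqrt{\tilde v_{t,i}}+\epsilon\ge \epsilon$.

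The main obstacle is that both $\beta_t$ and $\tilde v_t$ depend on $g_t$, so $g_t/(\sqrt{\tilde v_t}+\epsilon)$ is biased as an estimator of $\nabla f/(\sqrt{v_t}+\epsilon)$ under the minibatch randomness. To decouple these, I would first write
\begin{equation*}
\tilde v_{t,i} = \beta_t \tilde v_{t-1,i} + (1-\beta_t) g_{t,i}^2, \qquad 0<1-\beta_t\le 1-\underline\beta\le \tfrac{\epsilon^2}{16G^2},
\end{equation*}
so that $(1-\beta_t)g_{t,i}^2 \le \epsilon^2/16$ and therefore $\sqrt{\tilde v_{t,i}}$ lies between $\sqrt{\beta_t\tilde v_{t-1,i}}$ and $\sqrt{\beta_t\tilde v_{t-1,i}}+\epsilon/4$. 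This lets me replace $1/(\sqrt{\tilde v_{t,i}}+\epsilon)$ by $1/(\sqrt{\beta_t\tilde v_{t-1,i}}+\epsilon)$ up to a bounded multiplicative error (using $\bar\beta\le 2\underline\beta$ so that the ratio $\sqrt{w_t/\beta_t}$ stays bounded) plus a residual controlled by $\epsilon$ and $G$. Crucially, conditioned on the sigma-algebra through step $t-1$, $\tilde v_{t-1}$ is deterministic, so after this replacement the cross term has the form $\mathbb{E}[g_{t,i}]\cdot C_{t,i}=[\nabla f(\theta_t)]_i\cdot C_{t,i}$ with $C_{t,i}$ measurable w.r.t.\ the past. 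The clipped adaptive $\beta_t$ is then handled by treating it as an arbitrary $[\underline\beta,\bar\beta]$-valued (worst-case) quantity: all the inequalities above hold uniformly in $\beta_t$, which is precisely the ``worst-case of $\beta_t$'' interpretation mentioned in the statement.

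Next I will control the stochastic error. Under the assumption $b_t=t$, the minibatch gradient satisfies $\mathbb{E}\lVert g_t-\nabla f(\theta_t)\rVert_\infty^2 \le \sigma^2/t$ per coordinate, and more usefully $\mathbb{E}|g_{t,i}-[\nabla f(\theta_t)]_i|\le \sigma/\sqrt{t}$. This variance bound is what turns the ``bias'' residuals from the decoupling step into something summable: the error terms involve expressions like $|g_{t,i}-[\nabla f(\theta_t)]_i|/\epsilon$ multiplied by $|[\nabla f(\theta_t)]_i|$ or by $G$, and summing $\sum_{t=1}^T \sigma/\sqrt{t}\le 2\sigma\sqrt{T}$ over coordinates produces the $2\sigma dG/(\epsilon\sqrt T)$ term in the final rate after dividing by $T$. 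Meanwhile the cross term, once decoupled, gives a clean negative drift of order $-\eta\lVert \nabla f(\theta_t)\rVert^2/\max_i(\sqrt{\tilde v_{t-1,i}}+\epsilon)$, which I further lower bound using $\sqrt{\tilde v_{t-1,i}}+\epsilon\le \sqrt{G^2}+\epsilon$ coming from the bounded gradient assumption.

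Finally I would telescope the per-step descent inequality from $t=1$ to $T$, use $f(\theta_{T+1})\ge f(\theta^*)$, divide by $\eta T$, and identify the uniform average over $t$ with $\mathbb{E}\lVert \nabla f(\theta_a)\rVert^2$. This yields the claimed bound of order $(f(\theta_1)-f(\theta^*))/(\eta T)+2\sigma dG/(\epsilon\sqrt T)$. The truly delicate step, where I expect most of the work to sit, is the decoupling argument of the second paragraph: making sure that the multiplicative distortion introduced by replacing $\tilde v_t$ with $\beta_t\tilde v_{t-1}$ is uniform in the data-dependent clipped $\beta_t$, and that the residual bias does not blow up faster than $1/\sqrt{t}$. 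The remaining ingredients (smoothness descent, bounded gradient, variance decay from $b_t=t$, telescoping) are standard and should combine cleanly once the decoupling is in place.
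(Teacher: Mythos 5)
Your plan follows essentially the same route as the paper's proof: the smoothness descent inequality, the decomposition of the cross term by swapping $\sqrt{v_{t,i}}$ for $\sqrt{\beta_{t,i} v_{t-1,i}}$ with an error controlled via $1-\underline{\beta}\le \epsilon^2/(16G^2)$ (the paper's $T_1$), worst-case bounds over $\beta_{t,i}\in[\underline{\beta},\bar{\beta}]$ using $\bar{\beta}\le 2\underline{\beta}$, the per-coordinate noise bound $\sigma/\sqrt{t}$ from $b_t=t$ producing the $2\sigma dG/(\epsilon\sqrt{T})$ term, and telescoping. One caution: your intermediate claim that after the replacement the cross term factors as $[\nabla f(\theta_t)]_i\cdot C_{t,i}$ with $C_{t,i}$ past-measurable is not literally true, since $\beta_{t,i}$ itself still depends on $g_t$; the paper's Lemma~\ref{lemma} repairs exactly this by splitting $g_{t,i}$ into $[\nabla f(\theta_t)]_i$ plus noise, taking sign-dependent worst cases of $\beta_{t,i}$ on each piece, and paying a $\sigma G/(\epsilon\sqrt{t})$ residual---which is the same fix your subsequent worst-case-plus-residual accounting describes, so the argument goes through.
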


\section{Experiments on Synthetic Data}
For a quantitative control of the stochasticity and data distribution, which affects the difficulty of the problem and the efficacy of the optimizers, we compare \madam~and the baselines in two sets of synthetic data, and demonstrate the efficacy of MaxVA with statistical significance on a large number of instances.
The first dataset simulates prevalent machine learning settings, where mini-batch stochastic gradient methods are applied on a finite set of samples, on which we show \madam~fixes the nonconvergence issue of \adam~and achieves faster convergence rate than \amsgrad.
The second dataset evaluates the algorithms under different curvatures and gradient noise levels, where we show \madam~achieves both lower loss and variance than fine-tuned \adam~at convergence.

\subsection{Convergence with Stochastic Gradients} 
\begin{figure*}[h]\label{fig:regret}
\vspace{-1em}
\centering
\includegraphics[width=.32\linewidth]{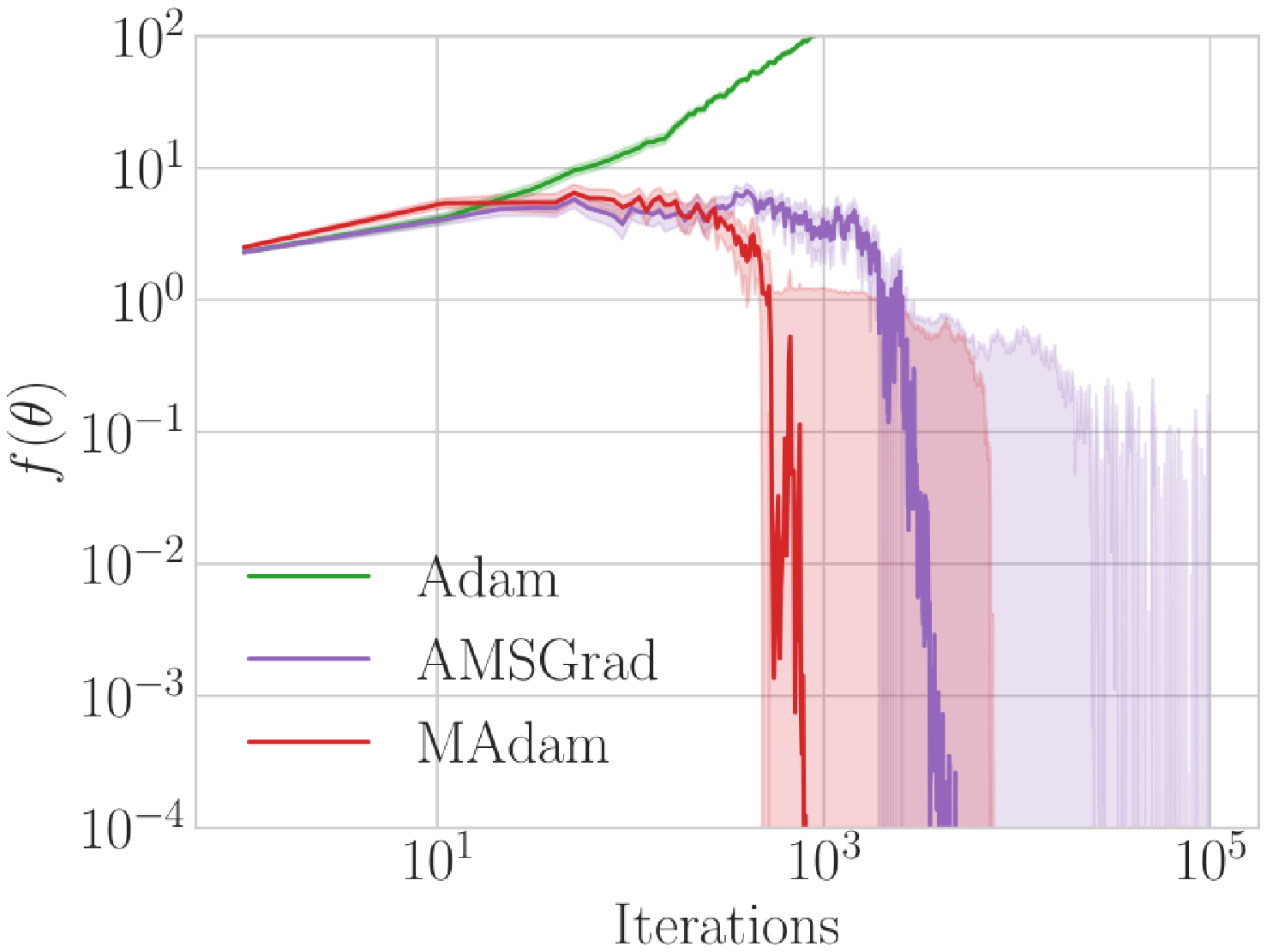}
\includegraphics[width=.32\linewidth]{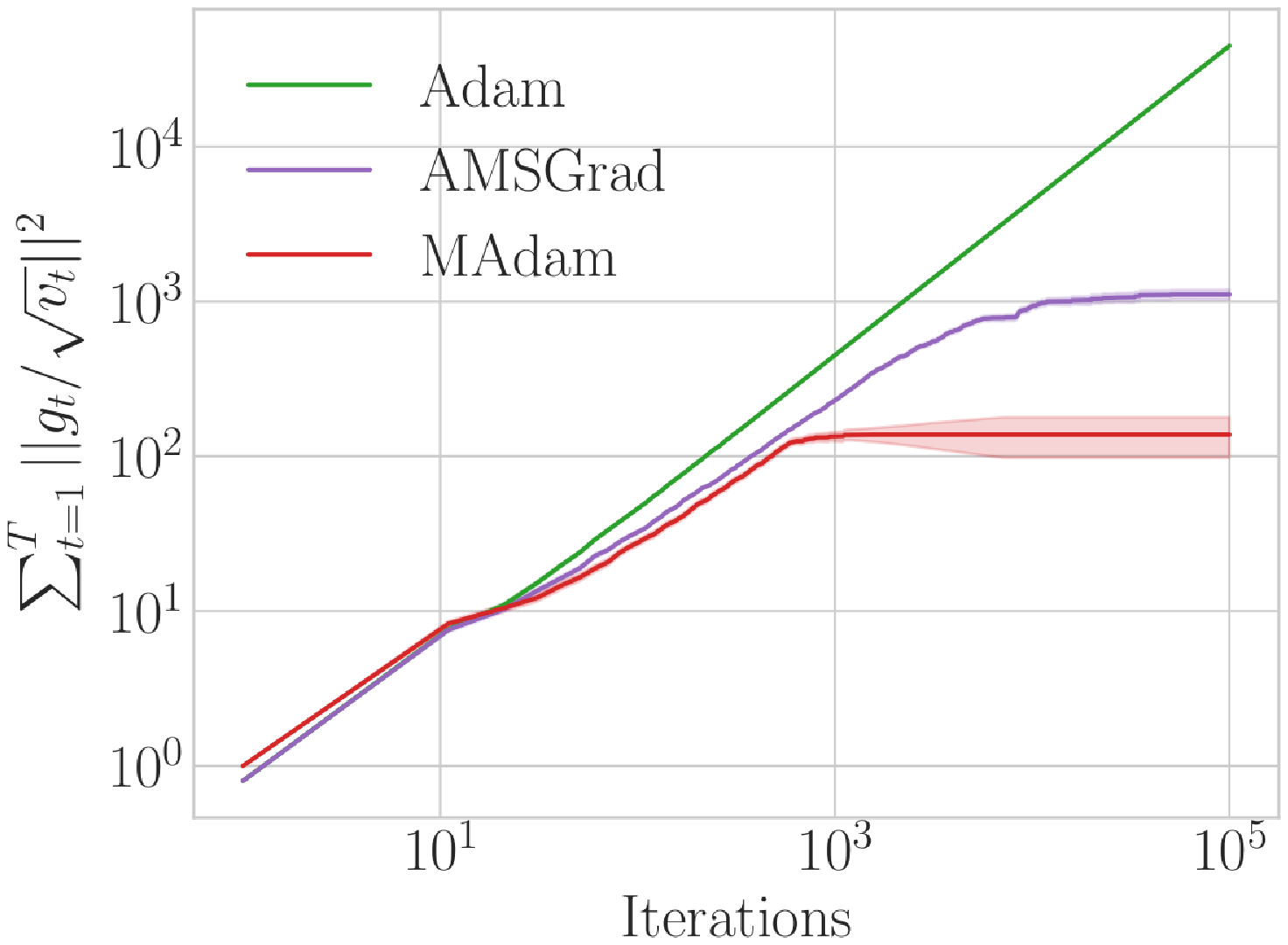}
\includegraphics[width=.32\linewidth]{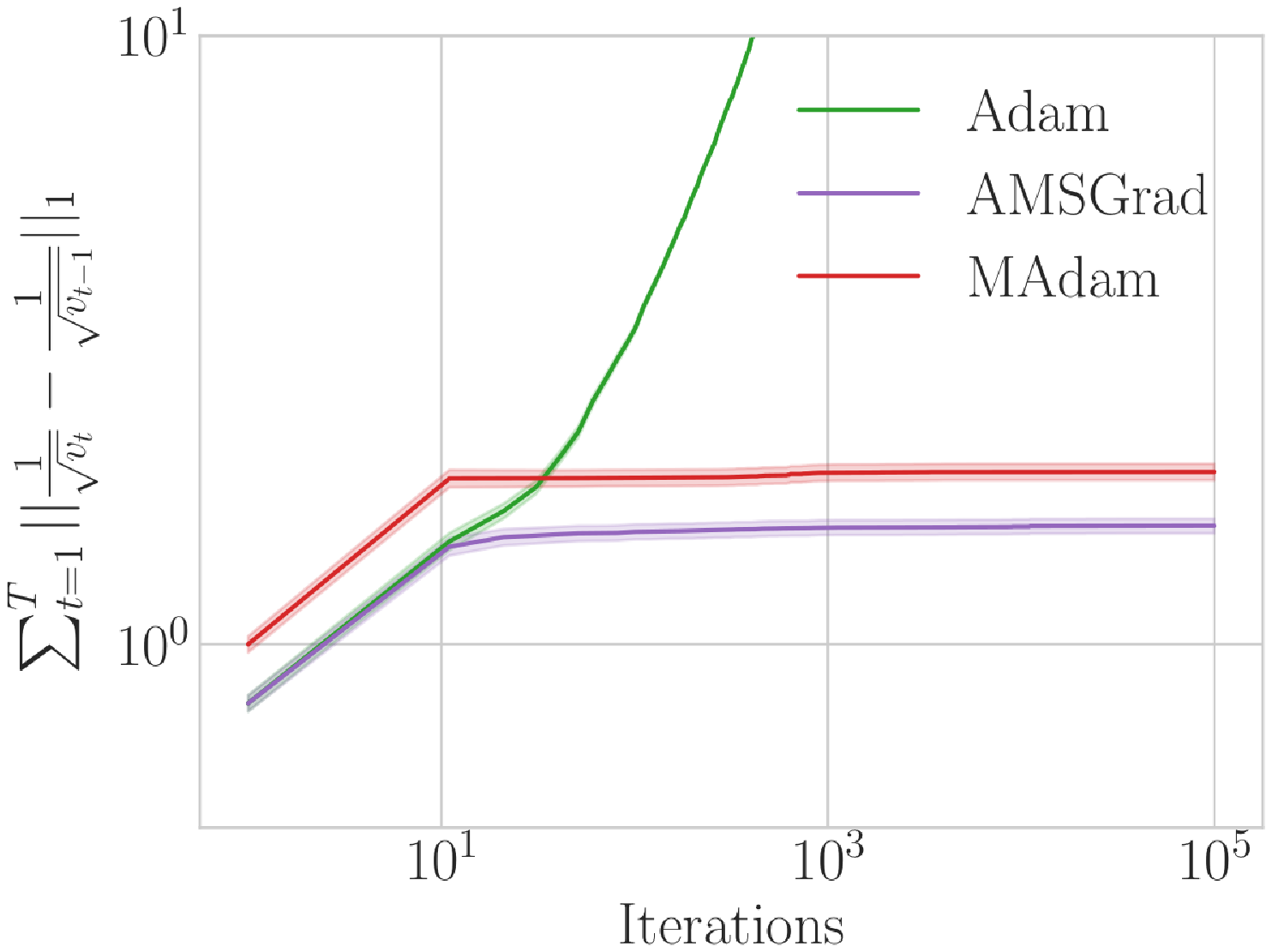}
\vspace{-1em}
\caption{\footnotesize Median and standard error (100 runs) of objective value ($f(\theta)$), accumulated update size ($\sum_{t=1}^{T}|| g_t/\sqrt{v_t}||^2$) and total change in adaptive learning rate ($\sum_{t=1}^{T}|| \frac{1}{\sqrt{v_t}}-\frac{1}{\sqrt{v_{t-1}}} ||_1$) for \adam, \amsgrad, \madam~on the problem in Eq.~\ref{eq:toy_converge}. }
\end{figure*}
Since MaxVA maximizes the variance and the gradient converges to zero in most cases, \madam~biases towards larger $v_t$ than \adam~but does not require $v_t$ to be monotonically increasing, which is like an adaptive version of \amsgrad. To highlight the difference, we compare \adam, \madam~and \amsgrad~on the synthetic dataset from~\cite{chen2018convergence} simulating training with stochastic mini batches on a finite set of samples.
Formally, let $\mathbbm{1}_{[\cdot]}$ be the indicator function. We consider the problem $\min_{\theta} f(\theta)=\sum_{i=1}^{11} \ell_{i}(\theta)$ where
\begin{equation}\label{eq:toy_converge}
% \resizebox{1.02\hsize}{!}{$
\ell_i(\theta)=\left\{ 
    \begin{array}{ll}
      \mathbbm{1}_{i=1}5.5\theta^2+\mathbbm{1}_{i\neq 1}(-0.5\theta^2), & \text{if } |\theta|\le 1; \\
     \mathbbm{1}_{i=1}(11|\theta|-5.5)+\mathbbm{1}_{i\neq 1}(-|\theta|+0.5), & \text{otherwise}. 
    \end{array}
    \right.
% $}
\end{equation}
At every step, a random index $i$ is sampled uniformly from $i\in [11]$, and the gradient $\nabla \ell_i(\theta)$ is used by the optimizer.
The only stationary point where $\nabla f(\theta)=0$ is $\theta=0$. 
We set $\alpha=0, \beta=0.9$ for \adam~and \amsgrad. For \madam, we set $\alpha=0, (\underline{\beta}, \bar{\beta})=(0.5,1)$. We select the best \textit{constant} learning rates for the three algorithms, see Appendix~\ref{sec:a_finite_samples} for details.
% We compare \adam, \amsgrad~and \madam~on optimizing this objective. 

We plot the median and standard error of the objective ($f(\theta)$), accumulated update size ($S_1=\sum_{t=1}^{T}|| g_t/\sqrt{v_t}||^2$), and total change in adaptive step size ($S_2=\sum_{t=1}^{T}|| \frac{1}{\sqrt{v_t}}-\frac{1}{\sqrt{v_{t-1}}} ||_1$) over 100 runs in Figure~\ref{eq:toy_converge}.The optimal learning rates for these optimziers are different, so for fair comparisons, we have ignored the constant learning rate in $S_1$ and $S_2$. 
From the curves of $f(\theta)$, we can see \adam~diverges, and \madam~converges faster than \amsgrad~in the later stage. 
% The convergence speed is also validated by the $S_1$ curve, where the update size of \madam~vanished earliest.
As shown by the $S_2$ curves, the adaptive step sizes of \madam~and \amsgrad~all converged to some constant values after about 10 steps, but \madam~converges faster on both $f(\theta)$ and $S_1$, indicating the adaptive step size found by \madam~fits the geometry of the problem better than \amsgrad. 
This also shows $S_1+S_2$ of \madam~has a smaller slope than \amsgrad~in the log-scale plots after 10 iterations, leading to a faster theoretical convergence rate in the bound given by~\cite{chen2018convergence}.
% A smaller slope for $S_1+S_2$ in these log-scale plots indicates a faster theoretical convergence rate in the bound given by~\cite{chen2018convergence}, and \madam~has a smaller slope than \amsgrad~after about 10 iterations. 
The slightly larger variation in adaptive step sizes of \madam~at the beginning of training, shown by the larger $S_2$ values, demonstrates \madam~adapts faster to the changing gradients than \amsgrad, achieved by dynamically selecting $\beta<0.9$.
% As a trade-off, \madam~enjoys a faster convergence rate both empirically and theoretically on this problem, reflected in the fast convergence on $f(\theta)$ and the update size $g_t/\sqrt{v_t}$. 

\subsection{Convergence in the Noisy Quadratic Model}
\begin{figure*}[h]
\vspace{-1em}
\centering
\includegraphics[width=.4\linewidth]{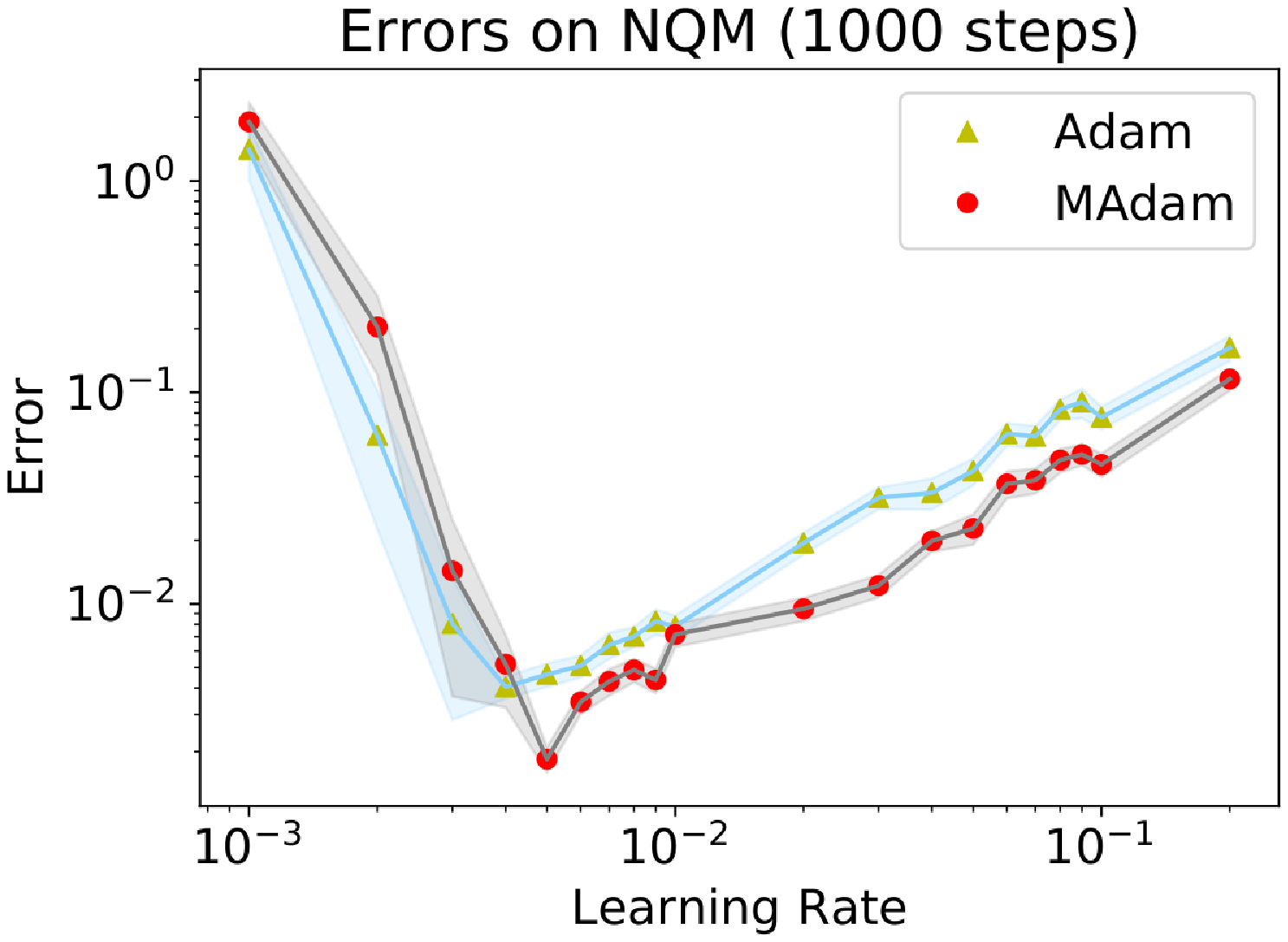}
\includegraphics[width=.4\linewidth]{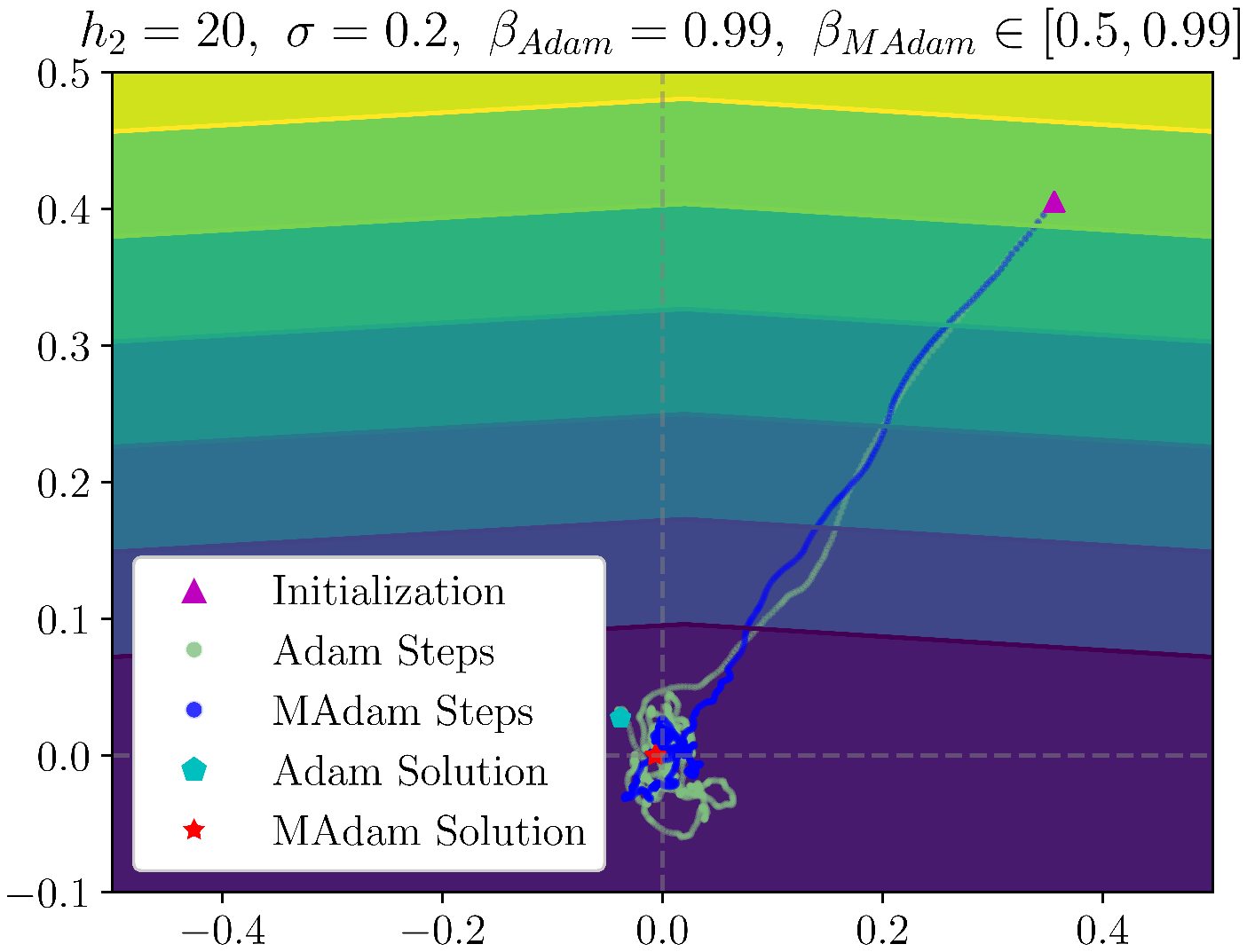}
\vspace{-1em}
\caption{\small Results on NQM. The left figure shows the mean and standard error of the loss under different learning rates $\eta$, computed over 100 runs at each point. We select the best $\beta$ for \adam~at each $\eta$. The best results (mean and variance) of \adam~and \madam~are 1.84e-3 (2.51e-4) and 4.05e-3 (4.84e-4) respectively. Figure on the right gives a qualitative example of the trajectories of two approaches. }
\label{fig:nqm}
\end{figure*}

We analyze the ability of \madam~to adapt to curvature and gradient noise on the simple but illustrative Noisy Quadratic Model (NQM), which has been widely adopted for analyzing optimization dynamics~\cite{schaul2013pesky,wu2018shorthorizon,zhang2019algorithmic,lookahead}.
The loss function is defined as $f(\theta) = \mathbb{E}_{x\sim \gN(0,\sigma^2 I)}\left[\frac{1}{2}\sum_{i=1}^d h_i(\theta_i - x_i)^2 \right]$,
where $x$ is a noisy observation of the ground-truth parameter $\theta^*=0$, simulating the gradient noise in stochastic optimization, and $h_i$ represents the curvature of the system in $d$ dimensions.
In each step, the optimizers use the following noisy gradient for coordinate $i$, from which we can see the gradient's variance is proportional to the curvature $h_i^2$:
\begin{equation}\label{eq:nqm_grad}
    \nabla_{\theta_i} \ell(\sigma\epsilon_i;\theta_i)= h_i(\theta_i - \sigma\epsilon_i), \epsilon_i \sim \gN(0, 1).
\end{equation}

To validate the effectiveness of MaxVA, we compare \madam~with \adam~under a variety of different curvatures $h$ and noise level $\sigma$ on an NQM with $d=2$.
For each setting of $h$ and $\sigma$, we test both algorithms on a variety of learning rates. For $\adam$, we additionally choose the best $\beta$ and report the best results. See Appendix~\ref{appendix:nqm} for details. We run each setting 100 times to report the mean and standard error.
\madam~consistently achieves 30-40\% lower average loss with smaller standard error in all settings. 
Figure~\ref{fig:nqm} shows the results for one of the settings, from which we find the best result of \madam~is better than \adam~under any choice of $\beta$ and learning rate, confirming the advantage of MaxVA. 
From the qualitative example, MaxVA also demonstrates smaller variance near convergence, enabled by a quicker response to impede the noise with a smaller $\beta_t$.
More experimental results under other settings are provided in Appendix~\ref{appendix:nqm}.

%%%%%%%%%%%%%%%%%%%%%%%%%%%%%%%%%%%%%%%%%%%%%%%%%%%%%%%%%%%%%%%%%%%%%%%%%%%%%%%%%%%%%%%%%%%%%%%%%%%%%%%%%%%%%%%%
\section{Experiments on Practical Datasets}

In this section, we evaluate \madam~and \lamadam~on a variety of tasks against well-calibrated baselines: IWSLT'14 DE-EN/WMT'16 EN-DE for neural machine translation, the GLUE benchmark for natural language understanding, and pretraining the BERT-Base model. 
We also provide results on image classification.
%The implementations are based on PyTorch, and we run the experiments on Nvidia V100 GPUs.
We use the decoupled weight decay~\cite{loshchilov2018adamw} in all our experiments. %gives much better results for \adam, \madam, \laprop~and \lamadam. Therefore, we use this approach in all our experiments. 
Across all the plots in this section, we define the average step size at time $t$ as the average of $|\eta_t m_t/(\sqrt{v_t}+\epsilon)|$ for \adam/\madam~and $|\eta_t m_t|$ for \laprop/\lamadam~over all the entries.

\subsection{Image Classification}
\begin{table}[t!]
\small
\begin{center}
\label{tab:layer}
\begin{tabular}{llll}
\toprule
Model     & CIFAR-10     & CIFAR-100   & ImageNet \\ 
\midrule
 SGD      & 95.44 (.04)  & 79.62 (.07) & 70.18 \\
 \midrule
 \adam    & 95.37 (.03)    & 78.77 (.07) & 66.54 \\ 
 \laprop  & 95.34 (.03)  & 78.36 (.07) & 70.02  \\ 
 AdaBelief & 95.30$^*$    & 77.30$^*$    &  70.08  \\
%  \midrule
 \madam{} (ours)   & \textbf{95.51} (.09)  & \textbf{79.32} (.08) & 69.96  \\ 
 \lamadam{} (ours) & 95.38 (.11)  & 79.21 (.11) & \textbf{70.16}\\
 \bottomrule
 \end{tabular}
 \end{center}
 \vspace{-1em}
 \caption{\small Comparing adaptive methods with exhaustively fine-tuned SGD on CIFAR10/100 and ImageNet.  CIFAR10/100 experiments are the median (standard error) over 4 runs. $^*$: The results of AdaBelief are from their paper~\cite{zhuang2020adabelief} with a ResNet34, while our results are with ResNet18.}
 % The \adam~results on ImageNet is copied from~\cite{liu2019radam}.
 \label{tab:image}
  \vspace{-2em}
\end{table}
To evaluate the effectiveness of MaxVA for image classification, we compare with SGD, \adam{}, \laprop{}~\cite{ziyin2020laprop} and AdaBelief~\cite{zhuang2020adabelief} in training ResNet18~\cite{he2016resnet} on CIFAR10, CIFAR100 and ImageNet.
On all the datasets, we perform a grid search for the learning rate and weight decay, and report the best results for each method in Table~\ref{tab:image}. 
For CIFAR10/100, we train ResNet18 with a batch size of 128 for 200 epochs. 
We also find AMSGrad~\cite{reddi2019amsgrad} improves the classification accuracy of all adaptive methods evaluated on CIFAR10/100, so we apply AMSGrad in all experiments with adaptive methods.
On ImageNet, we use the implementation from torchvision and the default multi-step learning rate schedule. We do not use AMSGrad in this case. Further details are in Appendix~\ref{appendix:image}. 

Despite achieving a marginal improvement on CIFAR10, adaptive methods often underperforms carefully tuned SGD on CIFAR100 and ImageNet when training popular architectures such as ResNet, as confirmed by~\cite{wilson2017marginal,lookahead,liu2019radam}.
Nevertheless, with the proposed MaxVA, we shrink the gap between adaptive methods and carefully tuned SGD on these image classification datasets, and achieve top-1 accuracy very close to SGD on ImageNet. 
Note our results with ResNet18 is better than the recent AdaBelief's results with ResNet34 on CIFAR10/CIFAR100 (95.51/79.32 vs. 95.30/77.30 approximately), as well as AdaBelief with ResNet18 on ImageNet (70.16 vs. 70.08)~\cite{zhuang2020adabelief}.

\subsection{Neural Machine Translation}

\begin{table}[t!]
% \setlength{\abovecaptionskip}{2pt}
% \setlength{\tabcolsep}{0.3em}
% \hfill
% \vspace{-3mm}
\small
\centering
\begin{tabular}{lcc}
\toprule
Method & IWSLT'14 DE-EN & WMT'16 EN-DE\\
\midrule
RAdam    & 35.51   &   -          \\
AdaBelief & 35.90   &   -          \\
\midrule
\laprop (ours) & 35.98 (0.06) & 27.02 \\
\lamadam (ours) & \textbf{36.09} (0.04) & \textbf{27.11} \\
\bottomrule 
\end{tabular}
%\vspace{1mm}
\caption{\small BLEU score for training transformers on machine translation datasets. We report the median and standard error for IWSLT'14 over 5 runs. Results of other meethods are from the AdaBelief paper~\cite{zhuang2020adabelief}.} 
\label{tab:nmt}
\vspace{-2em}
% \vspace{-3mm}
\end{table}
% \vspace{-2mm}
We train Transformers from scratch with \laprop~and \lamadam~on IWSLT'14 German-to-English (DE-EN) translation~\cite{cettolo2014iwslt} and WMT'16 English-to-German (EN-DE) translation, based on the implementation of fairseq.\footnote{https://github.com/pytorch/fairseq}
We do not compare with SGD, since it is unstable for Transformers~\cite{zhang2019longtail}.
We also show in Appendix~\ref{sec:adabound} that \adabound~cannot achieve any good result without degenerating into \adam.
More details are in Appendix~\ref{appendix:nmt}.

IWSLT'14 DE-EN has 160k training examples, on which we use a Transformer with 512-dimensional word embeddings and 1024 FFN dimensions.
We train it for 60k iterations, with up to 4096 tokens in each minibatch. Results are listed in Table~\ref{tab:nmt}. Note the baseline's BLEU score is already 1.22 higher than the best results reported in~\cite{liu2019radam} using the same model. 
As shown in Appendix~\ref{appendix:nmt}, \lamadam~uses much smaller update size than \laprop, and it is not able for \laprop~to achieve better results even when we scale its learning rate to get similar update sizes as \lamadam, indicating MaxVA helps to find a better minimum not achievable by using constant $\beta$.
% Altough \lamadam~is only marginally better than \laprop, 

WMT'16 EN-DE has 4.5M training examples, where same as~\cite{ott2018scaling}, we use a larger Transformer with 1024-dimensional word embeddings and 4096 FFN dimensions. 
Each batch has up to 480k tokens. We train for 32k iterations using the same inverse square root learning rate schedule as~\cite{transformer}.
We evaluate the \textit{single model} BLEU on newstest2013, unlike~\cite{liu2019radam} where models in the last 20 epochs are averaged to get the results. 
As shown in Table~\ref{tab:nmt}, \lamadam~also achieves better results.

\subsection{General Language Understanding Evaluation (GLUE)}
\vspace{-1em}
\begin{table*}[htbp!]
\setlength\tabcolsep{2pt} % default value: 6pt
\small
\centering
  \begin{adjustbox}{max width=\textwidth}
% \begin{tabular}{l|l|l|l|l|l|l|l|l}
\begin{tabular}{lcccccccc}
\toprule
{Method}   & \multicolumn{1}{c}{{MNLI}} & \multicolumn{1}{c}{{QNLI}} & \multicolumn{1}{c}{{QQP}} & \multicolumn{1}{c}{{RTE}} & \multicolumn{1}{c}{{SST-2}} & \multicolumn{1}{c}{{MRPC}} & \multicolumn{1}{c}{{CoLA}} & \multicolumn{1}{c}{{STS-B}} \\
                  & \multicolumn{1}{c}{(Acc)}         & \multicolumn{1}{c}{(Acc)}         & \multicolumn{1}{c}{(Acc)}        & \multicolumn{1}{c}{(Acc)}        & \multicolumn{1}{c}{(Acc)}          & \multicolumn{1}{c}{(Acc)}         & \multicolumn{1}{c}{(Mcc)}         & \multicolumn{1}{c}{(Pearson)}      \\ 
\midrule
{Reported}   &           87.6        &          92.8	         &    91.9  &      78.7                &            94.8          &        90.2         &           63.6         &  91.2                  \\ 
\midrule
{\adam}   &            87.70 (.03)    &     92.85 (.06)  &    91.80 (.03)  &     79.25 (.71)               &      94.75 (.08)         &       88.50 (.24)      &             61.92 (1.1)           &       91.17 (.13)       \\ 
{\laprop}      &     87.80 (.04)  &      92.85 (.13)      &     91.80 (.03)       &             78.00 (.46)            &   94.65 (.11)     &     89.20 (.20)                   &       63.01 (.61)        &        91.17 (.06)       \\ 

\madam   &          \textbf{87.90} (.08)     &     92.95 (.07)      &    91.85 (.03)                &      79.60 (.66)        &    94.85 (.12)    &                   89.70 (.17)      &          63.33 (.60)       &     91.28 (.03)    \\ 

\lamadam   & {87.80} (.03)                         & \textbf{93.05} (.05)                         & \textbf{91.85} (.05)                       & \textbf{80.15} (.64)                       & \textbf{95.15} (.15)                          & \textbf{90.20} (.20)                      & \textbf{63.84} (.85)                         & \textbf{91.36} (.04)                        \\ 
\bottomrule
\end{tabular}
\end{adjustbox}
% \vspace{-1mm}
\caption{\small Results (median and variance) on the dev sets of GLUE based on finetuning the RoBERTa-base model (\cite{liu2019roberta}), from 4 runs with the same hyperparameter but different random seeds.}
\label{tab:glue}
\vspace{-2em}
\end{table*}
\begin{figure}[t!]
\centering
\includegraphics[width=.3\linewidth]{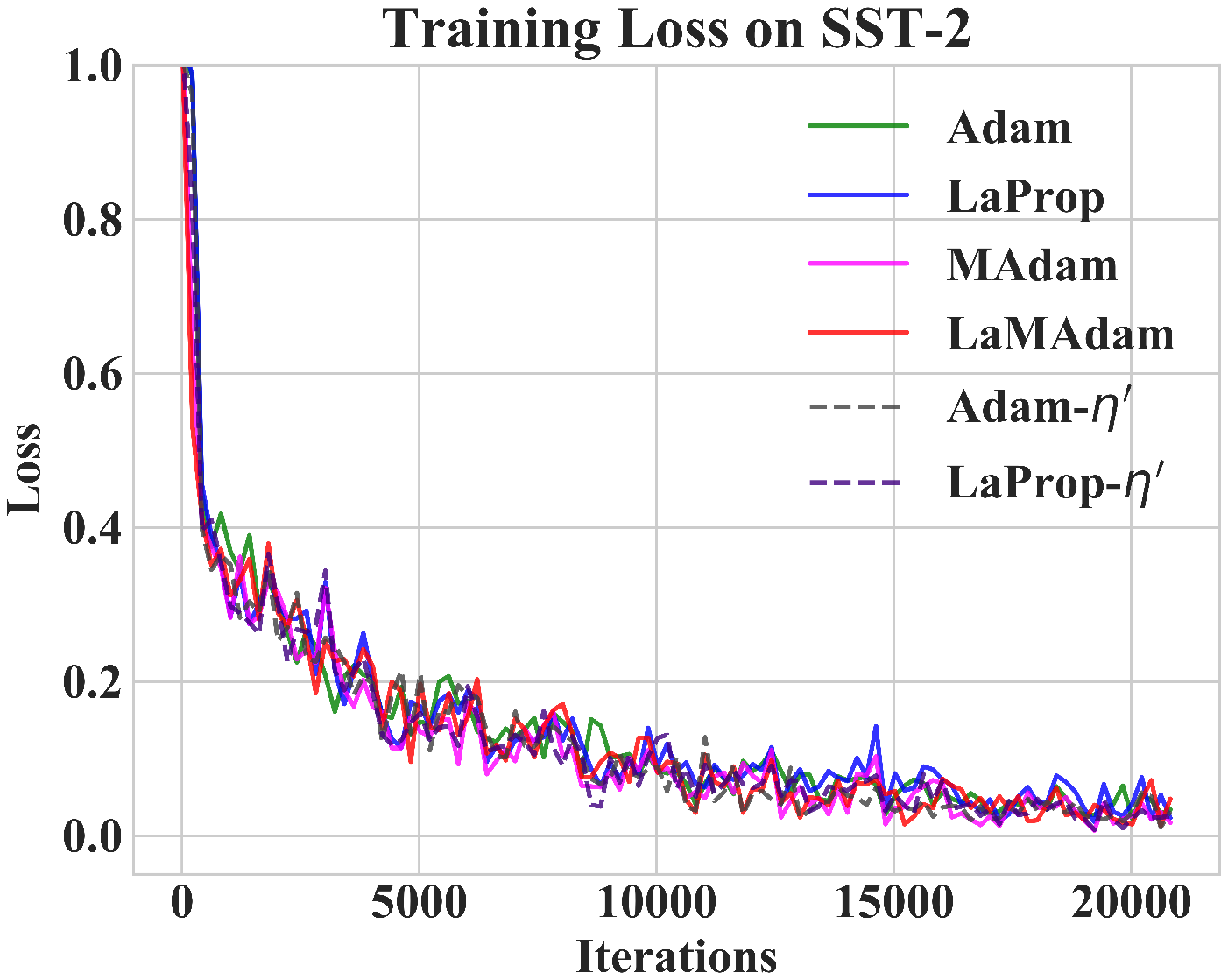}
\includegraphics[width=.3\linewidth]{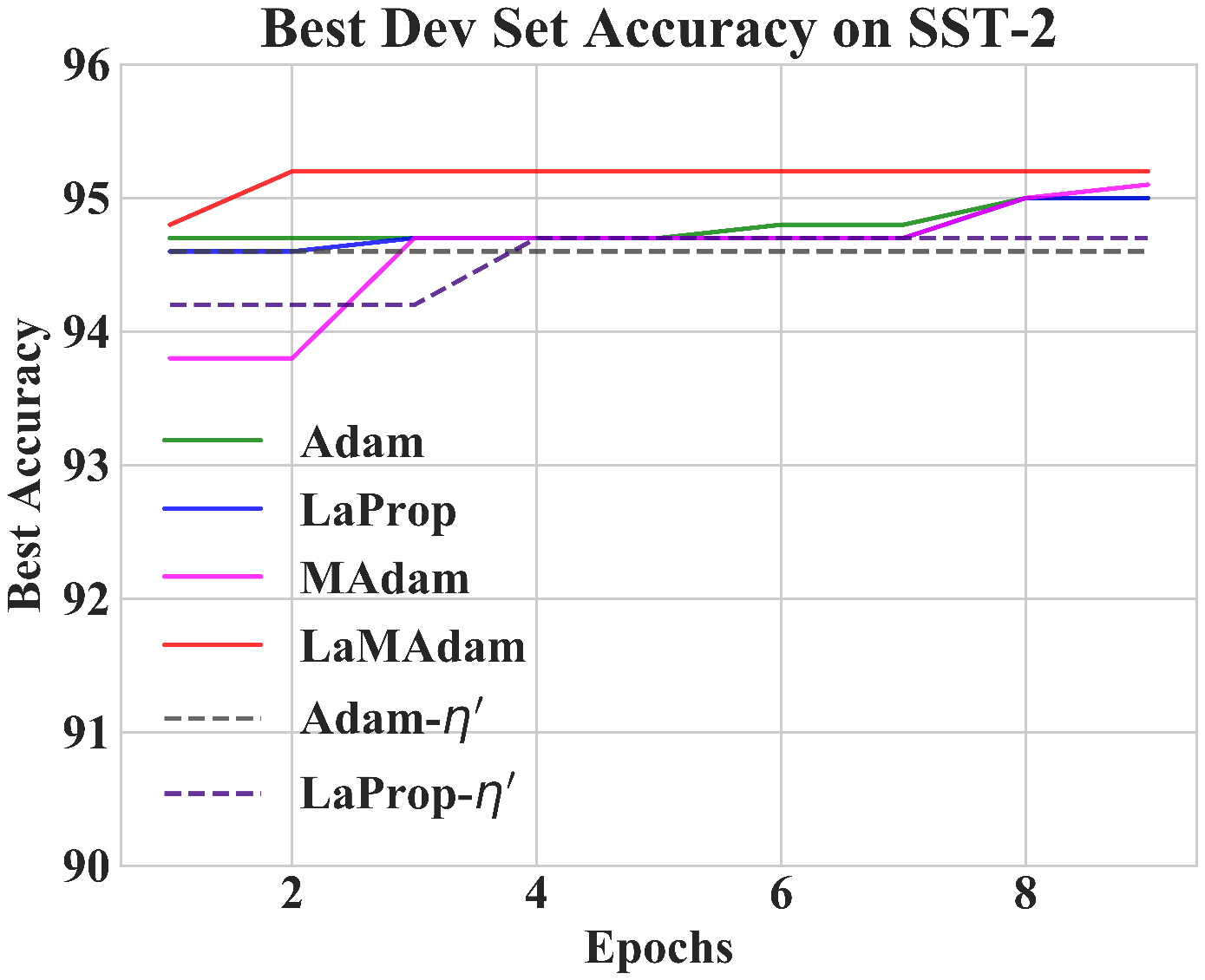}
\includegraphics[width=.3\linewidth]{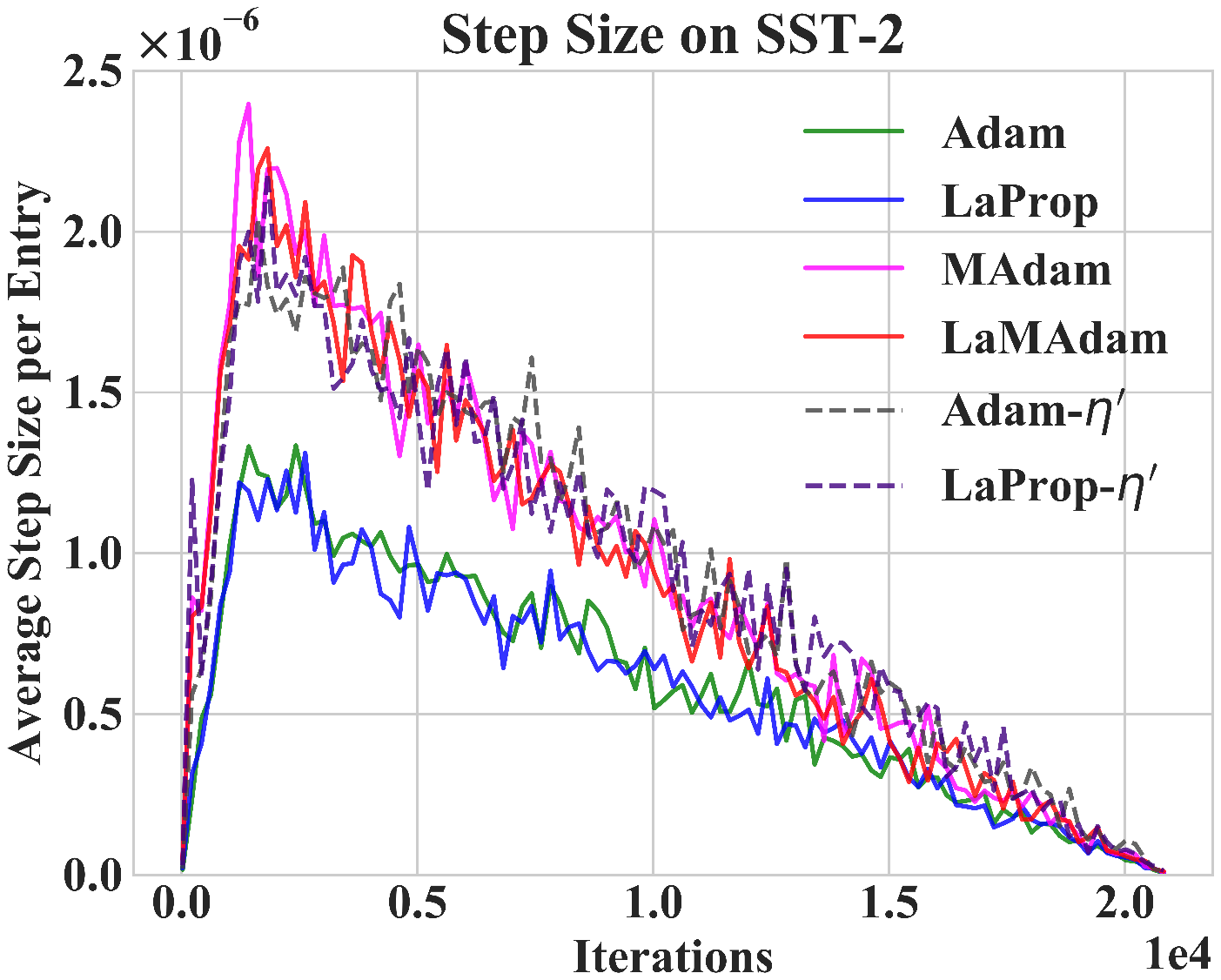}
\caption{\small Training loss, validation accuracy and step size of various optimization methods on SST-2. All optimizers here use $\lambda=0.1$. \adam~and \laprop~use ($\eta, \beta$)=(1e-5, 0.98), \madam~and \lamadam~use ($\eta,\underline{\beta},\bar{\beta}$)=(4e-5, 0.5, 0.98), \adam-$\eta'$ and \laprop-$\eta'$ use ($\eta,\beta$)=(1.6e-5, 0.98).}
\label{fig:nlu_convergence}
\vspace{-1em}
\end{figure}
%\vspace{-2mm}
To evaluate MaxVA for transfer learning, we fine-tune pre-trained RoBERTa-base model~\cite{liu2019roberta} on 8 of the 9 tasks of the GLEU benchmark~\cite{wang2018glue}.
Following prevalent validation settings~\cite{devlin2019bert,lan2019albert,raffel2019t5}, we report the median and standard error for fine-tuning the RoBERTa-base model~\cite{liu2019roberta} over 4 runs where only the random seeds are changed. The results are in Table~\ref{tab:glue}. 
\madam~and \lamadam~give better scores than the corresponding baselines in the 8 tasks.
More experimental details are in Appendix~\ref{appendix:glue}.

To highlight the difference of the optimizers, we compare the training loss, dev set accuracy and the average step size on SST-2, as shown in Figure~\ref{fig:nlu_convergence}.
Different from Machine Translation experiments where we train the Transformers from scratch, the adaptive step size of \madam/\lamadam~is higher in this transfer learning setting. The ratio of the learning rate and step size of MaxVA to non-MaxVA optimizers are 4 and 1.8 respectively on GLUE, while on IWSLT'14 the two ratios are 2 and (approximately) 0.875. 
Because we start from a pre-trained model, the heavy tail of the gradient is alleviated, just as the BERT model in the later stage of training as shown by~\cite{zhang2019longtail}, and the curvature of the loss landscape should be smaller.
Therefore, MaxVA selects larger adaptive step sizes for better convergence.
Same as in the Machine Translation experiments, the highest test accuracy of \adam/\laprop~cannot reach the same value as \madam/\lamadam~by simply scaling the base learning rate $\eta$ to reach similar step sizes as \madam/\lamadam. 

\subsection{Large-batch Pretraining for BERT}
We use the NVIDIA BERT pretraining repository to perform large-batch pretraining for BERT-Base model on the Wikipedia Corpus only.\footnote{Note the results from the repository are for BERT-Large trained with additional data from BookCorpus.}
%\footnote{\small{https://github.com/NVIDIA/DeepLearningExamples/tree/ master/PyTorch/LanguageModeling/BERT. Note the results from the repository are for BERT-Large trained with additional data from BookCorpus. We did not include BookCorpus due to the difficulty of obtaining the data.}}
Each run takes about 52 hours on 8 V100 GPUs.
Training is divided into two phases: the first phase uses a batch size of 64K with input sequence length 128 for 7,038 steps; the second phase uses a batch size 32K with input sequence length 512 for 1563 steps. The total of steps is significantly smaller than the 1,000,000 steps used in the small-batch training of~\cite{devlin2019bert}. 
Therefore, a faster adaptation to curvature in each step is more important. 

\begin{figure*}
% \begin{wrapfigure}{r}{0.68\textwidth}
  \begin{center}
    \includegraphics[width=0.35\linewidth]{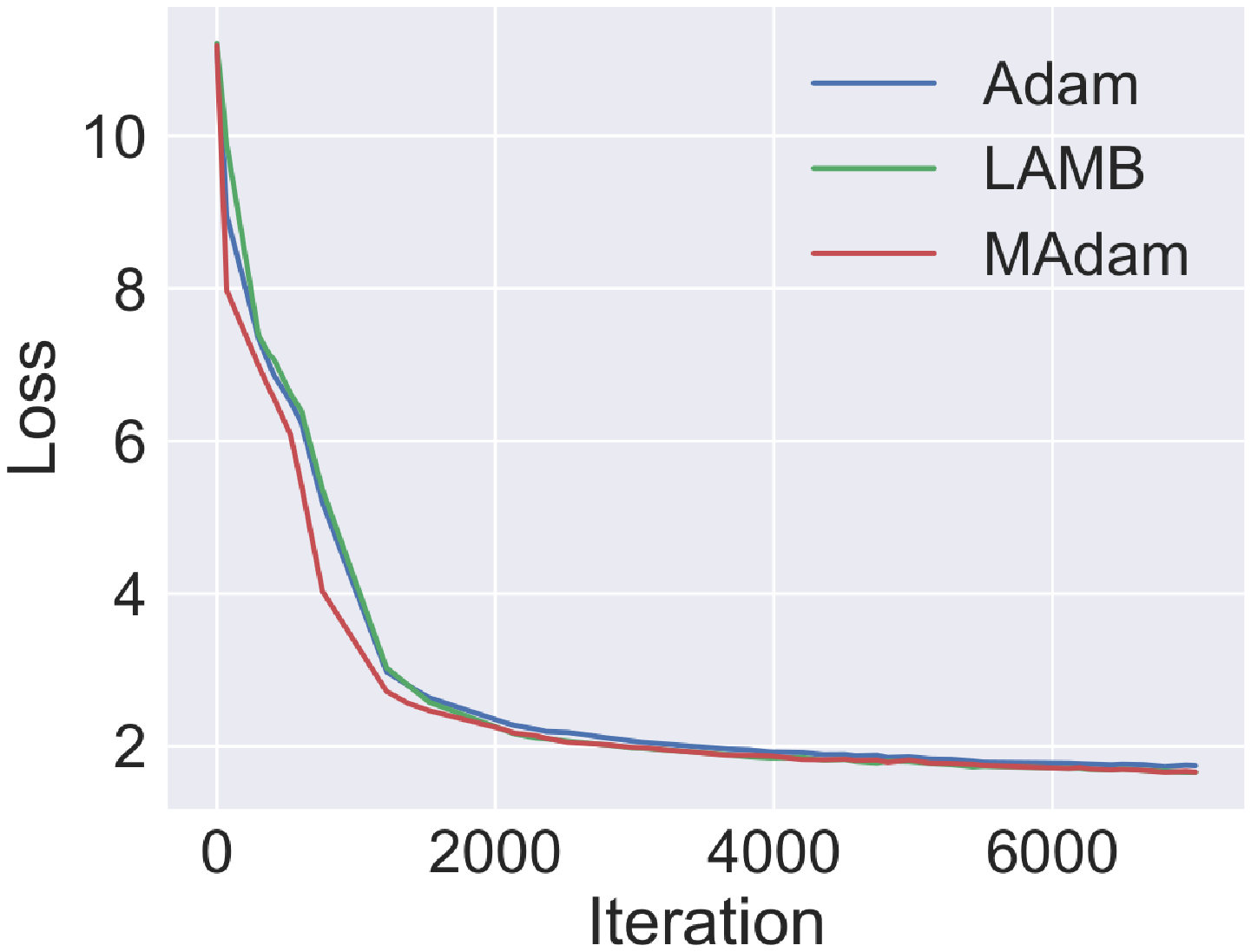}
    \includegraphics[width=0.35\linewidth]{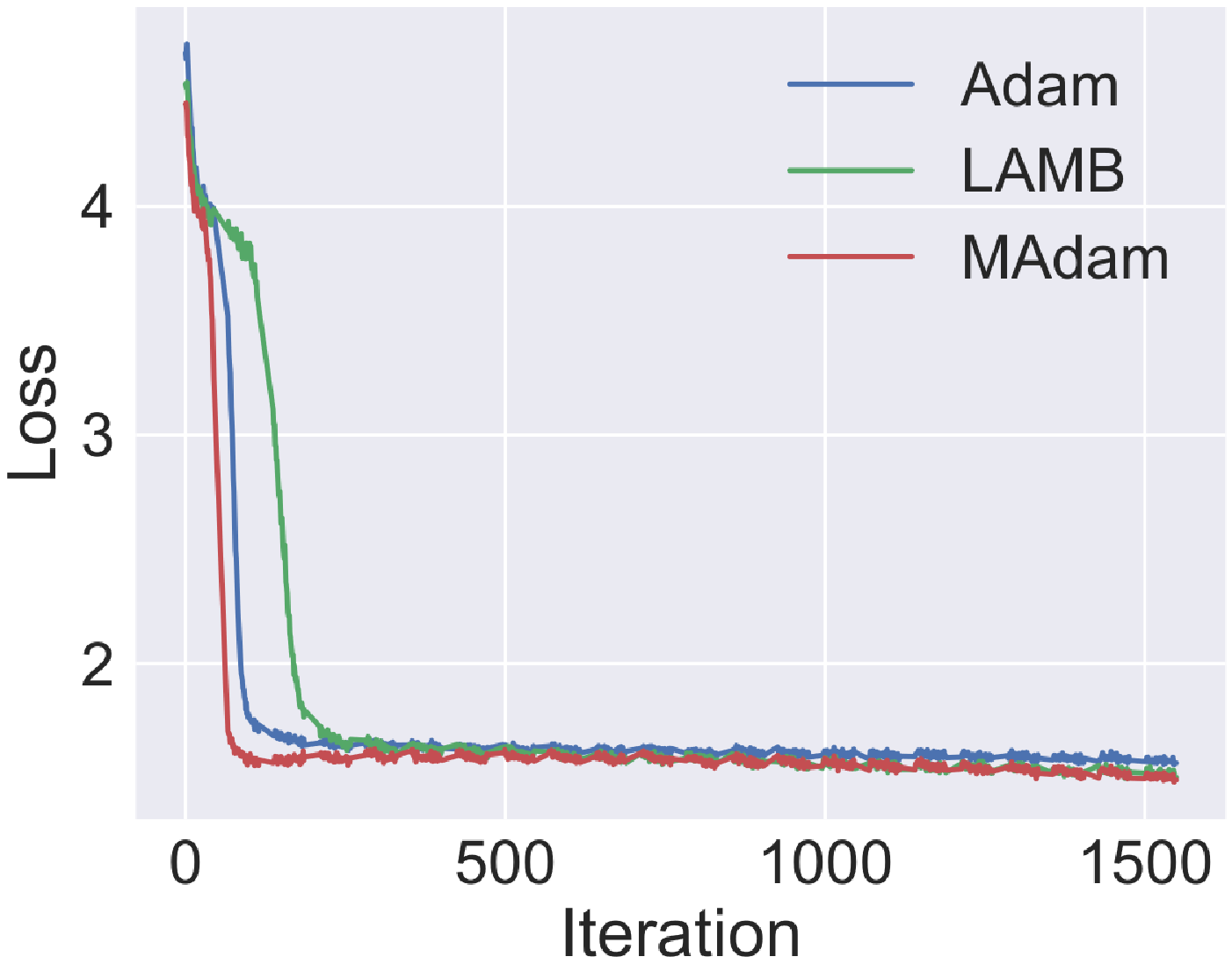}
  \end{center}
  \vspace{-1em}
  \caption{\footnotesize Training losses of \adam, \lamb~and \madam~on Wikipedia Corpus in the two training phases.}
  \vspace{-1em}
\label{fig:lars_convergence}
% \end{wrapfigure}
\end{figure*}

This point is validated by the faster convergence of \madam~in both phases, as shown in the training loss curves in Figure~\ref{fig:lars_convergence}. 
Contrary to the observation by~\cite{you2020lamb}, \adam~even converges faster than \lamb~in the earlier iterations. 
\cite{you2020lamb} only explored weight decay of up to 0.01 for \adam, but we find using larger weight decay of 0.1 together with gradient clipping ($\lVert g_t \rVert_2\le 1$, same as \lamb) stabilizes \adam. 
We inherit this setting for \madam.
%So we use a weight decay of 0.1 and the same gradient clipping for both \madam~and \adam.
For \madam~and \adam, we do a grid search on the learning rate of phase 1 while keeping the ratios of learning rate in phase 1 and phase 2 to the same as \lamb.
We use $\bar{\beta}=0.999, \underline{\beta}=0.5$ for \madam.
For \lamb, we use the default setting from the aforementioned repository.

The faster adaptation of MaxVA improves the stability, which enables \madam{} to use a much larger learning rate to achieve faster convergence than \adam.
The best learning rate for \madam~is 3.4e-3. We tried learning rates in \{7e-4, 8e-4, 9e-4, 1e-3\} for \adam, and find it always diverges when the learning rate is higher or equal to 9e-4. The best result of \adam~is achieved with learning rate 8e-4.
\madam~achieves a training loss of 1.492, while \lamb~achieves a training loss of 1.507, and \adam~has the worst training loss 1.568. 
The test scores of the models pretrained with \madam/\lamb/\adam~are 88.53/87.60/88.07 (F1) and 82.10/81.40/80.78 (Accuracy) on SQuAD v1.1 and MNLI, respectively.

\section{Related Work}\label{sec:related}

% \textbf{Adaptive Methods} 
Various adaptive methods have been proposed and broadly applied in deep learning \cite{adam,adagrad,Tieleman2012rmsprop,adadelta}. 
\cite{reddi2019amsgrad} proposed to compute the adaptive learning rate with the coordinate-wise maximum value of $v_t$ so that the adaptive learning rate does not increase. 
\adabound~\cite{luo2019adabound}~clips the adaptive learning rate of \adam~with a decreasing upper bound and an increasing lower bound. Lookahead \cite{lookahead} computes weight updates by looking ahead at the sequence of ``fast weights" generated by another optimizer. 
Padam \cite{Padam} improves the generalization of adaptive methods by choosing a proper exponent for the $v_t$ of \amsgrad. LAPROP \cite{ziyin2020laprop} uses local running estimation of the variance to normalize the gradients, resulting in higher empirical stability.
RAdam~\cite{liu2019radam} was recently invented to free \adam~from the warmup schedule for training Transformers. \cite{ma2019adequacy} found that using a linear warmup over $2\cdot(1-\beta_2)^{-1}$ iterations for \adam~achieves almost the same convergence as RAdam.
\cite{lars} proposes Layer-wise Adaptive Rate Scaling (LARS), and scales the batch size to 16,384 for training ResNet50. 
LAMB~\cite{you2020lamb} applies a similar layer-wise learning rate on \adam~ to improve LARS on training BERT.
Starting from a similar motivation of adapting to the curvature, the recent work AdaBelief~\cite{zhuang2020adabelief} directly estimates the exponential running average of the gradient deviation to compute the adaptive step sizes. 
Our approach finds the averaging coefficients $\beta_t$ automatically by maximizing the estimated variance for a faster adaptation to the curvature, which could be complementary to all the aforementioned methods, and is the first to explore in this direction to our knowledge.

% , but its variance rectification term does not really depend on the observed gradients during training,
% \textbf{Learning Rate Warmup}\,Adaptive learning schemes may lead to bad local optima and must be adjusted through linear learning rate scaling and warmup heuristics \cite{goyal2017imagenet1hr,gotmare2018a}. Similar phenomena are observed in other NLP tasks \cite{bogoychev-etal-2018-accelerating,transformer}. Most theoretical analysis on linear learning rate scaling considers stochastic gradient descent only \cite{NIPS2017_6770,2018dont}.
% Our approach uses an aggressive averaging coefficient and smaller learning rate for abnormal gradients, which stabilize training adaptively.

\vspace{-3mm}
\section{Conclusion}
\vspace{-3mm}
In this paper, we present Maximum Variation Averaging (MaxVA), a novel adaptive learning rate scheme that replaces the exponential running average of squared gradient with an adaptive weighted mean. 
In each step, MaxVA chooses the weight $\beta_t$ for each coordinate, such that the esimated gradient variance is maximized. 
This enables MaxVA to: (1) take smaller steps when large curvatures or abnormally large gradients are present, which leads to more desirable convergence behaviors in face of noisy gradients; (2) adapt faster to the geometry of the objective, achieving faster convergence in the large-batch setting.
We illustrate how our method improves convergence by a better adaptation to variance, and demonstrate strong empirical results on a wide range of tasks. 
We prove MaxVA converges in the nonconvex stochastic optimization setting under mild assumptions.

% \vspace{-1em}
% \section{Appendix}
% \vspace{-1em}
% The appendix is available at \url{https://arxiv.org/pdf/2006.11918.pdf}.
% \vspace{-1em}
% \section{Acknowledgements}
% We thank Quanquan Gu and Jinghui Chen for valuable discussions and comments. Goldstein is supported by the ONR MURI Program and the NSF DMS division.

%
% ---- Bibliography ----
%
% BibTeX users should specify bibliography style 'splncs04'.
% References will then be sorted and formatted in the correct style.
%
\bibliographystyle{splncs04}
\bibliography{refs}

\clearpage
\appendix
\onecolumn
\section*{MaxVA: Fast Adaptation of Step Sizes by Maximizing Observed Variance of Gradients (Appendix)}
\section{Deriving the closed form solution Eq.\ref{eq:beta_sol}}\label{appendix:closed_form}
Plugging Eq.~\ref{eq:zeroth},\ref{eq:first},\ref{eq:second}, and the unbiased estimations $u_t(\beta)=\tilde{u}_{t}(\beta) / w_t(\beta), v_t(\beta)=\tilde{v}_{t}(\beta) / w_t(\beta)$
into Eq.~\ref{eq:max-form}, each coordinate is solving the same problem:
\begin{equation}\label{eq:alt_obj}
\argmax_{\beta} f(\beta) = \frac{\beta w_{t-1} {v}_{t-1} + (1-\beta) g_t^2}{\beta w_{t-1} + (1-\beta)} - \left[\frac{\beta w_{t-1} {u}_{t-1}+(1-\beta)g_t}{\beta w_{t-1} + (1-\beta) } \right]^2.
\end{equation}
Let $\gamma = {1}/[\beta w_{t-1}+(1-\beta)]\in [1, 1/w_{t-1}]$, we can see $f(\beta)$ can be represented as a quadratic function of $\gamma$. Specifically,
\begin{equation*}
    f(\beta) = h(\gamma) = \frac{w_{t-1}v_{t-1}-g_t^2}{w_{t-1}-1}+\left[g_t^2-\frac{w_{t-1}v_{t-1}-g_t^2}{w_{t-1}-1}\right] \gamma -\left\{ \frac{w_{t-1}u_{t-1}-g_t}{w_{t-1}-1} + \left[ g_t - \frac{w_{t-1}u_{t-1}-g_t}{w_{t-1}-1} \right]\gamma \right\}^2.
\end{equation*}
Meanwhile, $\beta$ is a monotonic function of $\gamma$. 
Therefore, $f(\beta)$ has a unique maximum value. 

To find the maximum value, we return to Eq.~\ref{eq:alt_obj}, from which we can find a stationary point
\begin{equation}\label{eq:correct}
    \frac{v_{t-1} - u_{t-1}^2 + (g_t-u_{t-1})^2}{ w_{t-1}\left[ (g_t-u_{t-1})^2 - v_{t-1} + u_{t-1}^2 \right]+v_{t-1} - u_{t-1}^2 + (g_t - u_{t-1})^2  }.
\end{equation}

%%%%%%%%%%%%%%%%%%%%%%%%%%%%%%%%%%%%%%%%%%%%%%%%%%%%%%%%%%%%%%%%%%%%%%%%%%%%%%%%%%%%%%%%%%%%
\section{Convergence Proof}
\label{sec:convergence_proof}
Following the convergence proofs of \yogi~\cite{zaheer2018yogi}, we prove the convergence of \madam~in the nonconvex setting. 
\paragraph{Proof of Theorem 1.}
\begin{proof}
Recall that we have assumed the update steps of \madam~as
\begin{equation}
    \theta_{t+1, i} = \theta_{t,i} - \eta_t \frac{g_{t,i}}{\sqrt{v_{t,i}}+\epsilon},
\end{equation}
for all $i\in [d]$, and that $f$ is $L$-smooth, which results in the following inequalities:
\begin{equation}\label{eq:lsmooth}
\begin{split}
    f(\theta_{t+1}) &\le f(\theta_t) + \langle \nabla f(\theta_t), \theta_{t+1} - \theta_t \rangle + \frac{L}{2}\lVert \theta_{t+1} - \theta_t \rVert^2 \\
    & = f(\theta_t) - \eta_t \sum_{i=1}^d \nabla f(\theta_{t,i}) \frac{g_{t,i}}{\sqrt{v_{t,i}}+\epsilon} +\frac{L\eta_t^2}{2} \sum_{i=1}^d \frac{g_{t,i}^2}{(\sqrt{v_{t,i}} +\epsilon)^2}.
\end{split}
\end{equation}
Note the stochastistic gradient is defined as $g_t=\frac{1}{b_t}\sum_{j=1}^{b_t} \nabla_\theta \ell(x_j;\theta_t)$.
Given $\theta_t$, we take expectation over the stochastic gradient $g_t$ in Eq.~\ref{eq:lsmooth} (denoted as $\E_t[\cdot]=\E[\cdot|\theta_t]$) to get
\begin{equation}\label{eq:lsmooth2}
\begin{split}
    \E_t[f(\theta_{t+1})]\le & f(\theta_t) - \eta_t \sum_{i=1}^d \left( [\nabla f(\theta_t)]_i \times \E_t\left[ \frac{g_{t,i}}{\sqrt{v_{t,i}}+\epsilon} \right] \right) + \frac{L\eta_t^2}{2}\sum_{i=1}^d \E_t\left[ \frac{g_{t,i}^2}{(\sqrt{v_{t,i}}+\epsilon)^2} \right] \\
    = & f(\theta_t) - \eta_t \sum_{i=1}^d \left( [\nabla f(\theta_t)]_i \times \E_t\left[ \frac{g_{t,i}}{\sqrt{v_{t,i}}+\epsilon} -\frac{g_{t,i}}{\sqrt{\beta_{t,i} v_{t-1,i}}+\epsilon} + \frac{g_{t,i}}{\sqrt{\beta_{t,i} v_{t-1,i}}+\epsilon} \right] \right) \\
    & + \frac{L\eta_t^2}{2}\sum_{i=1}^d \E_t\left[ \frac{g_{t,i}^2}{(\sqrt{v_{t,i}}+\epsilon)^2} \right] \\
     = & f(\theta_t) - \eta_t \sum_{i=1}^d \left( [\nabla f(\theta_t)]_i \times \left[\E_t\left[\frac{g_{t,i}}{\sqrt{\beta_{t,i} v_{t-1,i}}+\epsilon} \right] + \E_t\left[ \frac{g_{t,i}}{\sqrt{v_{t,i}}+\epsilon} -\frac{g_{t,i}}{\sqrt{\beta_{t,i} v_{t-1,i}}+\epsilon}  \right]\right] \right) \\
    & + \frac{L\eta_t^2}{2}\sum_{i=1}^d \E_t\left[ \frac{g_{t,i}^2}{(\sqrt{v_{t,i}}+\epsilon)^2} \right] \\
    \le & f(\theta_t) -\eta_t \sum_{i=1}^d \left[\frac{[\nabla f(\theta_t)]_i^2}{\sqrt{\bar{\beta} v_{t-1,i}}+\epsilon} - \frac{\sigma G}{\epsilon\sqrt{t}}\right]  + \eta_t \sum_{i=1}^d [\nabla f(\theta_t)]_i  \E_t \underbrace{ \left[ \left| \frac{g_{t,i}}{\sqrt{v_{t,i}}+\epsilon} -\frac{g_{t,i}}{\sqrt{\beta_{t,i} v_{t-1,i}}+\epsilon}\right|  \right]}_{T_1} \\
    & + \frac{L\eta_t^2}{2}\sum_{i=1}^d \E_t\left[ \frac{g_{t,i}^2}{(\sqrt{v_{t,i}}+\epsilon)^2} \right],  \\
\end{split}
\end{equation}
where the second equality holds by applying Lemma~\ref{lemma} to the first expectation term, and taking the absolute value of the second expectation term.
%comes from the assumption that $g_{t}$ is an unbiased estimator of $\nabla f(\theta_t)$, i.e., $\E_t [g_t]= f(\theta_t)$. 

Next, we need to bound the term $T_1$ to show convergence. 
First, we have the following upper bound for $T_1$:
\begin{equation}
\begin{split}
    T_1 = & \left|{\frac{g_{t,i}}{\sqrt{v_{t,i}}+\epsilon}} - {\frac{g_{t,i}}{\sqrt{\beta_{t,i} v_{t-1,i}}+\epsilon}}\right| \\
    \le & {|g_{t,i}|} \times {\left| { \frac{1}{\sqrt{v_{t,i}}+\epsilon} } -  { \frac{1}{\sqrt{\beta_{t,i}v_{t-1,i}}+\epsilon} }\right| } \\
     \le & { \frac{|g_{t,i}|}{ (\sqrt{v_{t,i}}+\epsilon)(\sqrt{\beta_{t,i}v_{t-1,i}}+\epsilon)  }} \times {\left| \frac{v_{t,i}-\beta_{t,i} v_{t-1,i}}{\sqrt{v_{t,i}}+\sqrt{\beta_{t,i}v_{t-1,i}}} \right|} \\
     = & { \frac{|g_{t,i}|}{ (\sqrt{v_{t,i}}+\epsilon)(\sqrt{\beta_{t,i}v_{t-1,i}}+\epsilon)  }} \times { \frac{(1 - \beta_{t,i}) g_{t,i}^2}{\sqrt{v_{t,i}}+\sqrt{\beta_{t,i}v_{t-1,i}}}}, \\
\end{split}
\end{equation}
where the last equality comes from the definition of $v_{t,i}=\beta_{t,i} v_{t-1,i} + (1-\beta_{t,i}) g_{t,i}^2$.
We can further bound $T_1$ as 
\begin{equation}
\begin{split}
    T_1 = & \frac{g_{t,i}^2}{(\sqrt{v_{t,i}}+\epsilon)(\sqrt{\beta_{t,i}v_{t-1,i}}+\epsilon)} \times { \frac{(1-\beta_{t,i})|g_{t,i}|}{\sqrt{\beta_{t,i}v_{t-1,i}+(1-\beta_{t,i})g_{t,i}^2 }+\sqrt{\beta_{t,i}v_{t-1,i}}} } \\
    \le & \frac{g_{t,i}^2}{(\sqrt{\beta_{t,i}v_{t-1,i}}+\epsilon)\epsilon}\times { \frac{(1-\beta_{t,i})|g_{t,i}|}{\sqrt{(1-\beta_{t,i})g_{t,i}^2 }} } \\
    = & \frac{\sqrt{1-\beta_{t,i}} g_{t,i}^2}{(\sqrt{\beta_{t,i}v_{t-1,i}}+\epsilon)\epsilon} \le \frac{\sqrt{1-\underline{\beta}} g_{t,i}^2}{(\sqrt{\underline{\beta} v_{t-1,i}}+\epsilon)\epsilon}
\end{split}
\end{equation}
Since the loss on each sample $s$ satisfies $|[\nabla \ell (x,s)]_i|\le G$, we will have $|[\nabla f(x)]_i|\le G$ for $\forall i\in [d]$. 
Substituting the coefficients of $T_1$ in Eq.~\ref{eq:lsmooth2} with this gradient bound, we have
\begin{equation}
\begin{split}
    \E_t[f(\theta_{t+1})]\le & f(\theta_t) - {\eta_t \sum_{i=1}^d \frac{[\nabla f(\theta_t)]_i^2}{\sqrt{\bar{\beta}  v_{t-1,i}}+\epsilon}}  + \frac{\eta_t G\sqrt{1-{\underline{\beta}}}}{\epsilon} \sum_{i=1}^d \E_t\left[ \frac{ g_{t,i}^2}{\sqrt{{\underline{\beta}} v_{t-1,i}}+\epsilon} \right] \\
    & + \frac{L\eta_t^2}{2}\sum_{i=1}^d \E_t\left[ \frac{g_{t,i}^2}{(\sqrt{v_{t,i}}+\epsilon)^2} \right]  + \frac{\sigma\eta dG}{\epsilon\sqrt{t}},  \\ 
    \le & f(\theta_t) - {\eta_t \sum_{i=1}^d \frac{[\nabla f(\theta_t)]_i^2}{\sqrt{\bar{\beta} v_{t-1,i}}+\epsilon}}  + \frac{\eta_t G\sqrt{1-{\underline{\beta}}}}{\epsilon} \sum_{i=1}^d \E_t \left[\frac{ g_{t,i}^2}{\sqrt{\underline{\beta} v_{t-1,i}}+\epsilon}\right]  \\
    & + \frac{L\eta_t^2}{2\epsilon}\sum_{i=1}^d \E_t\left[ \frac{g_{t,i}^2}{\sqrt{\underline{\beta}v_{t-1,i}}+\epsilon} \right]  + \frac{\sigma\eta dG}{\epsilon\sqrt{t}},  \\ 
    \le & f(\theta_t) + \sum_{i=1}^d \underbrace{\left( - \frac{\eta_t}{(\sqrt{\bar{\beta} v_{t-1,i}}+\epsilon)} + \frac{\eta_t G\sqrt{1-\underline{\beta}}}{\epsilon(\sqrt{\underline{\beta} v_{t-1,i}}+\epsilon)} + \frac{L\eta_t^2}{2\epsilon (\sqrt{\underline{\beta} v_{t-1,i}}+\epsilon)} \right)}_{T_2} [\nabla f(\theta_t)]_i^2  \\
    & + \left( \frac{\eta_t G\sqrt{1-\underline{\beta}}}{\epsilon} + \frac{L\eta_t^2}{2\epsilon} \right) \sum_{i=1}^d \frac{\sigma^2}{b_t(\sqrt{\underline{\beta} v_{t-1,i}}+\epsilon)}  + \frac{\sigma\eta dG}{\epsilon\sqrt{t}},   \\ 
\end{split}
\end{equation}
where the second inequality comes from the fact that $\sqrt{v_{t,i}} + \epsilon \ge  \epsilon$ and $v_{t,i}=\beta_{t,i}v_{t-1,i} + (1-\beta_{t,i})g_{t,i}^2\ge \underline{\beta}v_{t-1,i}$, and the third inequality comes from applying Lemma 1 by \cite{zaheer2018yogi} to $\E_t[g_t^2]$. 
The application of Lemma 1 is possible because $v_{t-1}$ is independent of the $t$-th batch.
By the assumptions for $\epsilon, G, \underline{\beta}$, we have  
\begin{equation}
    \frac{L\eta_t^2}{2\epsilon} \le \frac{\eta}{4},~ \frac{\eta G\sqrt{1-\underline{\beta}}}{\epsilon} \le \frac{1}{4}\eta.
\end{equation}
Plugging these two results and the assumption $\bar{\beta}\le 2\underline{\beta}$ into $T_2$, we have
\begin{equation}
\begin{split}
    T_2 & \le  -\frac{\eta}{\sqrt{\bar{\beta} v_{t-1,i}} + \epsilon} + \frac{\eta}{2( \sqrt{\underline{\beta}v_{t-1,i}} + \epsilon )} \\
        & \le -\frac{\eta}{\sqrt{2}(\sqrt{\underline{\beta} v_{t-1,i}} + \epsilon)} + \frac{\eta}{2( \sqrt{\underline{\beta}v_{t-1,i}} + \epsilon )} \\
        & \le \frac{\eta}{5( \sqrt{\underline{\beta} v_{t-1,i}} + \epsilon )}
\end{split}
\end{equation}

the main inequality, we have 
\begin{equation}
    \begin{split}
         \E_t[f(\theta_{t+1})]\le & f(\theta_t) - \frac{\eta}{5}\sum_{i=1}^d\frac{[\nabla f(\theta_t)]_i^2}{\sqrt{\underline{\beta} v_{t-1,i}}+\epsilon} + \left( \frac{\eta_t G\sqrt{1-\underline{\beta}}}{\epsilon} + \frac{L\eta_t^2}{2\epsilon} \right)  \sum_{i=1}^d \frac{\sigma^2}{t(\sqrt{\underline{\beta}v_{t-1,i}}+\epsilon)} + \frac{\sigma\eta dG}{\epsilon\sqrt{t}} \\
         \le & f(\theta_t) - \frac{\eta}{5 (G\sqrt{\underline{\beta}} + \epsilon)}\lVert \nabla f(\theta_t) \rVert^2 + \left( \frac{\eta_t G\sqrt{1-\underline{\beta}}}{\epsilon^2} + \frac{L\eta_t^2}{2\epsilon^2} \right)  \frac{\sigma^2 d}{t}+ \frac{\sigma\eta dG}{\epsilon\sqrt{t}},
    \end{split}
\end{equation}
where we have replaced $b_t$ with $t$ by our assumption on the batch size, and the second inequality comes from the fact that $v_{t-1,i}\le G^2$.
Taking expectation on both the LHS and RHS for the inequalities at $t=1,...,T$, using telescope sum and rearranging the terms, we can conclude that
\begin{equation}
    \frac{\eta}{5(G\sqrt{\underline{\beta}}+\epsilon)}\sum_{i=1}^T \lVert \nabla f(\theta_t) \rVert^2 \le f(\theta_1) - E[f(\theta_{T+1})] + \left( \frac{\eta G\sqrt{1-\underline{\beta}}}{\epsilon^2} + \frac{L\eta^2}{2\epsilon^2} \right) \sigma^2 d\log(T+1) + \frac{2\sigma\eta dG}{\epsilon}\sqrt{T}.
\end{equation}
Multiplying both sides with $\frac{5(G\sqrt{\underline{\beta}}+\epsilon)}{T\eta}$, and using the fact that $f(x^*)\le f(\theta_{t+1})$, we conclude that
\begin{equation}
    \frac{1}{T}\sum_{i=1}^T \lVert \nabla f(\theta_t) \rVert^2 \le  5(G\sqrt{\underline{\beta}} + \epsilon)\left( \frac{f(\theta_1) - f(x^*)}{\eta T} + \left( \frac{ G\sqrt{1-\underline{\beta}}}{\epsilon^2} + \frac{L\eta}{2\epsilon^2} \right)  \frac{\sigma^2 d\log(T+1)}{T} + \frac{2\sigma dG}{\epsilon \sqrt{T}} \right).
\end{equation}
\end{proof}

%%%%%%%%%%%%%%%%%%%%%%%%%%%%%%%%%%%%%%%%%%%%%%%%%%%%%%%%%%%%%%
% Proof of the Lemma
\newcommand{\gfti}{[\nabla f(\theta_t)]_i}
\newcommand{\vtmi}{v_{t-1,i}}
\newcommand{\bti}{\beta_{t,i}}
\newcommand{\gti}{g_{t,i}}

\begin{lemma}\label{lemma}
Assume the gradient is bounded as $\lVert \nabla_\theta \ell(x;\theta) \rVert_\infty \le G$, and has bounded variance $\E[[\nabla_\theta \ell(x;\theta)]_i - \gfti]^2\le \sigma^2$, and the batch size $b_t=t$. For the $t$-th iteration of \madam, we have 
\begin{equation}
    -[\nabla f(\theta_t)]_i \E_t \left[ \frac{g_{t,i}}{\sqrt{\beta_{t,i} v_{t-1, i}} + \epsilon} \right] \le -\frac{[\nabla f(\theta_t)]_i^2 }{\sqrt{\bar{\beta}v_{t-1, i}} + \epsilon} + \frac{G \sigma}{\epsilon\sqrt{t}}
\end{equation}
\end{lemma}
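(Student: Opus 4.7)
The plan is to handle the problematic dependence of $\beta_{t,i}$ on $g_{t,i}$ by splitting the stochastic gradient into its mean plus a zero-mean fluctuation, rather than trying to decouple the numerator $g_{t,i}$ from the denominator $\sqrt{\beta_{t,i} v_{t-1,i}}+\epsilon$ directly. Write
\begin{equation*}
    -[\nabla f(\theta_t)]_i \, \mathbb{E}_t\!\left[\frac{g_{t,i}}{\sqrt{\beta_{t,i} v_{t-1,i}}+\epsilon}\right] = \underbrace{-[\nabla f(\theta_t)]_i^2 \, \mathbb{E}_t\!\left[\frac{1}{\sqrt{\beta_{t,i} v_{t-1,i}}+\epsilon}\right]}_{(A)} \; \underbrace{- [\nabla f(\theta_t)]_i \, \mathbb{E}_t\!\left[\frac{g_{t,i} - [\nabla f(\theta_t)]_i}{\sqrt{\beta_{t,i} v_{t-1,i}}+\epsilon}\right]}_{(B)},
\end{equation*}
where we used that $[\nabla f(\theta_t)]_i = \mathbb{E}_t[g_{t,i}]$ is deterministic conditional on $\theta_t$. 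This decomposition is the key idea: it isolates the desired quadratic form in $(A)$ and pushes the dependence of $\beta_{t,i}$ on the batch into an error term $(B)$ driven solely by the mean-zero noise $g_{t,i}-[\nabla f(\theta_t)]_i$.

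For $(A)$, I would use the hard clip $\underline{\beta}\le\beta_{t,i}\le\bar\beta$ from the algorithm. Since $\beta_{t,i}\le\bar\beta$, the (random) denominator satisfies $\sqrt{\beta_{t,i}v_{t-1,i}}+\epsilon\le\sqrt{\bar\beta\, v_{t-1,i}}+\epsilon$, so $1/(\sqrt{\beta_{t,i}v_{t-1,i}}+\epsilon)\ge 1/(\sqrt{\bar\beta v_{t-1,i}}+\epsilon)$; multiplying by the non-positive factor $-[\nabla f(\theta_t)]_i^2$ flips the direction and yields
\begin{equation*}
    (A)\;\le\;-\frac{[\nabla f(\theta_t)]_i^2}{\sqrt{\bar\beta\, v_{t-1,i}}+\epsilon},
\end{equation*}
which is exactly the dominant term in the claimed bound.

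For $(B)$, the boundedness of the full-batch gradient gives $|[\nabla f(\theta_t)]_i|\le G$, and the denominator is at least $\epsilon$ pointwise. Hence
\begin{equation*}
    (B)\;\le\;\frac{G}{\epsilon}\, \mathbb{E}_t\!\left[\bigl|g_{t,i}-[\nabla f(\theta_t)]_i\bigr|\right]\;\le\;\frac{G}{\epsilon}\sqrt{\mathbb{E}_t\!\left[\bigl(g_{t,i}-[\nabla f(\theta_t)]_i\bigr)^2\right]}\;\le\;\frac{G\sigma}{\epsilon\sqrt{t}},
\end{equation*}
where the middle step is Jensen's inequality (concavity of $\sqrt{\cdot}$) and the last step uses $\mathrm{Var}_t(g_{t,i})\le\sigma^2/b_t=\sigma^2/t$, which follows from the per-sample variance bound and the assumed batch size $b_t=t$. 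Summing the bounds on $(A)$ and $(B)$ gives the lemma.

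\textbf{Main obstacle.} The non-trivial point is that $\beta_{t,i}$ is a function of $g_{t,i}$ through the closed form in Eq.~\ref{eq:beta_sol}, so one cannot move $\mathbb{E}_t$ inside the fraction. The trick above sidesteps this entirely: it only uses two crude but oblivious facts about $\beta_{t,i}$ (the deterministic clip $\beta_{t,i}\in[\underline\beta,\bar\beta]$ and the pointwise lower bound $\sqrt{\beta_{t,i}v_{t-1,i}}+\epsilon\ge\epsilon$), which is precisely why the resulting convergence theorem is phrased as a worst-case analysis over $\beta_{t,i}$, as the authors note just before stating Theorem~\ref{thm:convergence}.
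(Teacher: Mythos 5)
Your proposal is correct and follows essentially the same route as the paper: the identical decomposition of the left-hand side into the mean term plus a zero-mean fluctuation term, with the mean term bounded via $\beta_{t,i}\le\bar\beta$ exactly as in the paper. The only difference is in the error term, where the paper splits the noise $g_{t,i}-[\nabla f(\theta_t)]_i$ into its positive and negative parts and cases on the sign of $[\nabla f(\theta_t)]_i$ before bounding the random denominator, whereas you simply take absolute values and use $\sqrt{\beta_{t,i}v_{t-1,i}}+\epsilon\ge\epsilon$; this is a mild simplification that lands on the identical final bound $G\sigma/(\epsilon\sqrt{t})$, since the paper also discards its sharper intermediate denominator $\sqrt{\underline{\beta}\,v_{t-1,i}}+\epsilon$ in its last step.
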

\begin{proof}
The LHS can be decomposed as
\begin{equation}\label{eq:lemma_decomp}
    \begin{split}
        \text{LHS} & = -\gfti \E_t\left[ \frac{\gfti }{\sqrt{\bti\vtmi} + \epsilon}\right] - \gfti \E_t\left[\frac{\gti - \gfti}{\sqrt{\bti \vtmi} + \epsilon}\right]\\
                   & \le - \frac{\gfti^2 }{\sqrt{\bar{\beta}\vtmi} + \epsilon} \underbrace{ - \gfti \E_t\left[\frac{\gti - \gfti}{\sqrt{\bti \vtmi} + \epsilon}\right]}_{T_3},
    \end{split}
\end{equation}
where the inequality comes from taking the upper bound of $\beta_{t,i}$, since the first term is non-positive.
Let $[h(x)]_{+}$ and $[h(x)]_{-}$ be the operators for taking the positive and negative values of function $h(x)$ respectively, i.e.,
\begin{equation}
    [h(x)]_{+} = 
    \begin{cases}
      h(x), & \text{if}\ h(x)>0 \\
      0, & \text{otherwise}
    \end{cases}, ~~
    [h(x)]_{-} = 
    \begin{cases}
      h(x), & \text{if}\ h(x)<0 \\
      0, & \text{otherwise}
    \end{cases}.
\end{equation}
It is obvious that $\E[[X]_+]\le \E[|X|]\le \sqrt{\E[X^2]}$, where the second inequality comes from Cauchy-Schwarz inequality.
Similarly, $\E[[X]_-]\ge -\E[|X|]\ge -\sqrt{\E[X^2]}$.
With this in mind, we have 
\begin{equation}
    0 \le \E_t\left[\left[\gti - \gfti \right]_+\right] \le \sqrt{\E_t\left[ \gti - \gfti \right]^2} \le \frac{\sigma}{\sqrt{t}}, 
\end{equation}
where the last inequality comes from applying Lemma 1 from~\cite{zaheer2018yogi} under the bounded gradient variance assumption, and the assumption that the batch size grows as $b_t=t$. 
Similarly, we have 
\begin{equation}
    -\frac{\sigma}{\sqrt{t}} \le \E_t\left[\left[\gti - \gfti \right]_-\right] \le 0.
\end{equation}
Now we will decompose and bound $T_3$ as
\begin{equation}
\begin{split}
    T_3 & = - \E_t\left[ \gfti \frac{[\gti - \gfti]_+}{\sqrt{\bti \vtmi} + \epsilon}\right] - \E_t\left[\gfti \frac{ [\gti - \gfti]_-}{\sqrt{\bti \vtmi} + \epsilon}\right] \\
        & \le \begin{cases}
              - \E_t\left[\gfti \frac{ [\gti - \gfti]_-}{\sqrt{\underline{\beta} \vtmi} + \epsilon}\right], & \text{if}\ \gfti>0 \\
              - \E_t\left[\gfti \frac{ [\gti - \gfti]_+}{\sqrt{\underline{\beta} \vtmi} + \epsilon}\right], & \text{otherwise}
            \end{cases} \\
        & \le  \frac{\sigma |\gfti|}{(\sqrt{\underline{\beta}\vtmi} + \epsilon)\sqrt{t}}, \\
        & \le \frac{\sigma G}{\epsilon\sqrt{t}}.
\end{split}
\end{equation}
Plugging this inequality back into Eq.~\ref{eq:lemma_decomp} and we will get the RHS.

\end{proof}

%%%%%%%%%%%%%%%%%%%%%%%%%%%%%%%%%%%%%%%%%%%%%%%%%%%%%%%%%%%%%%%%%%%%%%%%%%%%%%%%%%%%%%%%%%%%

%%%%%%%%%%%%%%%%%%%%%%%%%%%%%%%%%%%%%%%%%%%%%%%%%%%%%%%%%%%%%%%%%%%%%%%%%%%%%%%%%%%%%%%%%%%%
\begin{algorithm}[H]\small
	\caption{\lamadam}
	\label{alg:LaMadam}
	\begin{algorithmic}[1]
		\State {\bfseries Input:} Learning rate $\{\eta_t\}_{t=1}^T$, parameter $0 < \alpha < 1$, $0<\underline{\beta}<\bar{\beta}<1$, $\epsilon > 0$
		\State Set $\tilde{m}_{0}=\tilde{u}_0=\tilde{v}_{0}=w_0 = 0$
		\For{$t=1$ {\bfseries to} $T$}
		\State Draw samples $S_t$ from training set
        \State Compute $g_t = \frac{1}{|S_t|} \sum_{x_k \in \mathcal{S}_t}\nabla \ell(x_k; \theta_t)$
        \State $\tilde{\beta}_t={\arg\max}_\beta v_t(\beta)-u_t^2(\beta)$ \Comment{see Eq \ref{eq:beta_sol}}
        \State $\beta_t = \max(\underline{\beta}, \min(\bar{\beta}, \tilde{\beta}_t))$
        \State $\tilde{u}_t=\beta_t \tilde{u}_{t-1}+(1-\beta_t)g_t$
        \State $\tilde{v}_t=\beta_t \tilde{v}_{t-1}+(1-\beta_t)g^2_t$
        \State $w_t=\beta_t w_{t-1}+(1-\beta_t)$
        \State $\tilde{m}_{t} = \alpha \tilde{m}_{t-1} + (1 - \alpha) \frac{ g_{t}}{\sqrt{\tilde{v}_t/w_t}+\epsilon}$
		\State $\theta_{t} = \theta_{t-1} - \frac{\eta_t}{1-\alpha^t} \tilde{m}_t $
		
		\EndFor
	\end{algorithmic}
\end{algorithm}
%%%%%%%%%%%%%%%%%%%%%%%%%%%%%%%%%%%%%%%%%%%%%%%%%%%%%%%%%%%%%%%%%%%%%%%%%%%%%%%%%%%%%%%%%%%%

%%%%%%%%%%%%%%%%%%%%%%%%%%%%%%%%%%%%%%%%%%%%%%%%%%%%%%%%%%%%%%%%%%%%%%%%%%%%%%%%%%%%%%%%%%%%
\section{Practical notes of $\beta_t$}\label{sec:div}
\paragraph{Claims and arguments:}
\begin{enumerate}
\item \textit{For $t>1,~\text{since }0<\beta_t\le 1$, $w_t$ will monotonically increase from $(1-\beta_1)$ to 1.}

This is obvious since in every step, $w_{t}$ is an interpolation between $w_{t-1}$ and $1$, and $w_t \ge w_{t-1}$. We have also set $w_1=1-\beta_1$.

\item \textit{For any $g_t, u_{t-1}, v_{t-1}$ satisfying $v_{t-1}-u_{t-1}^2 > 0$ in Eq.~\ref{eq:beta_sol}, we have $ \beta_t \in [1/(1+w_{t-1}), 1/(1-w_{t-1})]$.}

Eq.~\ref{eq:beta_t} is monotonic in $R_t$ .Since $g_t$ can be any value, $R_t$ can be any value from 0 to $\infty$. If $R_t=0$, $\beta_t$ takes the largest value $1/(1-w_t)$. If $R_t \rightarrow \infty$, $\beta_t\rightarrow 1/(w_t+1)$.

\item \textit{As $t \rightarrow \infty$, $w_t\rightarrow 1$ and $\beta_t\in [0.5, \infty]$.}\\
Combining Claims 1 and 2 to get this result.

\item \textit{Adding a small coefficient $\delta>0$ to the denominator of Eq.~\ref{eq:beta_sol} has negligibale effect on the value of $\beta_t$ and does not violate the maximum variation objective (Eq.~\ref{eq:max-form}).}\\
Since $\delta$ is small, it has negligible effect on $\beta_t$ when division by zero does not happen. 
We only need to confirm adding $\delta$ will not affect the solution when division by zero happens.  
We can re-write the dividend of Eq.~\ref{eq:beta_sol} as
\begin{equation}\label{eq:zero}
    (w_{t-1}+1)(g_t-u_{t-1})^2+(1 - w_{t-1})(v_{t-1}-{u}_{t-1}^2).
\end{equation}
Since $\mathbb{E}[X^2]-\left(\mathbb{E}[X]\right)^2=\text{Var}[X]\ge 0$, we can conclude that ${v}_{t-1}-{u}^2_{t-1}\ge 0$.

When $1-\beta_1 \le w_{t-1}<1$, Eq.~\ref{eq:zero} can be 0 only when $g_t={u}_{t-1}$ and ${v}_{t-1}={u}^2_{t-1}$. 
In this special case, we can set $\beta_t$ to any value in $[0,1]$ without changing ${\sigma}_t^2$; we will always have ${v}_t=\tilde{v}_{t-1}/w_{t-1}=v_{t-1}, u_t=\tilde{u}_{t-1}/w_{t-1}=u_{t-1}$, and ${\sigma}_t^2=0$. 
Only $w_t=(w_{t-1}-1)\beta_t + 1$ is affected by $\beta_t$, which takes a larger value when $\beta_t$ is smaller.
The solution given by adding $\delta$ to the denominator is $\beta_t=0$, and the following clipping will set $\beta_t=\underline{\beta}$, resulting in the largest possible $w_t=(w_{t-1}-1)\underline{\beta}+1$. 
In the next step, if Eq.~\ref{eq:zero} is not zero, then we have $\beta_{t+1}=1/(w_t+1)$, and we know $g_{t+1}\neq u_{t}$.\footnote{Otherwise we will still have $g_{t+1}=u_{t}, g_{t+1}^2=u_{t}^2=v_t$ and Eq.~\ref{eq:zero} is 0.}
In this case, for $0.5\le \beta_{t+1}<1$, ${\sigma}_{t+1}^2$ increases as $\beta_{t+1}$ decreases, so setting $w_t$ to its maximum will achieve the maximum variance at the next step.
Otherwise if Eq.~\ref{eq:zero} is zero, doing this will not change ${\sigma}_{t+1}^2=0$.

When $w_{t-1}=1$, Eq.~\ref{eq:zero} is 0 if and only if $g_t={u}_{t-1}$. 
As a result, if $v_{t-1}=u_{t-1}^2$, we have the same conclusion as before. 
Otherwise, $\beta_t=({{v}_{t-1}-{u}_{t-1}^2})/{\delta}$ before clipping, and $\beta_t=\bar{\beta}$ after clipping.
Also, any $0<\beta_{t}<1$ will not change the value of ${u}_{t}=\beta_t u_{t-1}+(1-\beta_t) g_t=u_{t-1}$. 
Since $g_t^2=u_{t-1}^2<v_{t-1}$, to maximize ${\sigma}_t^2=v_t(\beta)-u_{t-1}^2$, we should set $\beta_t = \bar{\beta}$ so that $v_t(\beta)$ takes the maximum value, which is consistent with the solution after adding $\delta$ to the denominator. 
\end{enumerate}

%%%%%%%%%%%%%%%%%%%%%%%%%%%%%%%%%%%%%%%%%%%%%%%%%%%%%%%%%%%%%%%%%%%%%%%%%%%%%%%%%%%%%%%%%%%%
\section{\adabound~might fail on Transformers?}
\label{sec:adabound}
\begin{figure}
\vspace{-2em}
        \begin{center}
            \includegraphics[width=0.3\textwidth]{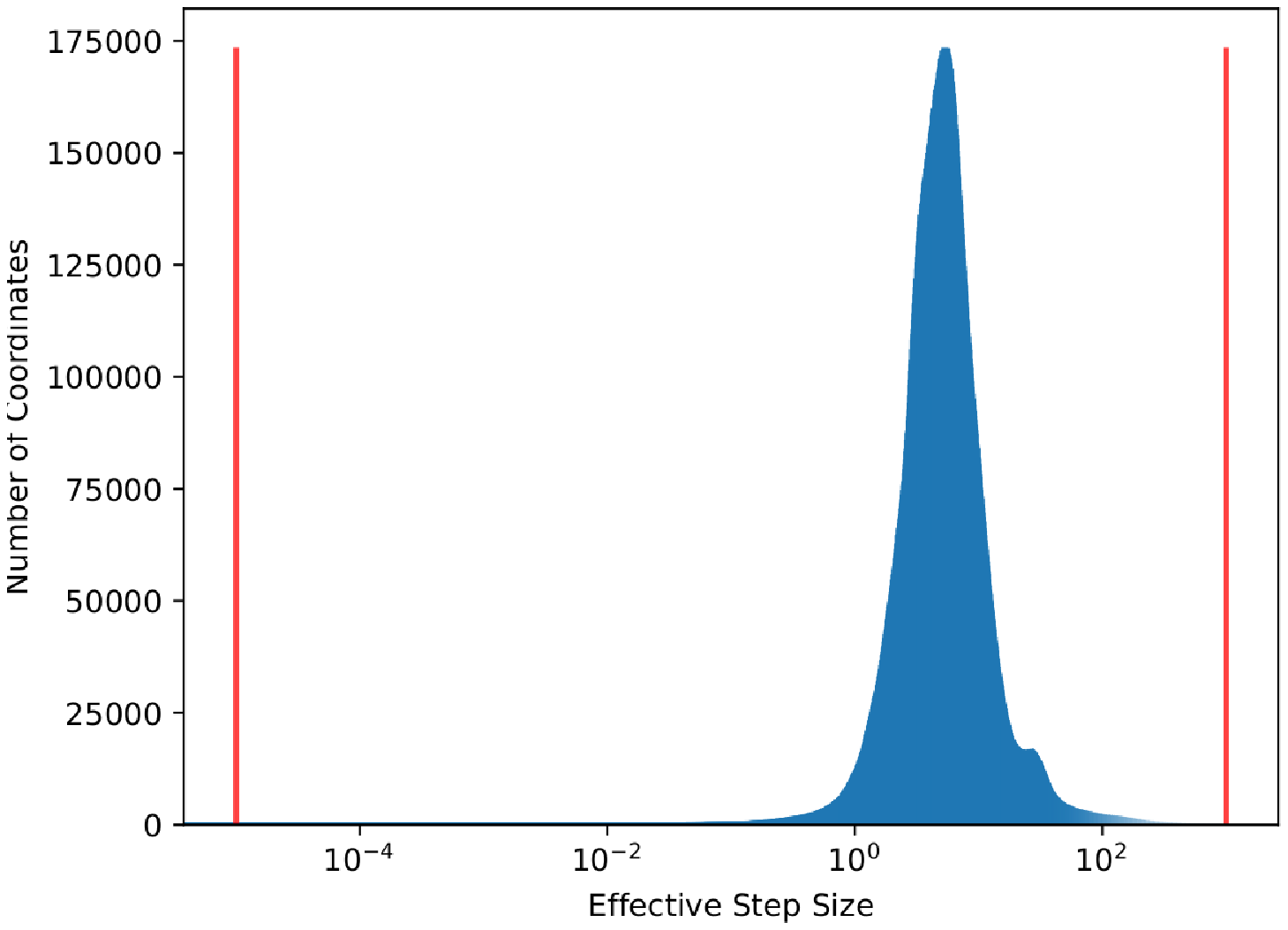}
            \includegraphics[width=0.3\textwidth]{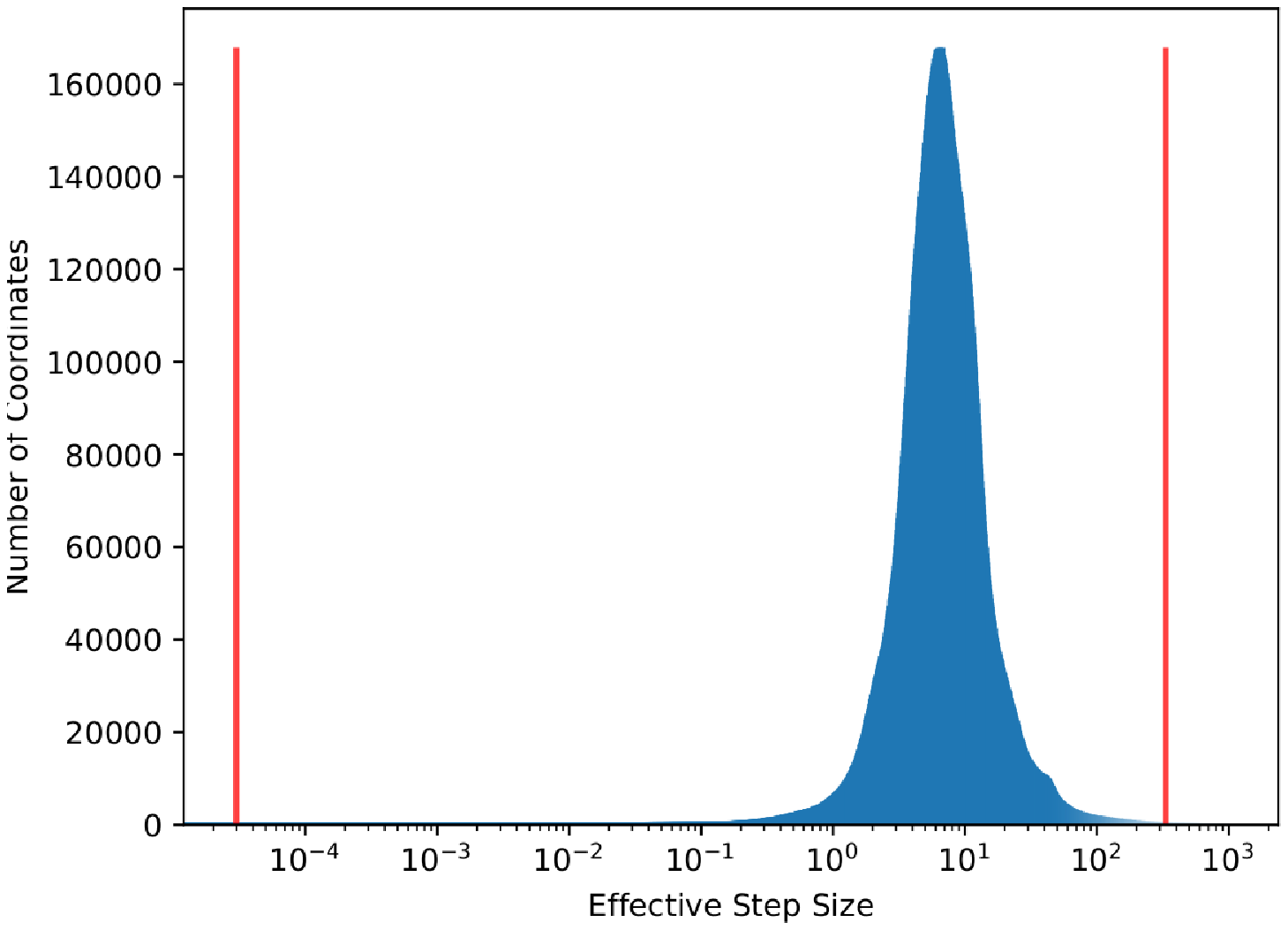}
            \includegraphics[width=0.3\textwidth]{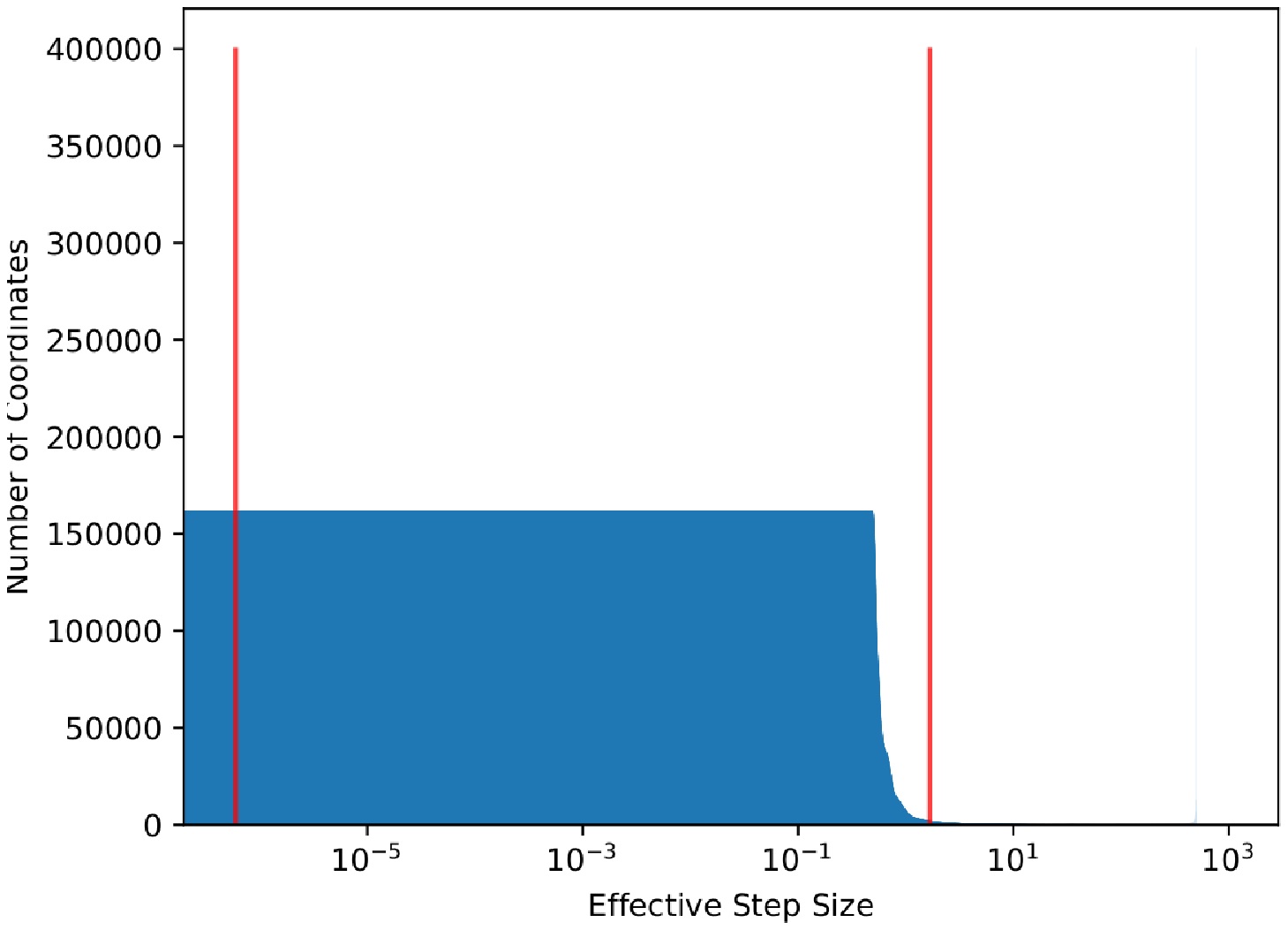}
            \includegraphics[width=0.3\textwidth]{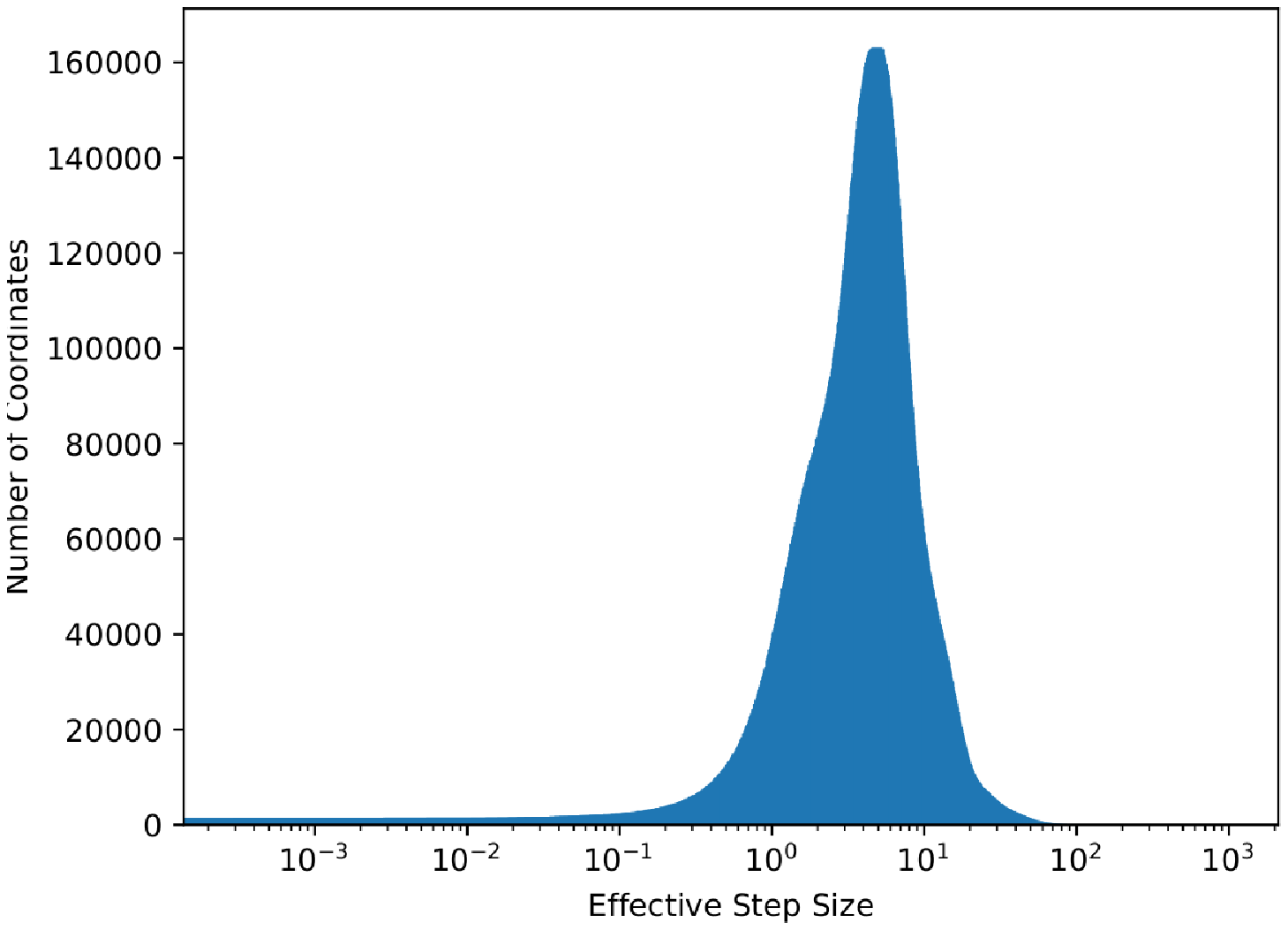}
            \includegraphics[width=0.3\textwidth]{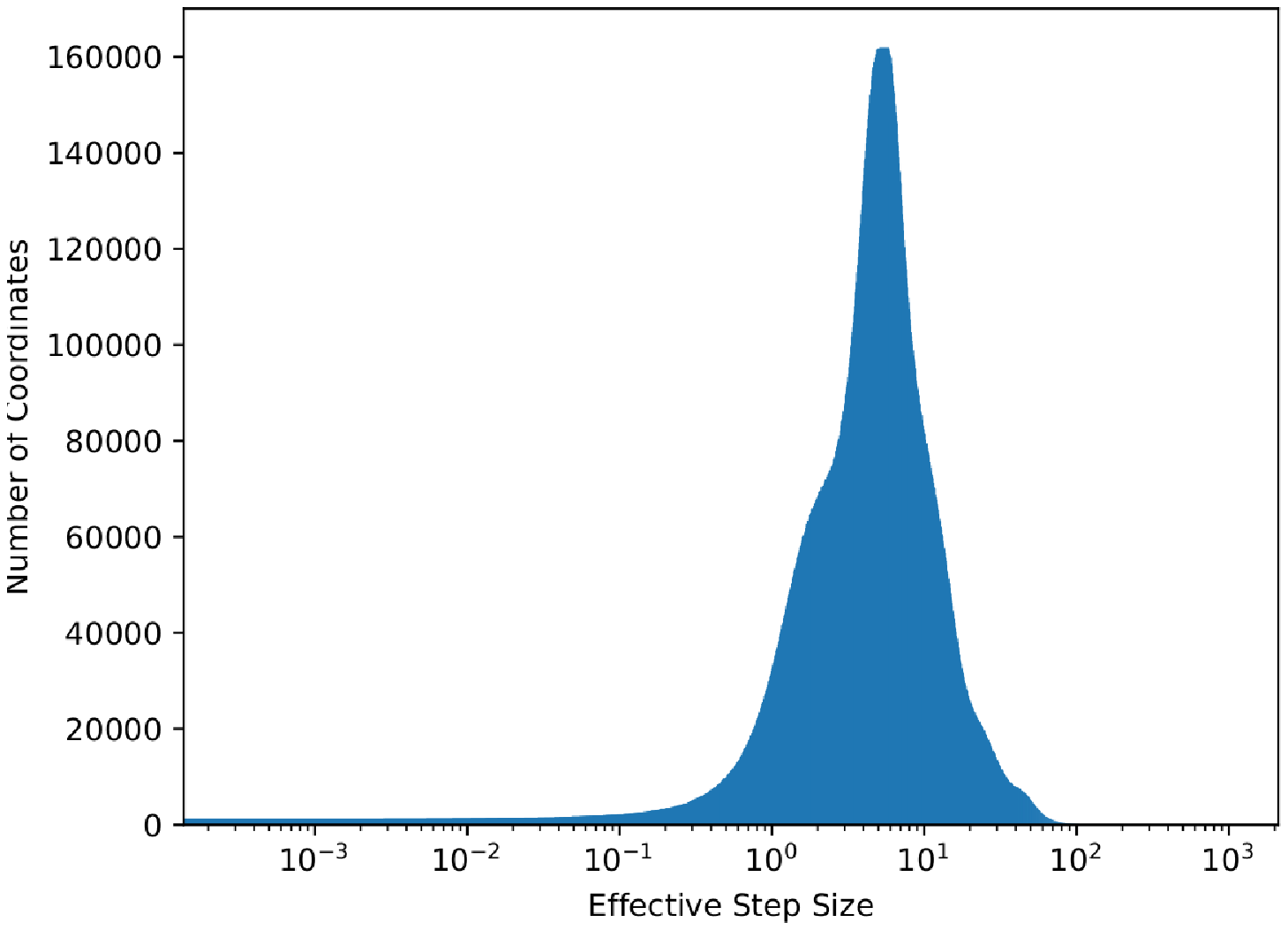}
            \includegraphics[width=0.3\textwidth]{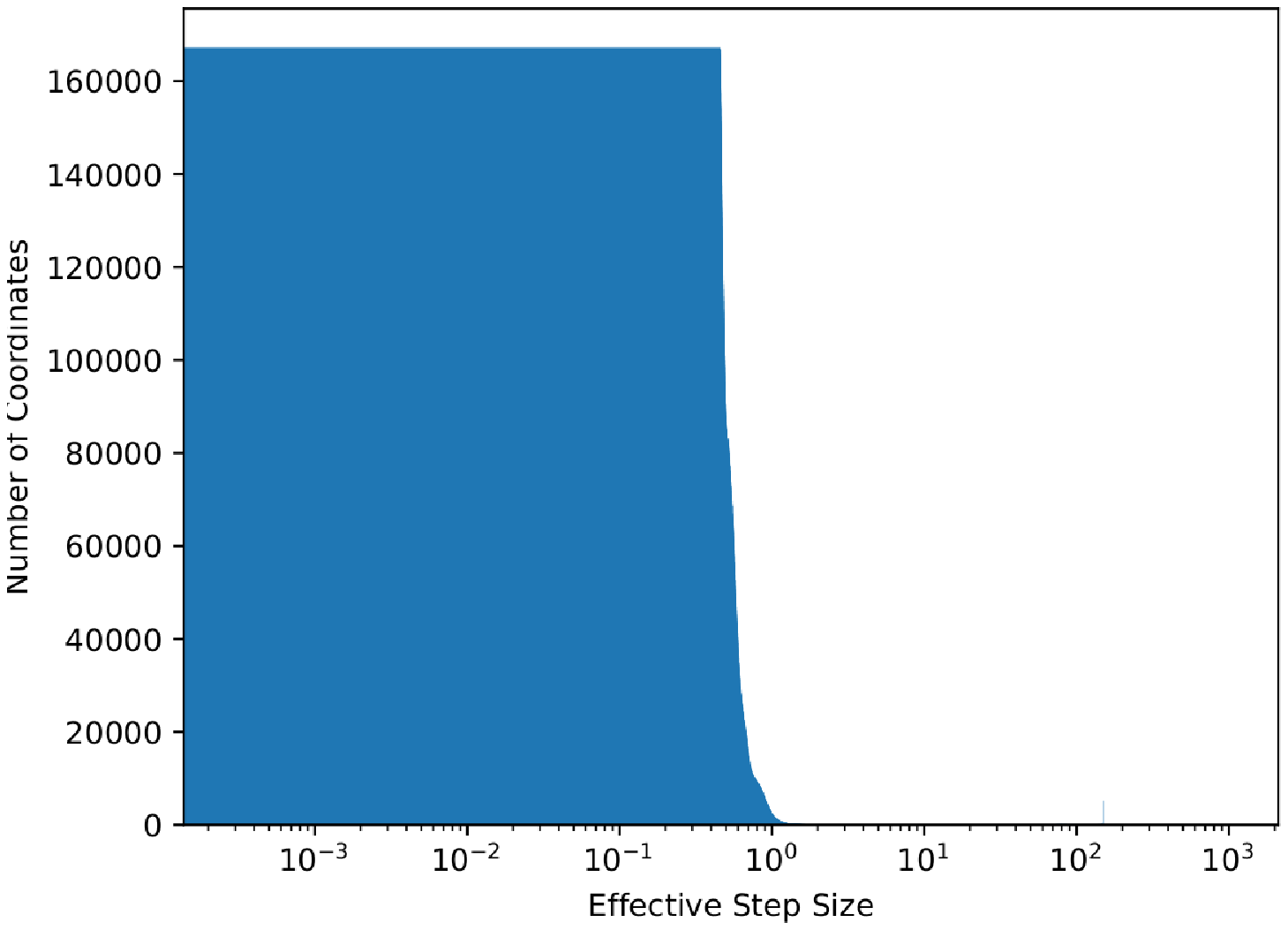}
        \end{center}
        \caption{\small Distribution of effective step size of AdaBound and MAdam at iteration 10000, 30000 and 60000 on IWSLT'14. Red lines indicate the clipping range of AdaBound. On the top/bottom are results of AdaBound/MAdam with learning rates 5e-4/1.25e-3. }
         \label{fig:adabound}
\end{figure}
Since SGD often performs much worse than Adam on transformers, and \adabound~transitions into SGD asymptotically, it is reasonable to believe that AdaBound would not converge well on transformers.
We did experiments on the IWSLT'14 dataset to evaluate \adabound~on Transformers.
AdaBound clips the effective step size to be within $0.1-\frac{0.1}{\gamma t+1}$ and $0.1-\frac{0.1}{\gamma t}$, and recommends setting $\gamma = 1-\beta_2=10^{-3}$. 
In practice, this setting only gives a $<24$ test BLEU on IWSLT'14. 
To explore the full potential of AdaBound, we tried $\gamma \in \{ 10^{-4}, 10^{-5}, 10^{-6}, 10^{-7}, 10^{-8} \}$, and found $\gamma=10^{-8}$ to give the best BLEU 35.99 (0.04).
However, as shown in Figure~\ref{fig:adabound}, AdaBound does not effectively clip most of the coordinates even in the last iteration with $\gamma=10^{-8}$, which means \adabound~essentially degraded into Adam, yet it gives better results than those effectively doing clipping. 
By comparison, the best result of MAdam and Adam with AMSGrad is 36.07(0.07) / 35.87 (0.05), respectively.
%%%%%%%%%%%%%%%%%%%%%%%%%%%%%%%%%%%%%%%%%%%%%%%%%%%%%%%%%%%%%%%%%%%%%%%%%%%%%%%%%%%%%%%%%%%%

%%%%%%%%%%%%%%%%%%%%%%%%%%%%%%%%%%%%%%%%%%%%%%%%%%%%%%%%%%%%%%%%%%%%%%%%%%%%%%%%%%%%%%%%%%%%
\section{Experimental Details on the Synthetic Finite-sample Experiment}
\label{sec:a_finite_samples}
Same as \cite{chen2018convergence}, we use constant learning rates $\eta$ in every step, and set $\alpha=0, \beta=0.9$ for \adam~and \amsgrad. 
For \madam, we set $\alpha=0, (\underline{\beta}, \bar{\beta})=(0.5,1)$. 
\adam~never converged for a variety of $\eta$ we tried within $[10^{-4}, 1]$, consistent with~\cite{chen2018convergence}.
Generally, a larger $\eta$ gives faster convergence for both \amsgrad~and \madam. 
For reproducibility, we repeat experiments 100 times with the same settings, and choose the $\eta$ for \amsgrad~and \madam~where the solution $|\theta^*|<0.1$ every time.
$\eta=1.2$ satisfies this requirement for \madam, but \amsgrad~only satisfied it 1\% of the times for $\eta=1.2$ and 65\% of the times for $\eta=0.9$. 
$\eta=0.8$ is the largest $\eta$ we find for \amsgrad~to achieve $100\%$ satisfaction. 
Therefore, we use $\eta=0.8$ for both \adam~and \amsgrad.

%%%%%%%%%%%%%%%%%%%%%%%%%%%%%%%%%%%%%%%%%%%%%%%%%%%%%%%%%%%%%%%%%%%%%%%%%%%%%%%%%%%%%%%
%%%%%%%%%%%%%%%%%%%%%%%%%%%%%%%%%%%%%%%%%%%%%%%%%%%%%%%%%%%%%%%%%%%%%%%%%%%%%%%%%%%%%%%
\section{Details and Additional Experimental Results on the Noisy Quadratic Model}\label{appendix:nqm}
\begin{figure}[h]
\centering
\includegraphics[width=.24\linewidth]{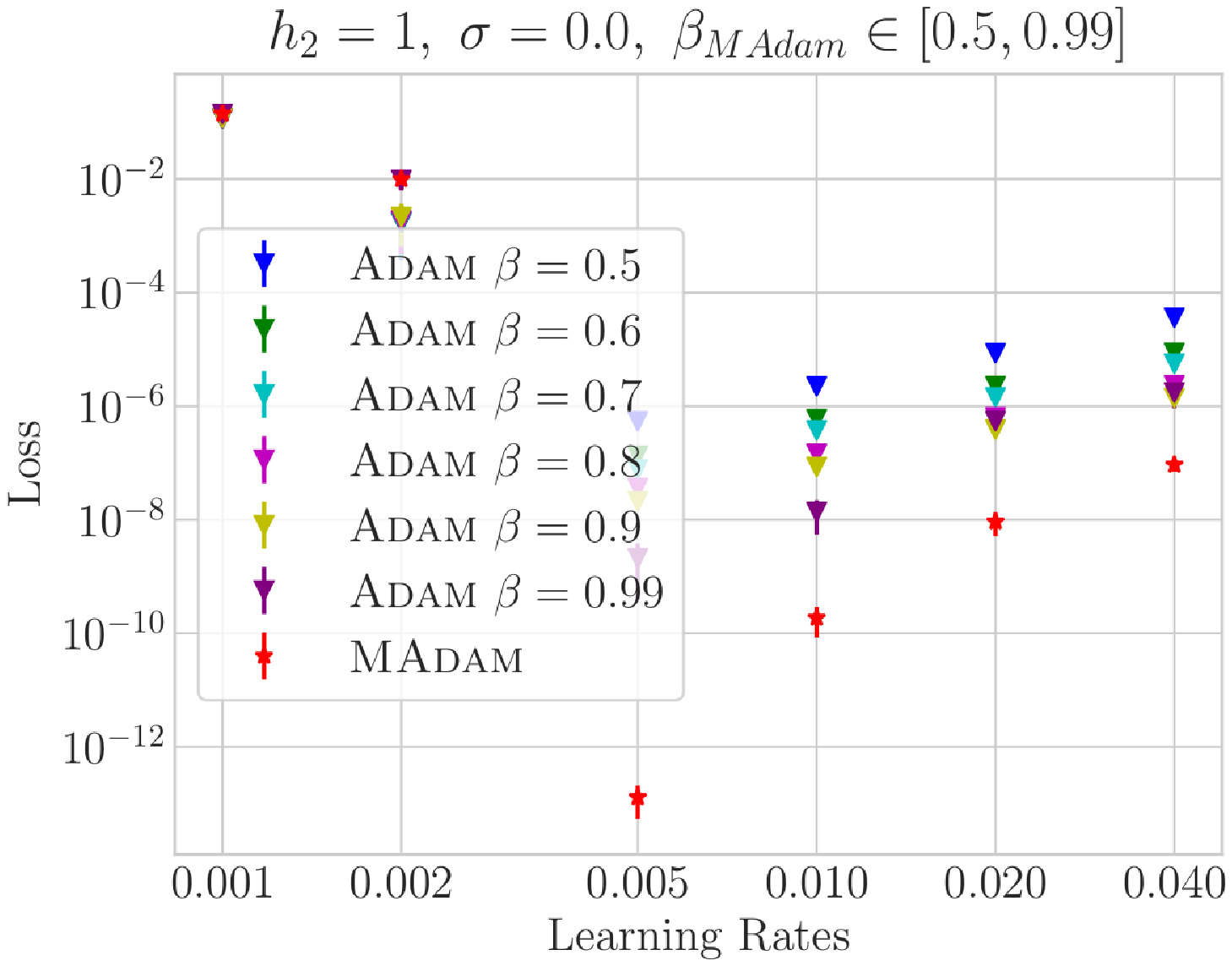}
\includegraphics[width=.24\linewidth]{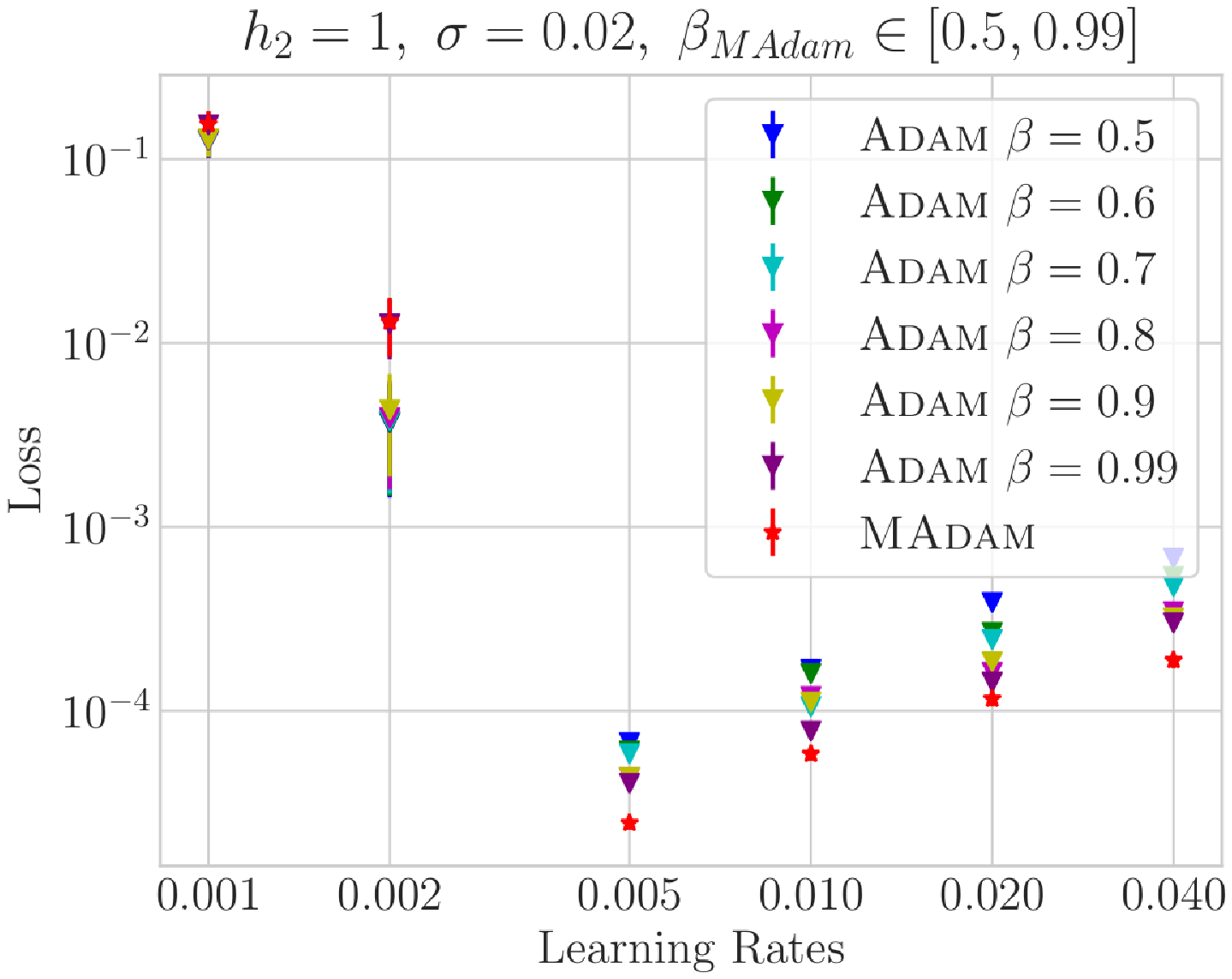}
\includegraphics[width=.24\linewidth]{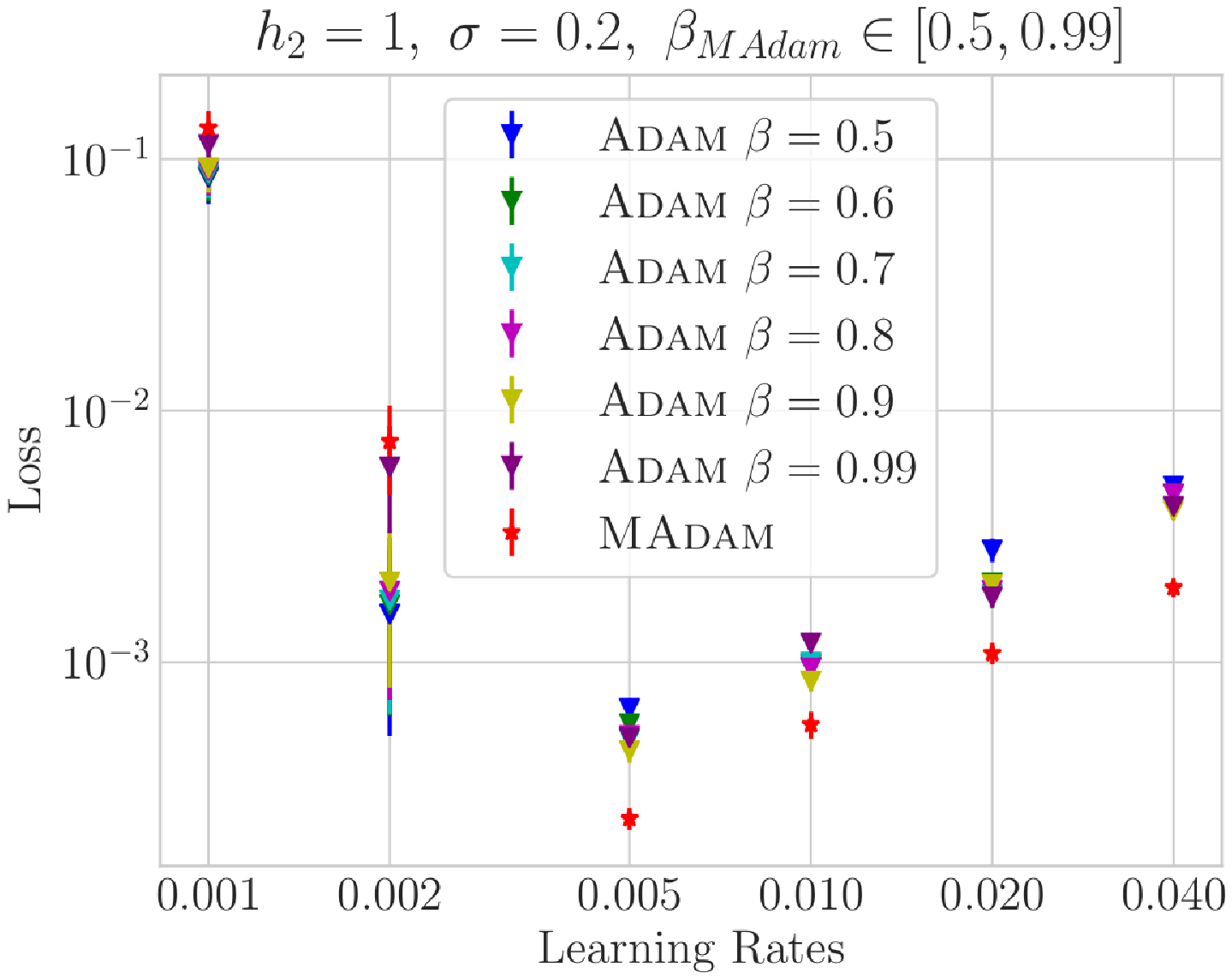}
\includegraphics[width=.24\linewidth]{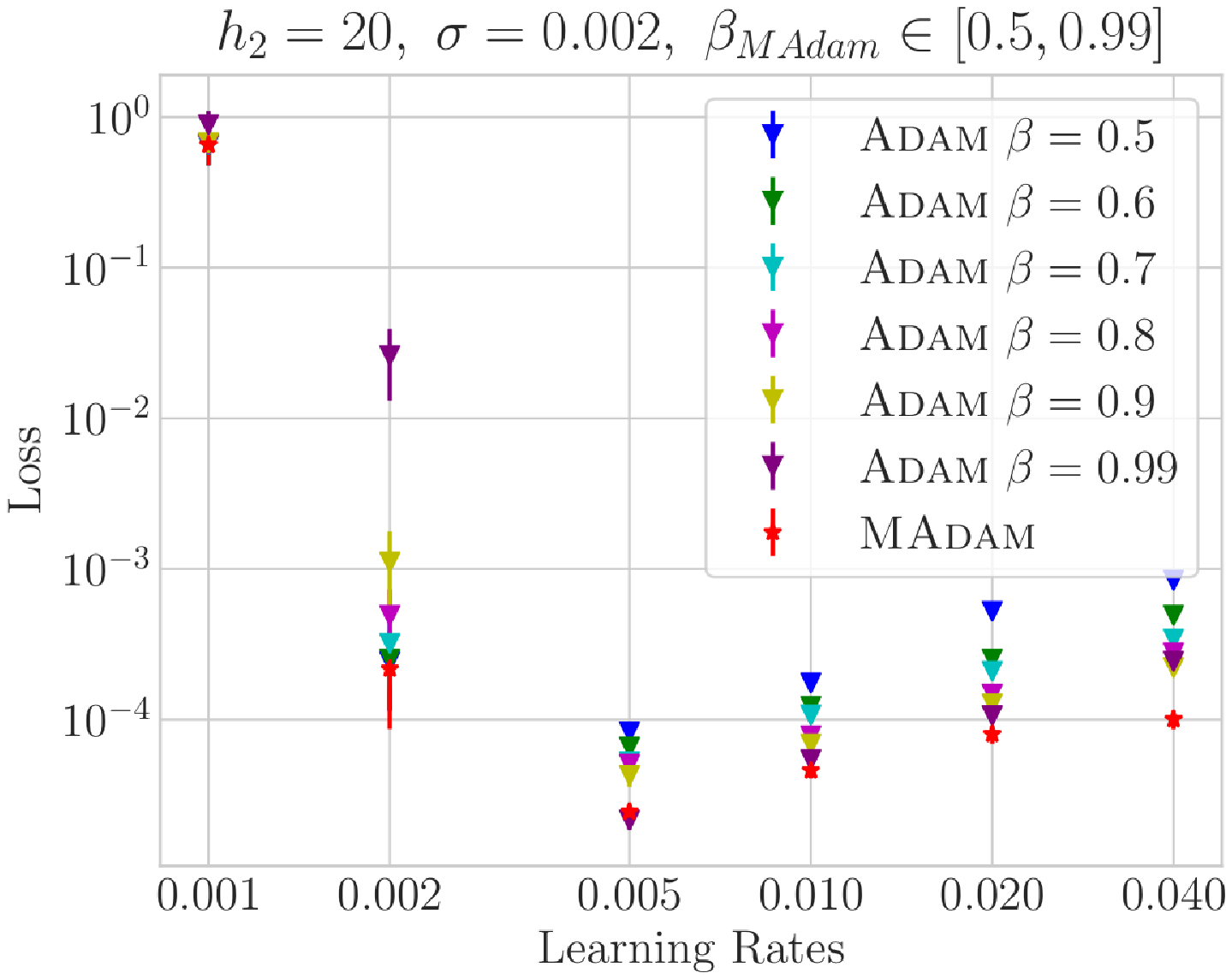}
\includegraphics[width=.24\linewidth]{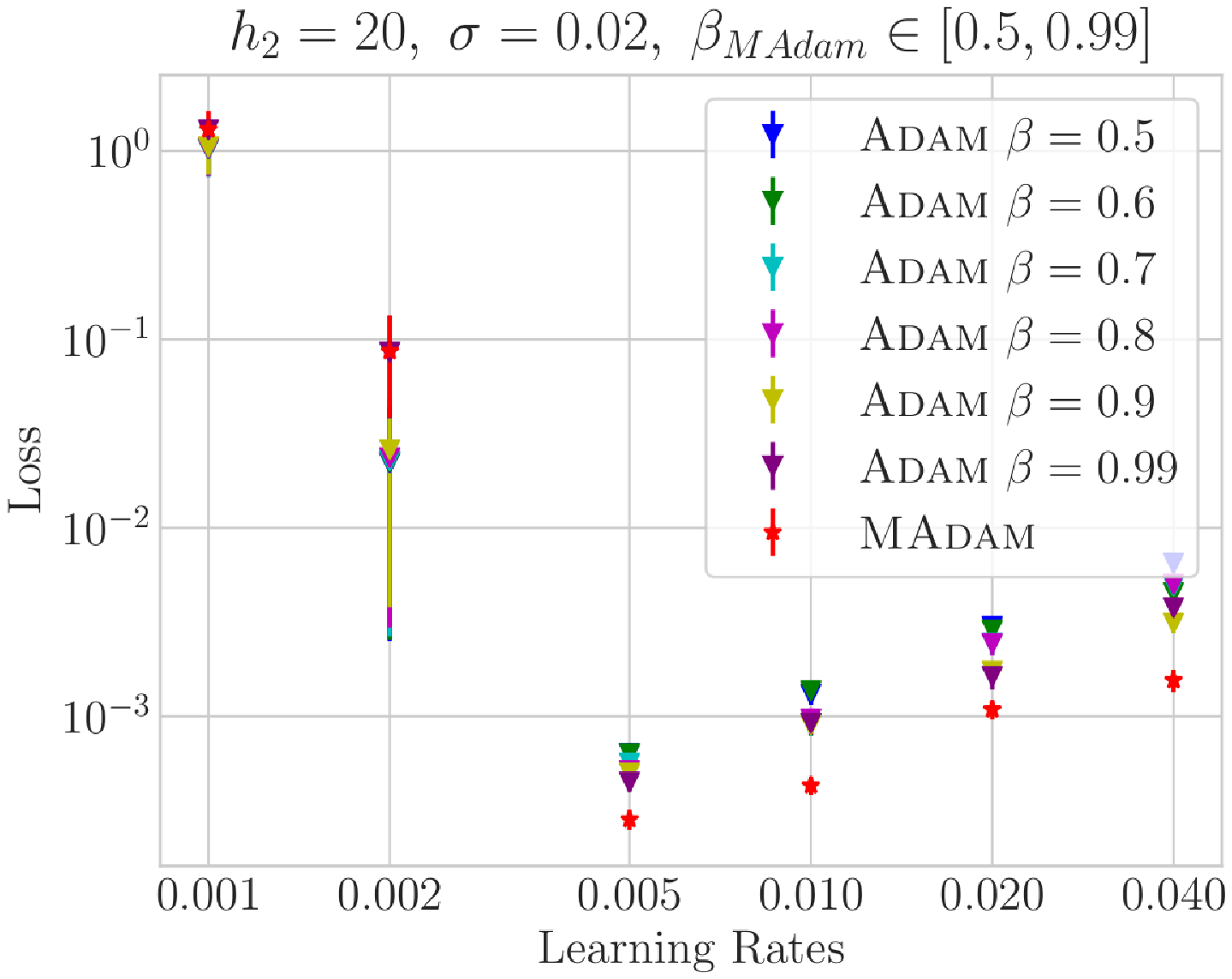}
\includegraphics[width=.24\linewidth]{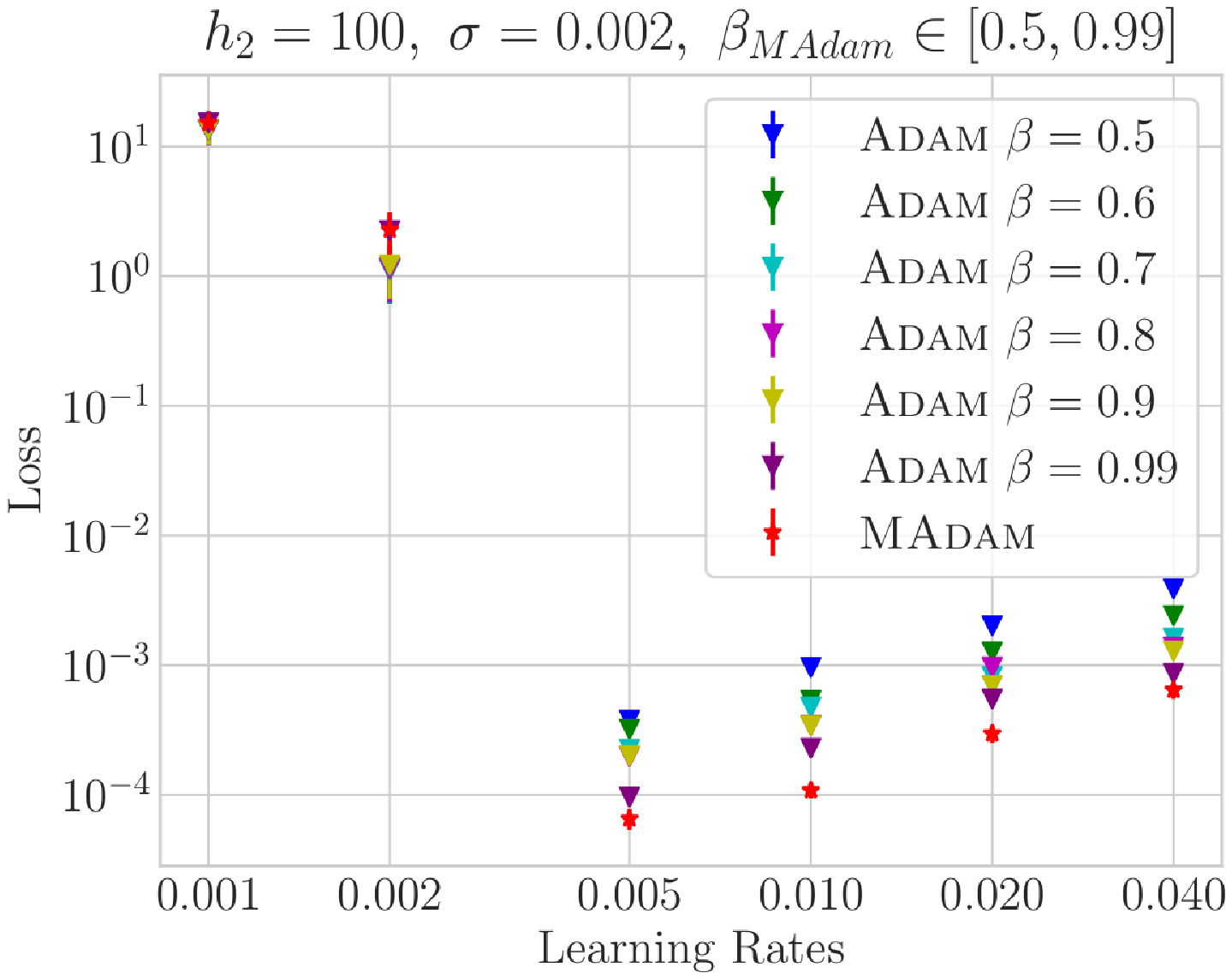}
\includegraphics[width=.24\linewidth]{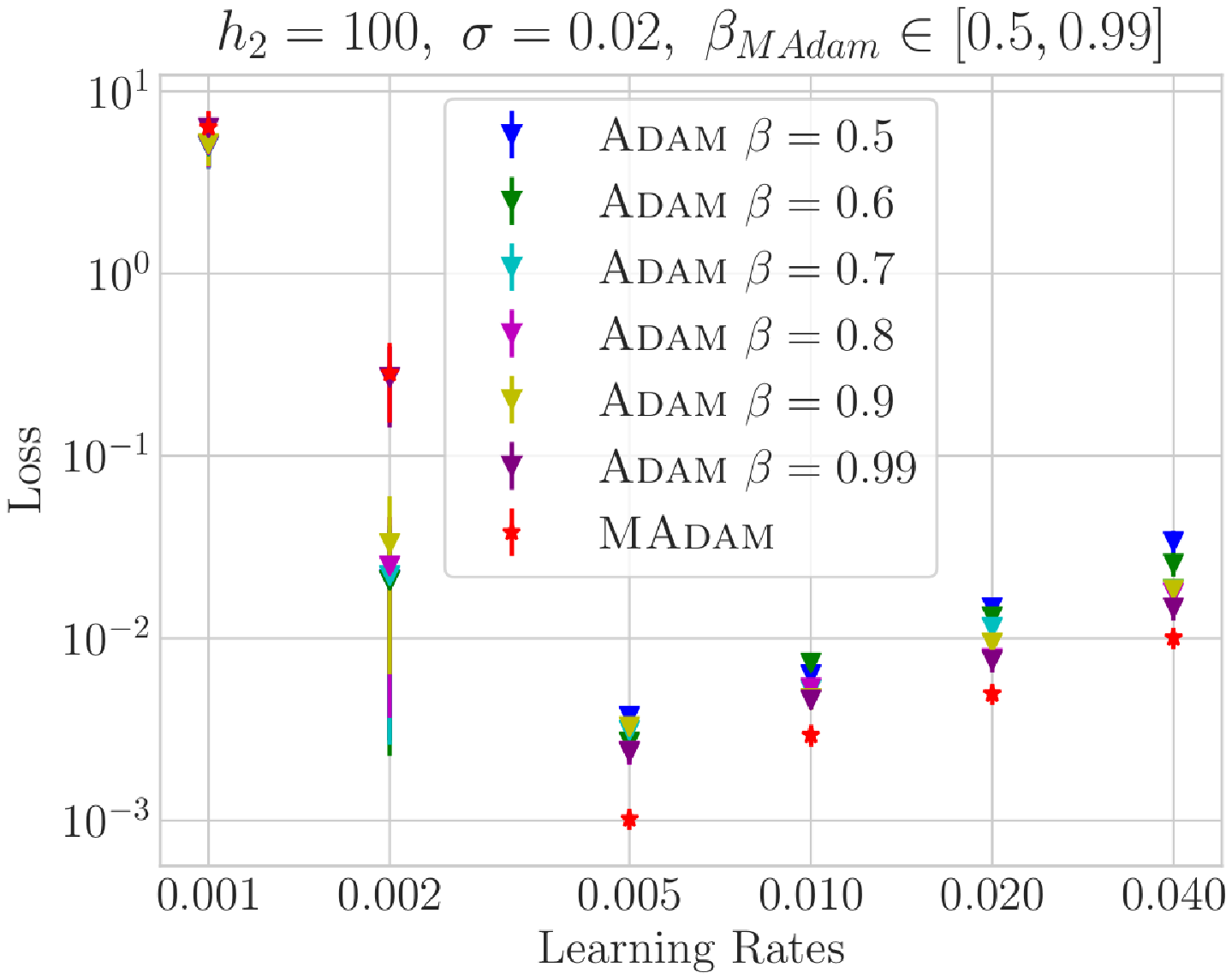}
\includegraphics[width=.24\linewidth]{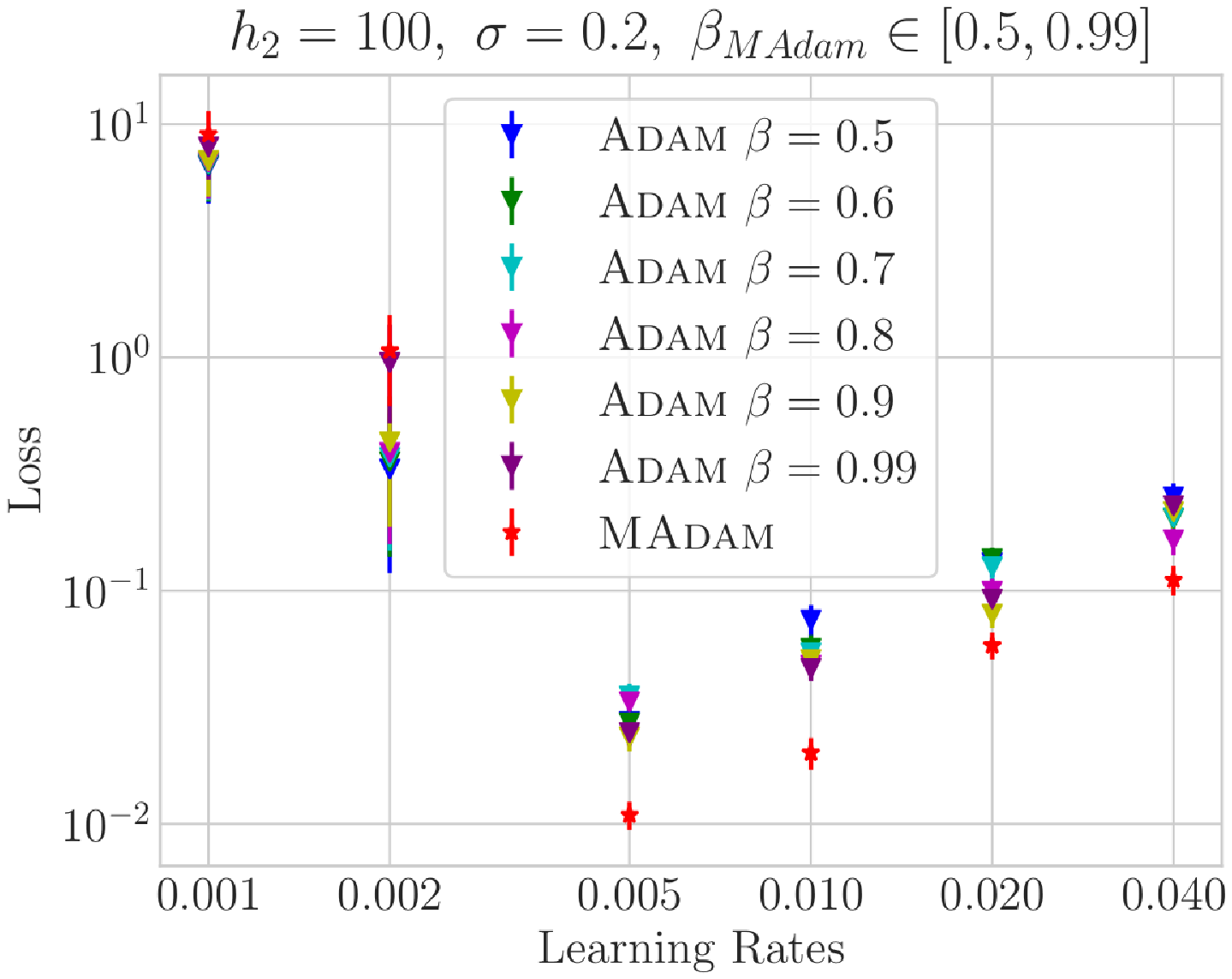}
\caption{\small More results on the Noisy Quadratic Model.}
\label{fig:nqm_appendix}
\end{figure}
\paragraph{Details of experimental settings} 
We set $\underline{\beta}=0.5, \bar{\beta}=0.99$ for \madam, and for fair comparison, we do a grid search for \adam~with $\beta\in [0.5, 0.6, 0.7, 0.8, 0.9, 0.99]$, and only report the results with the best $\beta$. 
We repeat the experiments 100 times under each setting, where we select a random initialization of $\theta\sim \gN(0, I)$ each time, and run \madam~and \adam~with different hyper-parameters from this random initialization.
Each run takes 1000 iterations by default.

\paragraph{Additional Results}
We give more results comparing \adam~and \madam~on the Noisy Quadratic Model. 
The results are shown in Figure~\ref{fig:nqm_appendix}.
Generally, the best result of \madam~has a more significant margin when $h_2$ and $\sigma$ are higher, i.e., the improvement is more significant when the problem is worse conditioned and the noise level is higher.
Note that for each trial, we start both algorithms from the same random initialization. 

\section{Details of Experiments on Image Classification}\label{appendix:image}

% \begin{figure}[t!]
% \centering
% \includegraphics[bb=0 0 434 330,width=.3\linewidth]{}
% \includegraphics[width=.3\linewidth]{}
% \includegraphics[width=.3\linewidth]{}
% \vspace{-2mm}
% \caption{\small Training loss, test accuracy and average step size on CIFAR10.}
% \label{fig:convergence}
% %\vspace{-2mm}
% \end{figure}
% \paragraph{Additional Experimental Results}
% In Figure \ref{fig:convergence}, we plot the training loss, test accuracy and average step size on CIFAR10. We see that  LAMADAM and MADAM produce slightly better curves than others. 

\paragraph{Model, larning rate schedules and data augmentations}
On CIFAR10 and CIFAR100, the ResNet18 comes from a public repository,\footnote{https://github.com/kuangliu/pytorch-cifar} which has a base width of 64 by default.
We use random cropping (4-pixel zero paddings on each side) and random horizontal flip as the data augmentations. 
Instead of using the multi-step schedule, we find the cosine learning rate schedule to yield better results for both SGD and adaptive methods. 
Therefore, we use the cosine learning rate schedule and set a final learning rate of 2e-6 in all cases.
On ImageNet, we use random resized crop and random horizontal flip for data augmentation.
For the multi-step learning rate schedule, multiply the learning rate by 0.1 every 30 epochs, and train a total of 90 epochs, with a batch size of 256. 

\paragraph{Hyperparameters of CIFAR10}
For each optimizer, we do a grid search over the learning rate and weight decay for the best hyperparameters. 
For \adam~and \laprop, we set $\beta=0.999$.  
For \madam~and \lamadam, we set $\underline{\beta}=0.5$ and $\bar{\beta}=0.999$ in all cases. 
Except for SGD, we tried learning rates from \{5e-4, 1e-3, 2e-3, 3e-3, 4e-3, 6e-3, 8e-3\} and weight decay from \{0.025, 0.05, 0.1, 0.2, 0.4, 0.8, 1 \}. 
The best learning rate and weight decay for \adam, \laprop, \madam~and \lamadam~are (3e-3, 0.2), (1e-3, 0.4), (8e-3, 0.05) and (6e-3, 0.05) respectively. 
As to SGD, we tried learning rates from \{3e-2, 5e-2, 1e-1, 2e-1, 3e-1\} and weight decays from \{1e-4, 3e-4, 5e-4, 1e-3, 2e-3\}, and the best result was achieved with learning rate 2e-1 and weight decay 3e-4.
% These hyperparameters that gave the best results are also the hyperparameters we used for plotting Figure~\ref{fig:convergence}. 

\paragraph{Hyperparameters for CIFAR100}
We use the same grid search configurations as for CIFAR10. 
The best learning rate and weight decay for \adam, \laprop, \madam~and \lamadam~are (2e-3, 0.4), (5e-4, 1), (4e-3, 0.2) and (3e-3, 0.2) respectively. For SGD, the best learning rate and weight decay are 3e-2 and 2e-3 respectively.

\paragraph{Hyperparameters for ImageNet}
Due to the heavy workload and the time limit, we were not able to accomplish 4 runs for each hyperparameter in ImageNet, so we report the best results for each optimizer in Table~\ref{tab:image}, except for the result of \adam~, which was copied from~\cite{liu2019radam} but uses the same hyperparameters except for the learning rate and weight decay. 
For \laprop, \madam~and \lamadam, we choose learning rates from \{1e-3, 2e-3, 3e-3, 4e-3, 5e-3, 6e-3, 8e-3\} and weight decay from \{0.003, 0.006, 0.01, 0.012, 0.02, 0.03\}, and found the best combinations for \laprop, \madam~and \lamadam~are (2e-3, 0.03), (5e-3, 0.012) and (6e-3, 0.012).
For SGD, we choose learning rate from \{0.05, 0.1, 0.2\} and weight decay from \{5e-5, 7e-5, 1e-4\}, and found the best combination to be (0.1, 7e-5).

\section{Additional Experimental Results and Details on Machine Translation}\label{appendix:nmt}
\begin{figure}[t!]
\centering
\includegraphics[width=.3\linewidth]{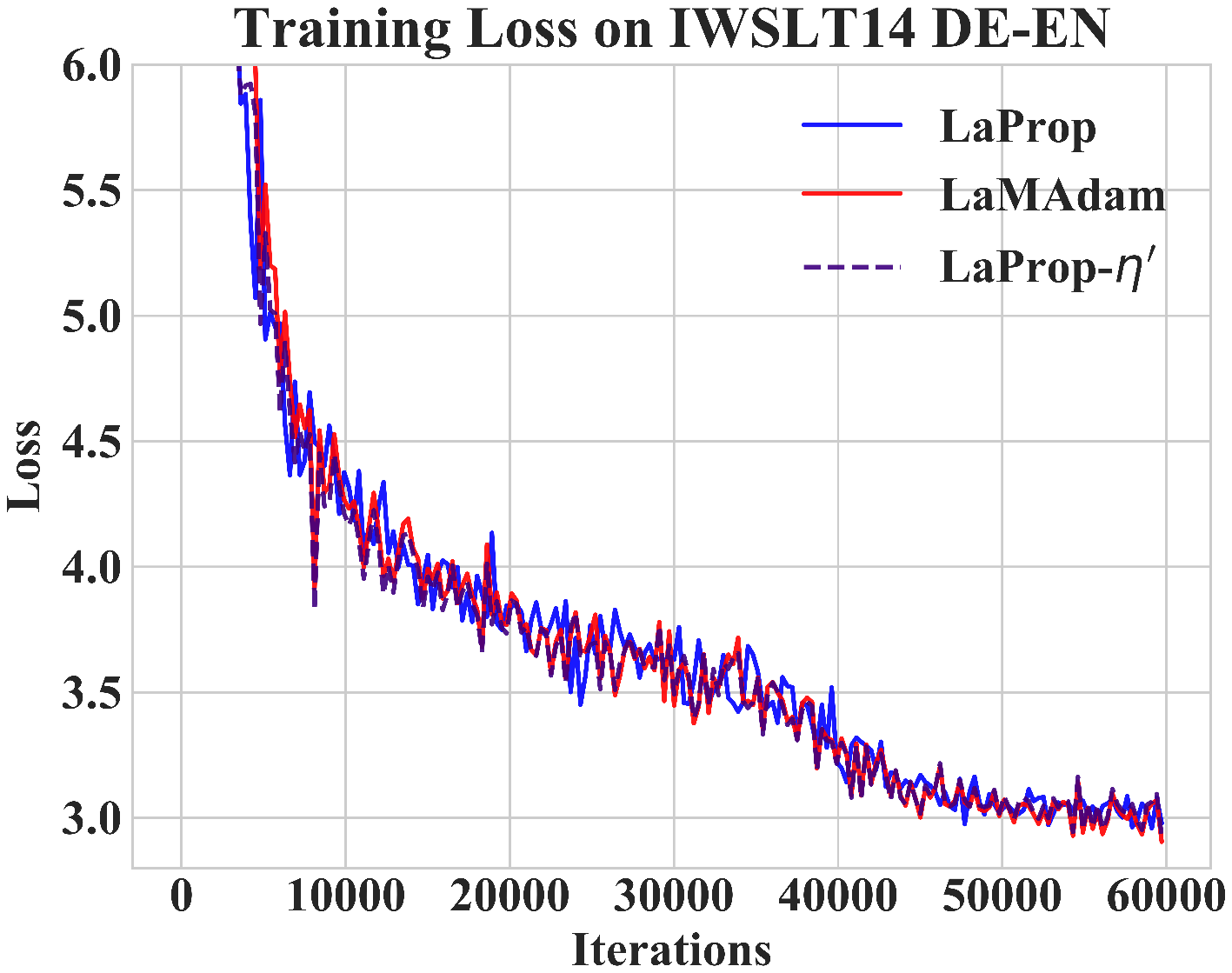}
\includegraphics[width=.3\linewidth]{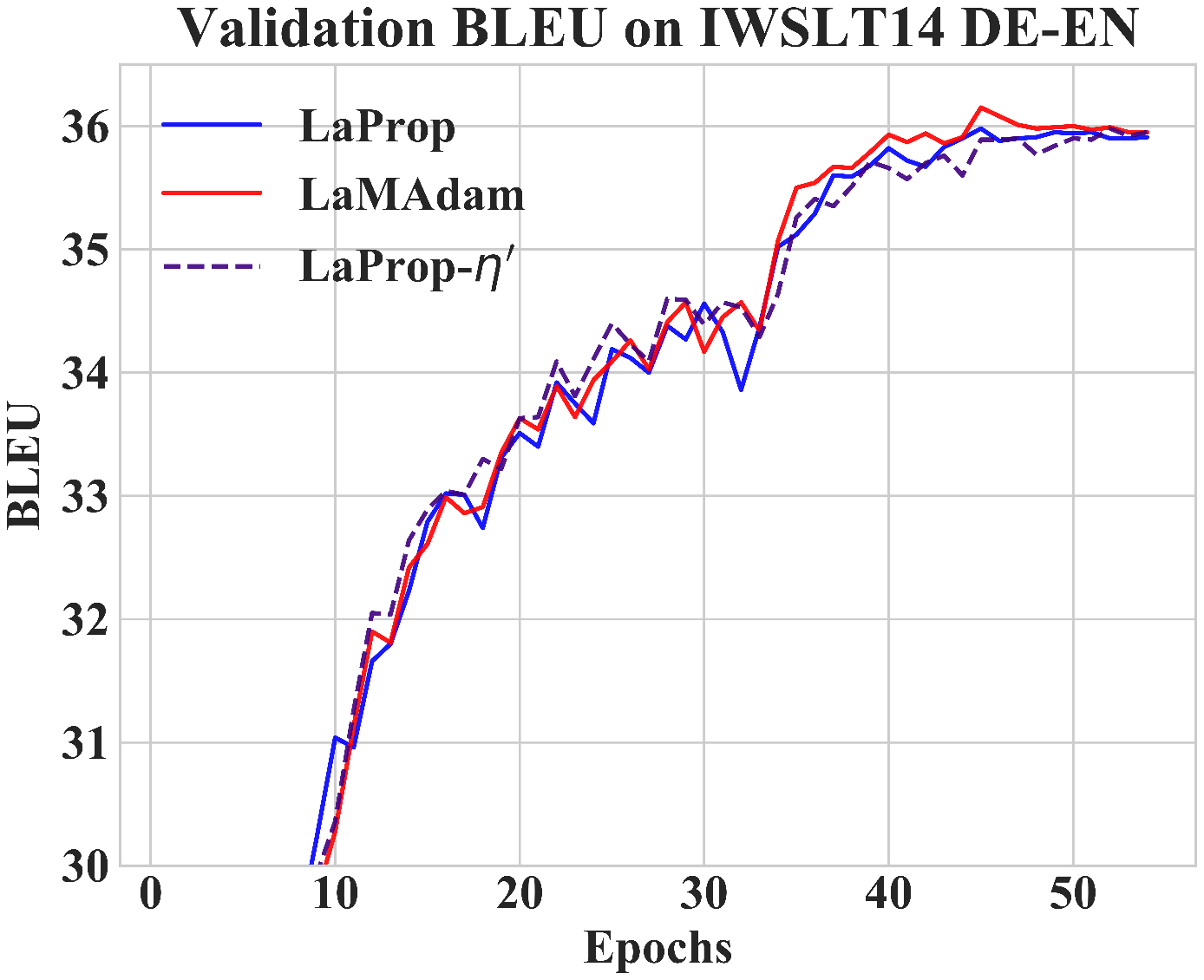}
\includegraphics[width=.3\linewidth]{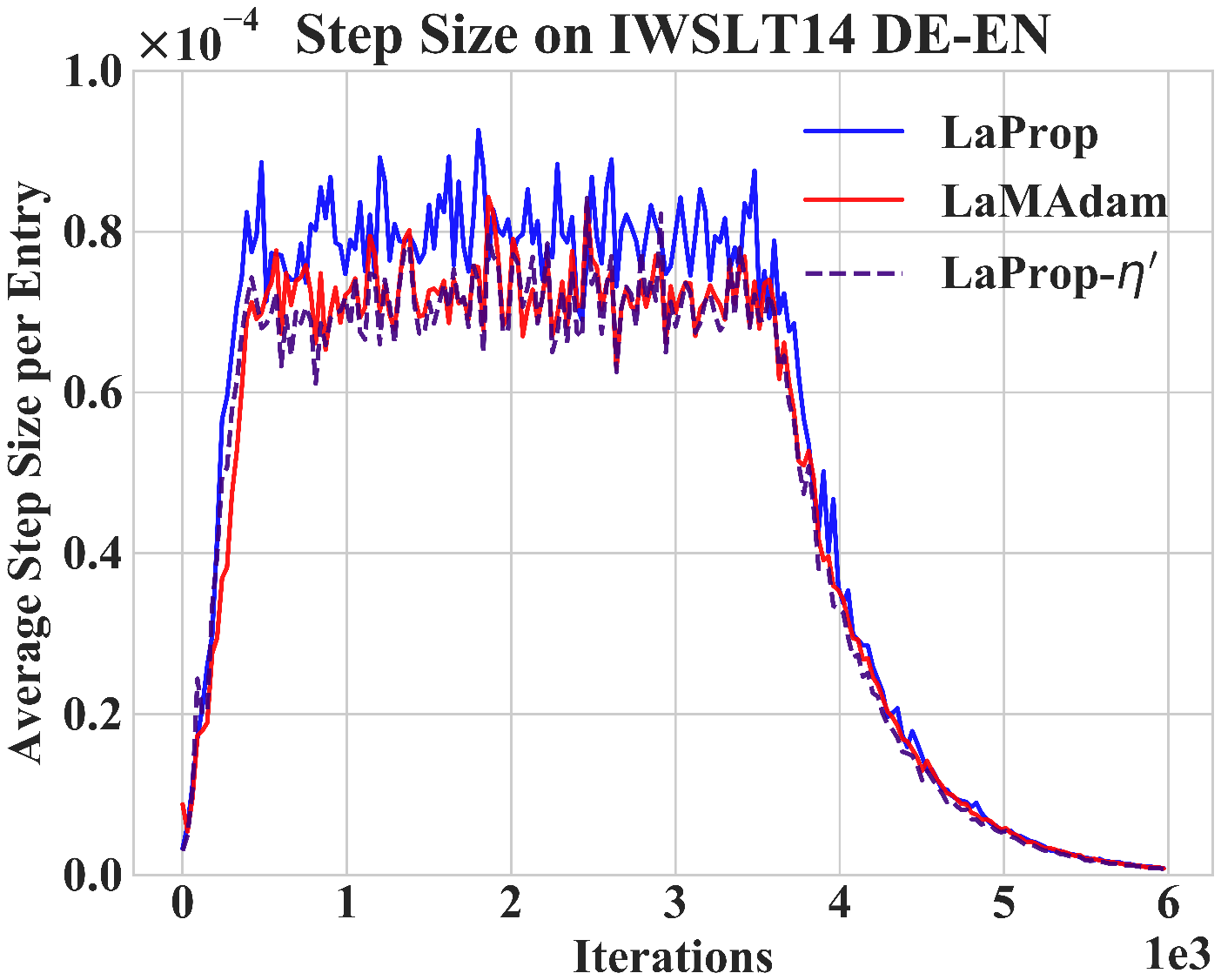}
\vspace{-1mm}
\caption{\small Training loss, validation BLEU and average step size on IWSLT'14 DE-EN, trained with $\eta$=5e-4, $\lambda$=1e-2, $\beta$=0.999 for \laprop~and $\eta$=1.5e-3, $\lambda$=1e-2, $\underline{\beta}$=0.5, $\bar{\beta}$=0.999 for \lamadam, and $\eta$=4.375e-4, $\lambda$=1e-2, $\beta$=0.999 for \laprop-$\eta'$.}
\label{fig:nmt_convergence}
\vspace{-3mm}
\end{figure}
\paragraph{Additional Experimental Results and analysis}
In Figure \ref{fig:nmt_convergence}, we plot the training loss, validation BLEU and average step size on IWSLT'14 DE-EN.
Although the average update size of \lamadam~is smaller even when using 3 times higher learning rate than \adam, \lamadam~shows slightly better convergence on the training set and better validation BLEU.
This may be explained by the heavy-tailed distribution of the gradient in the process of training transformers from scratch~\cite{zhang2019longtail}. Smaller step sizes mitigate the effect of extreme gradient values on the model's performance.
It is worth mentioning that \laprop~diverges using the large learning rate 1.5e-3. 
Further, we find \laprop~is unable to produce the same result as \lamadam~even when their update sizes are similar. 
\laprop~produces a similar step size curve as \lamadam~with learning rate 4.375e-4, but the performance is weaker than \lamadam.
\lamadam~uses the maximum variation rule to select the adaptive learning rate for each dimension, creating benefit that is not achievable by simply scaling the base learning rate $\eta$. 

\paragraph{Hyperparameters for IWSLT'14} 
The transformer we use has 512-dimensional word embeddings and 6 Transformer blocks with 4 attention heads and 1024 FFN dimensions for the encoder/decoder, which is refered to as \texttt{transformer\_iwslt\_de\_en} in fairseq.
We do a grid search for the learning rate and weight decay for both optimizers. 
We tried $\eta$ from \{2.5e-4, 5e-4, 1e-3, 1.5e-3, 2e-3\}, and weight decay from \{0.0001, 0.001, 0.01, 0.1\}.
The best combinations for \laprop~and \lamadam~are (5e-4, 0.01) and (1.5e-3, 0.01). 
To demonstrate the full potential of adaptive methods under constant learning rates, we use the tri-stage learning rate schedule~\cite{park2019tristage}, linearly increase the learning rate from $0.01\eta$ to the full learning rate $\eta$ in 4k iterations, hold it at $\eta$ for 32k iterations, and exponentially decay it to $0.01\eta$ in 24k iterations. 
For \laprop, we tried $\beta$ from \{0.98, 0.99, 0.997, 0.999\}. 
We found 0.999 to work the best and used it for all the grid search experiments. 
For \lamadam, we set $\underline{\beta}=0.5$, $\bar{\beta}=0.999$.
For other hyperparameters, we use the default setting in the fairseq example, which sets dropout probability to 0.3, uses label smoothed cross entropy loss with a smoothing coefficient 0.1, and shares the input and output token embedding parameters.

\paragraph{Hyperparameters for WMT'16} 
The Transformer we use has 1024-dimensional word embeddings, 6 transformer blocks with 16 attention heads and 4096 FFN dimensions for the encoder/decoder, and is refered to as \texttt{transformer\_vaswani\_wmt\_en\_de\_big} in fairseq. 
The default implementation from fairseq did not use weight decay, so we also ignore weight decay in all experiments. 
The learning rate schedule takes the first 4k steps to linearly increase the learning rate to its maximum value.
For \laprop, we found $\beta=0.98$ to give the best results, and we set $\underline{\beta}=0.95, \bar{\beta}=0.98$ in all experiments.
This takes around 8 hours on 16 V100 GPUs each run.
For grid search, we tried $\eta$ from \{5e-4, 1e-3, 1.5e-3, 2e-3\}, and found 1e-3 and 1.5e-3 to work the best for \laprop~and \lamadam~respectively.
Other hyperparameters are the defaults of the corresponding fairseq example, which uses a dropout probability of 0.3, the label smoothed cross entropy loss with a smoothing coefficient 0.1, and shares all embedding parameters. 

\section{Additional Details of Experiments on the GLUE benchmark}\label{appendix:glue}

The GLUE benchmark is a collection of 9 natural language understanding tasks, 
namely Corpus of Linguistic Acceptability (CoLA; \cite{warstadt2018neural}), Stanford Sentiment Treebank (SST; \cite{socher2013recursive}), Microsoft
Research Paraphrase Corpus (MRPC; \cite{dolan2005automatically}), Semantic Textual Similarity Benchmark (STS; \cite{agirre2007semantic}), Quora Question Pairs (QQP; \cite{qqp2016url}), Multi-Genre NLI (MNLI; \cite{williams2018broad}), Question NLI (QNLI; \cite{rajpurkar2016squad}), Recognizing Textual Entailment (RTE; \cite{dagan2006pascal}; \cite{bar2006second}; \cite{giampiccolo2007third}; \cite{bentivogli2009fifth}) and Winograd NLI (WNLI; \cite{levesque2011winograd}).

It is reported in~\cite{liu2019roberta} that \adam~is sensitive to the choice of $\epsilon$ on GLUE. 
Following their settings, we set $\epsilon=1e-6$ for both \adam~and \madam. 
For \laprop~and \lamadam, however, we always set $\epsilon=1e-15$, like all other experiments in this paper, which is consistent with the observation in~\cite{ziyin2020laprop} that \laprop~is robust to the choice of $\epsilon$.
We set $\beta=0.98$ for \adam~and \laprop, and $\underline{\beta}=0.5, \bar{\beta}=0.98$ for \laprop~and \lamadam.
All other hyperparameters are set to the same as the example in fairseq.\footnote{https://github.com/pytorch/fairseq/blob/master/examples/roberta/README.glue.md}
For each task, we do a grid search over the learning rate and weight decay, which are chosen from \{5e-6, 1e-5, 2e-5, 4e-5, 5e-5, 6e-5\} and \{0.025, 0.05, 0.1, 0.2\} respectively.
We list the best combinations for \adam, \madam, \laprop~and \lamadam~on each task as below:
\begin{itemize}
    \item[\textbf{MNLI}:] (1e-5, 0.1), (1e-5, 0.1), (4e-5, 0.025), (4e-5, 0.025).
     \item[\textbf{QQP}:] (1e-5, 0.1), (1e-5, 0.1), (4e-5, 0.025), (4e-5, 0.025).
    \item[\textbf{QNLI}:] (1e-5, 0.1), (1e-5, 0.1), (4e-5, 0.05), (4e-5, 0.05).
    \item[\textbf{SST-2}:] (1e-5, 0.1), (1e-5, 0.1), (4e-5, 0.1), (4e-5, 0.1).
    \item[\textbf{RTE}:] (2e-5, 0.1), (2e-5, 0.1), (6e-5, 0.1), (6e-5, 0.1).
    \item[\textbf{MRPC}:] (1e-5, 0.1), (1e-5, 0.1), (6e-5, 0.1), (6e-5, 0.1).
    \item[\textbf{STS-B}:] (2e-5, 0.1), (2e-5, 0.1), (4e-5, 0.5), (4e-5, 0.5).
    \item[\textbf{CoLA}:] (2e-5, 0.1), (2e-5, 0.1), (6e-5, 0.5), (6e-5, 0.5).
\end{itemize}

\end{document}